\documentclass{article} %
\usepackage{iclr2022_conference,times}

\usepackage{times}
\usepackage{epsfig}
\usepackage{graphicx}
\usepackage{amsmath}
\usepackage{amssymb}
\usepackage{bm}
\usepackage{amsthm}
\usepackage{algorithm}
\usepackage{algorithmicx}
\usepackage{algpseudocode}
\usepackage[dvipsnames]{xcolor}
\usepackage{caption}
\usepackage{subcaption}
\usepackage[export]{adjustbox}
\usepackage{float}
\usepackage{stfloats}
\usepackage{mathtools}
\usepackage{tabularx}
\usepackage{booktabs}
\usepackage{float}
\usepackage{placeins}
\usepackage{thm-restate}
\usepackage{thmtools}
\usepackage{wrapfig}

\usepackage[breaklinks=true,bookmarks=false]{hyperref}
\definecolor{citecolor}{HTML}{21618C}%
\definecolor{urlcolor}{HTML}{1D8348}
\definecolor{linkcolor}{HTML}{1D8348}
\hypersetup{
    colorlinks=true,
    linkcolor=linkcolor,
    filecolor=magenta,      
    urlcolor=urlcolor,
    citecolor=citecolor,
    }
    
\usepackage[capitalise]{cleveref}
\usepackage{multirow}

\usepackage{etoolbox}
\newcommand{\algorithmicdoinparallel}{\textbf{do in parallel}}
\makeatletter
\AtBeginEnvironment{algorithmic}{%
  \newcommand{\FORP}[2][default]{\ALC@it\algorithmicfor\ #2\ %
    \algorithmicdoinparallel\ALC@com{#1}\begin{ALC@for}}
}
\makeatother

\newcommand{\myparagraph}[1]{\vspace{0pt}\paragraph{#1}}

\newcommand{\mbb}[1]{\mathbb{#1}}
\newcommand{\ud}{\mathrm{d}}

\newcommand{\mcal}{\mathcal}
\newcommand{\norm}[1]{\left\lVert#1\right\rVert}

\usepackage{mdframed}

\newcommand{\be}{\begin{equation}}
\newcommand{\ee}{\end{equation}}
\definecolor{Gray}{gray}{0.85}
\definecolor{LightCyan}{rgb}{0.88,1,1}

\makeatletter
\usepackage{xspace} 
\def\@onedot{\ifx\@let@token.\else.\null\fi\xspace}
\DeclareRobustCommand\onedot{\futurelet\@let@token\@onedot}

\definecolor{blue1}{RGB}{0,128,255}
\definecolor{blue3}{RGB}{0,0,128}
\definecolor{darkpastelgreen}{rgb}{0.01, 0.75, 0.24}
\definecolor{cerulean}{rgb}{0.0, 0.48, 0.65}

\def\eg{\emph{e.g}\onedot}

\def\ie{\emph{i.e}\onedot}

\def\vs{\emph{vs}\onedot}

\def\eqref#1{equation~\ref{#1}}
\def\Eqref#1{Equation~\ref{#1}}

\def\1{\bm{1}}

\def\rvw{{\mathbf{w}}}
\def\rvx{{\mathbf{x}}}
\def\rvy{{\mathbf{y}}}
\def\rvz{{\mathbf{z}}}

\def\vtheta{{\bm{\theta}}}

\def\vs{{\bm{s}}}

\DeclareMathAlphabet{\mathsfit}{\encodingdefault}{\sfdefault}{m}{sl}
\SetMathAlphabet{\mathsfit}{bold}{\encodingdefault}{\sfdefault}{bx}{n}

\def\gN{{\mathcal{N}}}

\def\gX{{\mathcal{X}}}

\newcommand{\x}{\ensuremath{\mathbf{x}}\xspace}

\newcommand{\z}{\ensuremath{\mathbf{z}}\xspace}

\newcommand{\modelfull}{Stochastic Differential Editing\xspace}
\newcommand{\model}{SDEdit\xspace}
\iclrfinalcopy
\begin{document}

\title{SDEdit: Guided Image Synthesis and Editing with Stochastic Differential Equations}

\author{Chenlin Meng$^1$ \qquad Yutong He$^1$  \qquad Yang Song$^1$ \qquad Jiaming Song$^1$ \\ \textbf{Jiajun Wu$^1$ \qquad  Jun-Yan Zhu$^2$ \qquad Stefano Ermon$^1$} \\
$^1$Stanford University \qquad $^2$Carnegie Mellon University}

\maketitle

\begin{abstract}
Guided image synthesis 
enables everyday users to create and edit photo-realistic images with minimum effort. 
The key challenge is balancing \emph{faithfulness} to the user inputs (\eg, hand-drawn colored strokes) and \emph{realism} of the synthesized images. 
Existing GAN-based methods attempt to achieve such balance using either conditional GANs %
or
GAN inversions,  %
which are challenging and often require additional training data or loss functions for individual applications. To address these issues, 
we introduce a new image synthesis and editing method, Stochastic Differential Editing (SDEdit), based on a 
diffusion model generative prior, which synthesizes realistic images by iteratively denoising through a stochastic differential equation (SDE). 
Given an input image with user guide in a form of manipulating RGB pixels,
SDEdit first adds noise to the input,
then subsequently denoises the resulting image through the SDE prior to increase its realism. 
 SDEdit does not require task-specific training or inversions and can naturally achieve the balance between realism and faithfulness. SDEdit outperforms state-of-the-art GAN-based methods by up to  $98.09\%$ on realism and 
 $91.72\%$ on overall satisfaction
scores, according to a human perception study, on multiple tasks, including stroke-based image synthesis and editing as well as image compositing.

\end{abstract}
\begin{figure}[H]
    \centering
    \includegraphics[width=\linewidth]{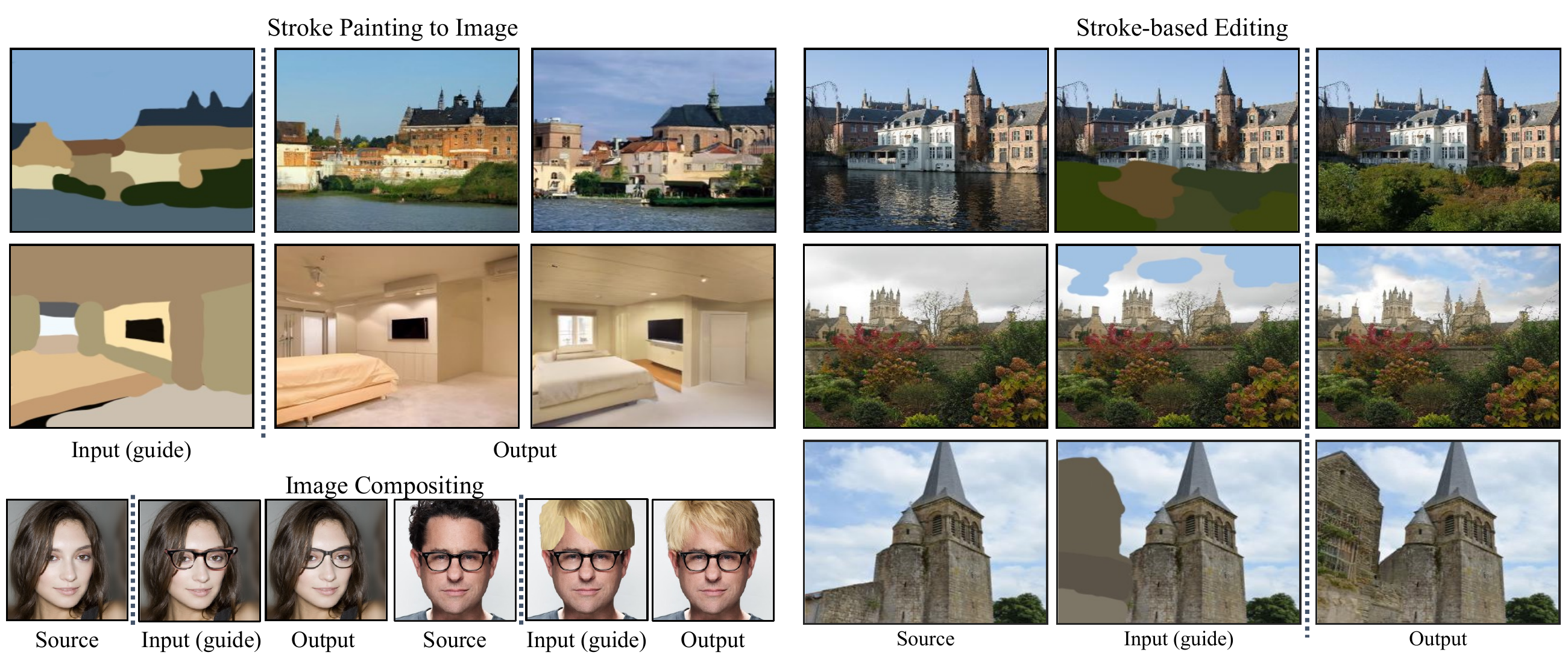}
    \captionof{figure}{\modelfull (\model) is a {\textbf{unified}} image synthesis and editing framework based on stochastic differential equations. \model allows stroke painting to image, image compositing, and stroke-based editing
    {\textbf{without}} task-specific model training and loss functions.
    }
    \label{fig:teaser}
\end{figure}
\section{Introduction}

Modern generative models can create photo-realistic images from random noise~\citep{karras2019style,song2021scorebased}, serving as an important tool for visual content creation. Of particular interest is guided image synthesis and editing, where a user specifies a general guide (such as coarse colored strokes) and the generative model learns to fill in the details (see \cref{fig:teaser}). 
There are two natural desiderata for guided image synthesis: the synthesized image should appear \textit{realistic} as well as be \textit{faithful} to the user-guided input, thus enabling people with or without artistic expertise to produce photo-realistic images from different levels of details.

Existing methods often attempt to achieve such balance via two approaches.
The first category leverages conditional GANs~\citep{isola2017image,zhu2017unpaired}, which learn a direct mapping from original images to edited ones.
Unfortunately, for each new editing task, these methods require data collection and model re-training, both of which could be expensive and time-consuming.  
The second category leverages GAN inversions~\citep{zhu2016generative,brock2017neural,abdal2019image2stylegan,gu2020image,wu2021stylespace,abdal2020image2stylegan++}, where a pre-trained GAN is used to invert an input image to a latent representation, which is subsequently modified
to generate the edited image. This procedure involves manually designing loss functions and optimization procedures for different image editing tasks. Besides, it may sometimes fail to find a latent code that faithfully represents the input~\citep{bau2019seeing}.

To balance \emph{realism} and \emph{faithfulness} while avoiding the previously mentioned challenges,
we introduce SDEdit, a guided image synthesis and editing framework leveraging generative stochastic differential equations \citep[SDEs;][]{song2021scorebased}. Similar to the closely related  diffusion models~\citep{sohl2015deep,ho2020denoising}, SDE-based generative models smoothly convert an initial Gaussian noise vector to a realistic image sample through iterative denoising, and have achieved unconditional image synthesis performance comparable to or better than that of GANs~\citep{dhariwal2021diffusion}. 
The key intuition of SDEdit is to ``hijack'' the generative process of SDE-based generative models, as illustrated in \cref{fig:sde_stroke}. Given an input image with user 
guidance input,
such as a stroke painting or an image with stroke edits, we can add a suitable amount of noise to smooth out undesirable artifacts and distortions (\eg, unnatural details at stroke pixels), while still preserving the overall structure of the input user guide. We then initialize the SDE with this noisy input, and progressively remove the noise to obtain a denoised result that is both realistic and faithful to the user guidance input (see \cref{fig:sde_stroke}).

Unlike conditional GANs, SDEdit does not require collecting training images or user annotations for each new task; %
unlike GAN inversions, SDEdit does not require the design of additional training or task-specific loss functions. 
SDEdit only uses  a single pretrained SDE-based generative model trained on unlabeled data: given a user guide in a form of manipulating RGB pixels,
SDEdit adds Gaussian noise to the guide and then run the reverse SDE to synthesize images.
SDEdit naturally finds a trade-off between realism and faithfulness: when we add more Gaussian noise and run the SDE for longer, the synthesized images are more realistic but less faithful. We can use this observation to find the right balance between realism and faithfulness.

We demonstrate SDEdit on three applications: stroke-based image synthesis, stroke-based image editing, and image compositing. We show that SDEdit can produce \emph{realistic} and \emph{faithful} images from guides with various levels of fidelity. On stroke-based image synthesis experiments, \model outperforms
state-of-the-art GAN-based approaches by up to $98.09\%$ on realism score and $91.72\%$ on overall satisfaction score (measuring both realism and faithfulness) according to human judgements. On image compositing experiments, \model achieves a better faithfulness score and outperforms the baselines by up to $83.73\%$ on overall satisfaction score in user studies. Our code and models will be available upon publication.

\section{Background: Image Synthesis with Stochastic Differential Equations (SDEs)}
Stochastic differential equations (SDEs) generalize ordinary differential equations (ODEs) by injecting random noise into the dynamics.
The solution of an SDE is a time-varying random variable (\ie, stochastic process), which we denote as $\rvx(t) \in \mathbb{R}^d$, where $t \in [0, 1]$ indexes time. 
In image synthesis~\citep{song2021scorebased}, we suppose that $\rvx(0) \sim p_{0}=p_{\text{data}}$ represents a sample from the data distribution and that a forward SDE produces $\rvx(t)$ for $t \in (0, 1]$ via a Gaussian diffusion. 
Given $\rvx(0)$, $\rvx(t)$ is distributed as a Gaussian distribution:
\begin{align}
     \rvx(t) = \alpha(t) \rvx(0) + \sigma(t) \rvz, \quad \rvz \sim \gN(\bm{0}, \bm{I}), \label{eq:rvxt}
\end{align}
where $\sigma(t): [0, 1] \to [0, \infty)$ is a scalar
function that describes the magnitude of the noise $\rvz$, and $\alpha(t): [0, 1] \to [0, 1]$ is a scalar
function that denotes the magnitude of the data $\rvx(0)$. 
The probability density function of $\rvx(t)$ is denoted as $p_t$.

Two types of SDE are usually considered: the Variance Exploding SDE (VE-SDE) has $\alpha(t) = 1$ for all $t$ and $\sigma(1)$ being a large constant so that $p_{1}$ is close to $\gN(\bf{0}, \sigma^2(1)\bf{I})$;
whereas the Variance Preserving (VP) SDE satisfies $\alpha^2(t) + \sigma^2(t) = 1$ for all $t$ with $\alpha(t)\to0$ as $t\to 1$ so that $p_{1}$ equals to $\gN(\bf{0}, \bf{I})$.
Both VE and VP SDE transform the data distribution to random Gaussian noise as $t$ goes from $0$ to $1$.
For brevity, we discuss the details based on VE-SDE for the remainder of the main text, and discuss the VP-SDE procedure in \cref{app:sdedit}. Though possessing slightly different forms and performing differently depending on the image domain, they share the same mathematical intuition.

\paragraph{Image synthesis with VE-SDE.}
Under these definitions, we can pose the image synthesis problem as gradually removing noise from a noisy observation $\rvx(t)$ to recover $\rvx(0)$. This can be performed via a reverse SDE~\citep{anderson1982reverse,song2021scorebased} that travels from $t = 1$ to $t = 0$, based on the knowledge about the noise-perturbed score function $\nabla_\rvx \log p_t(\rvx)$. For example, the sampling procedure for VE-SDE is defined by the following (reverse) SDE:
\begin{align}
    \mathrm{d} \rvx(t) = \left[ - \frac{\mathrm{d} [\sigma^2(t)]}{\mathrm{d} t} \nabla_\rvx \log p_t(\rvx)\right] \mathrm{d}t + \sqrt{\frac{\mathrm{d} [\sigma^2(t)]}{\mathrm{d} t}} \mathrm{d} \bar{\mathbf{w}}, \label{eq:reverse-sde-ve}
\end{align}
where $\bar{\mathbf{w}}$ is a Wiener process when time flows backwards from $t = 1$ to $t = 0$. 
If we set the initial conditions $\rvx(1) \sim p_1 = \gN(\bf{0}, \sigma^2(1)\bf{I})$,
then the solution to $\rvx(0)$ will be distributed as $p_{\mathrm{data}}$.
In practice, the noise-perturbed score function can be learned through denoising score matching~\citep{vincent2011connection}. Denote the learned score model as $\vs_\vtheta(\rvx(t), t)$, the learning objective for time $t$ is:
\begin{align}
    L_t = \mathbb{E}_{\rvx(0) \sim p_{\mathrm{data}}, \rvz \sim \gN(\bf{0}, \bf{I})}[\lVert \sigma_t \vs_\vtheta(\rvx(t), t) - \rvz \rVert_2^2],
\end{align}
where $p_{\mathrm{data}}$ is the data distribution and $\rvx(t)$ is defined as in \Eqref{eq:rvxt}. The overall training objective is a weighted sum over $t$ %
of each individual learning objective $L_t$, and various weighting procedures have been discussed in \cite{ho2020denoising,song2020denoising,song2021scorebased}. 

With a parametrized score model $\vs_\vtheta(\rvx(t), t)$ to approximate %
$\nabla_\rvx \log p_t(\rvx)$, the SDE solution can be approximated with the Euler-Maruyama method; an update rule from $(t + \Delta t)$ to $t$ is
\begin{align}
    \rvx(t) = \rvx(t + \Delta t) + (\sigma^2(t) - \sigma^2(t + \Delta t)) \vs_\vtheta(\rvx(t), t) + \sqrt{\sigma^2(t) - \sigma^2(t + \Delta t)} \rvz. \label{eq:update-1-step}
\end{align}
where $\rvz \sim \gN(\bf{0}, \bf{I})$. 
We can select a particular discretization of the time interval from $1$ to $0$, initialize $\rvx(0) \sim \gN(\bf{0}, \sigma^2(1)\bf{I})$ and iterate via \Eqref{eq:update-1-step} to produce an image $\rvx(0)$.

\begin{figure}
\vspace{-20pt}
\centering
\includegraphics[width=\linewidth]{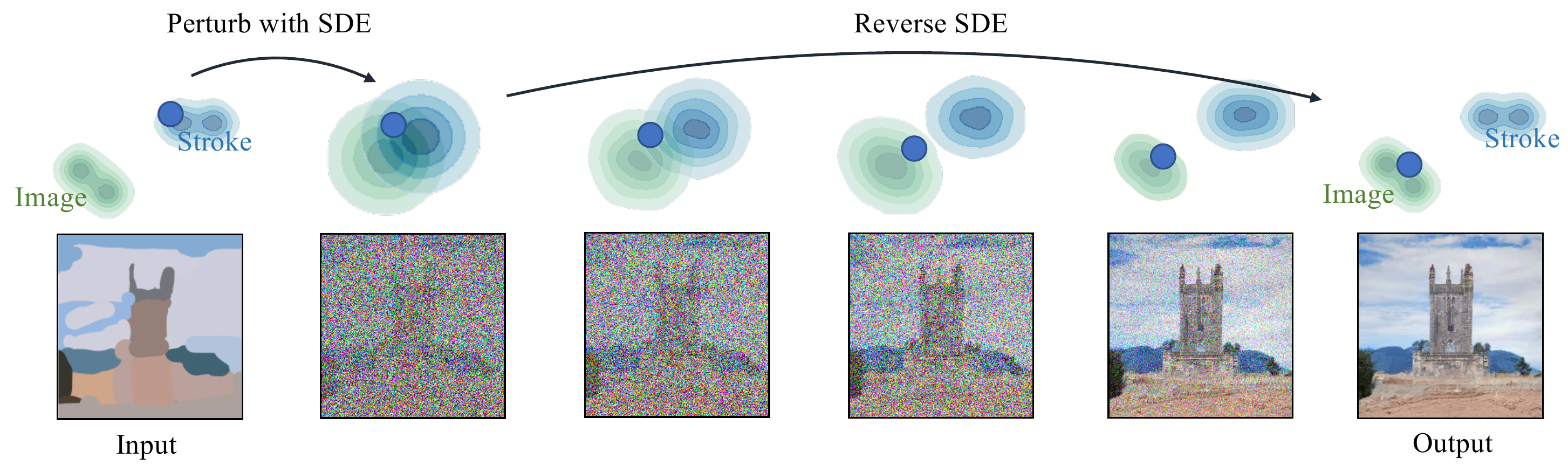}
\caption{Synthesizing images from strokes with \model. The blue dots illustrate the editing process of our method. The green and blue contour plots represent the distributions of images and stroke paintings, respectively. Given a stroke painting, we first perturb it with Gaussian noise 
and progressively remove the noise by simulating the reverse SDE. This process gradually projects an unrealistic stroke painting to the manifold of natural images.
}
\vspace{-15pt}
\label{fig:sde_stroke}
\end{figure}

\section{Guided Image Synthesis and Editing with \model}
\label{sec:method}
In this section, we introduce \model and describe how we can perform guided image synthesis and editing through an SDE model pretrained on unlabeled images. 

\paragraph{Setup.} The user provides a full resolution image $\rvx^{(g)}$ in a form of manipulating RGB pixels, which we call a ``\textit{guide}''. 
The guide may contain different levels of details; a high-level guide contains only coarse colored strokes, a mid-level guide contains colored strokes on a real image, and a low-level guide contains image patches on a target image.
We illustrate these guides in \cref{fig:teaser},
which can be easily provided by non-experts. Our goal is to produce full resolution images with two desiderata:
\begin{description}
    \item[Realism.] The image should appear realistic (\textit{e.g.}, measured by humans or neural networks).
    \item[Faithfulness.] The image should be similar to the guide $\rvx^{(g)}$ (\textit{e.g.}, measured by $L_2$ distance).
\end{description}
We note that realism and faithfulness are not positively correlated, since there can be realistic images that are not faithful (\textit{e.g.}, a random realistic image) and faithful images that are not realistic (\textit{e.g.}, the guide itself). 
Unlike regular inverse problems, we do not assume knowledge about the measurement function (\textit{i.e.}, the mapping from real images to user-created guides in RBG pixels is unknown), so techniques for solving inverse problems with score-based models~\citep{dhariwal2021diffusion,kawar2021snips}
and methods requiring paired datasets~\citep{isola2017image,zhu2017unpaired}
do not apply here. %

 \begin{figure}
 \vspace{-20pt}
     \begin{subfigure}[h]{0.28\linewidth}
         \centering
    \includegraphics[width=\linewidth]{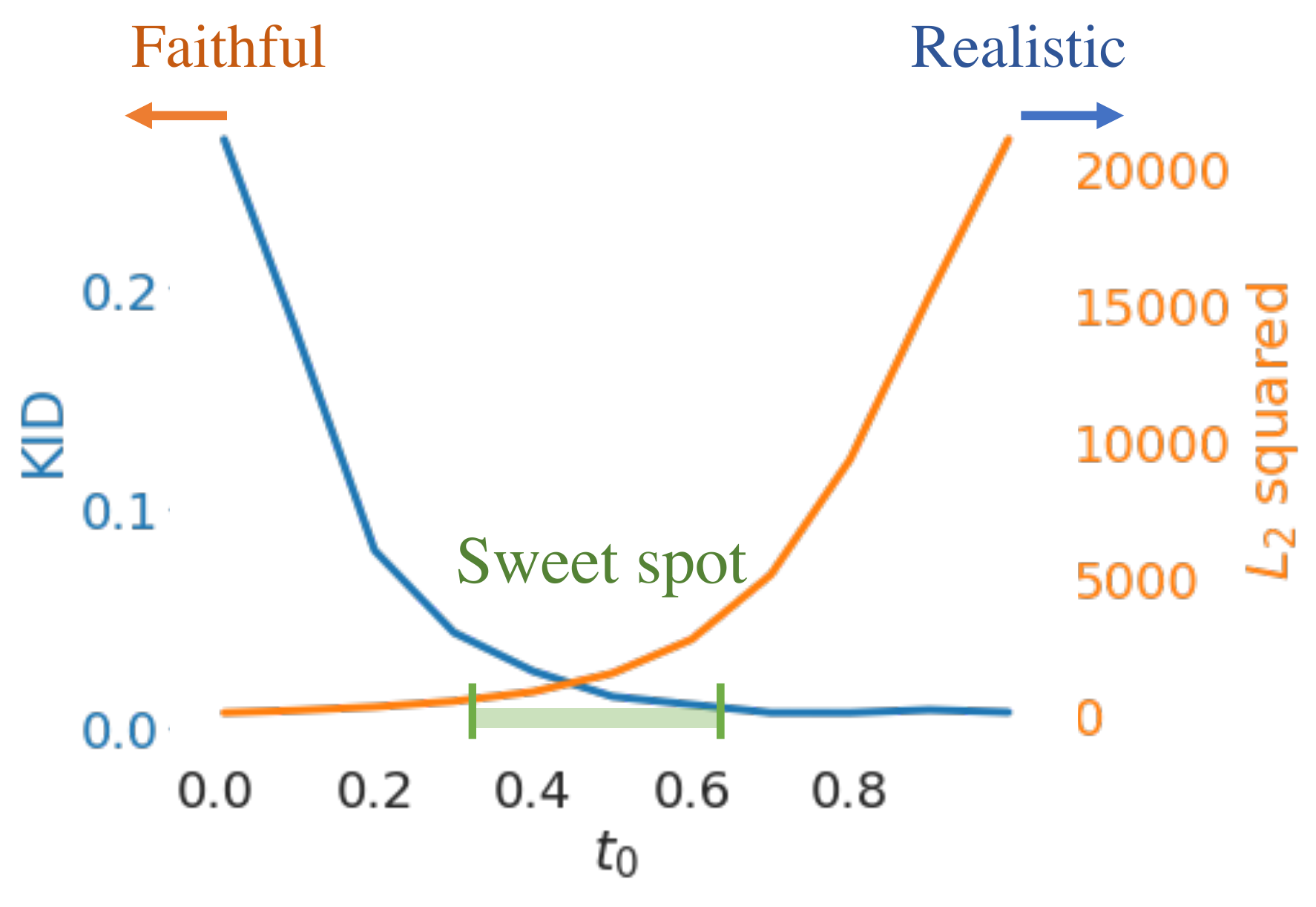}
\caption{KID and $L_2$ norm \textbf{squared} plot with respect to $t_0$.
}
\label{fig:kid_l2}
     \end{subfigure}
      \hfill
 \begin{subfigure}[h]{0.7\linewidth}
     \centering
     \includegraphics[width=\linewidth]{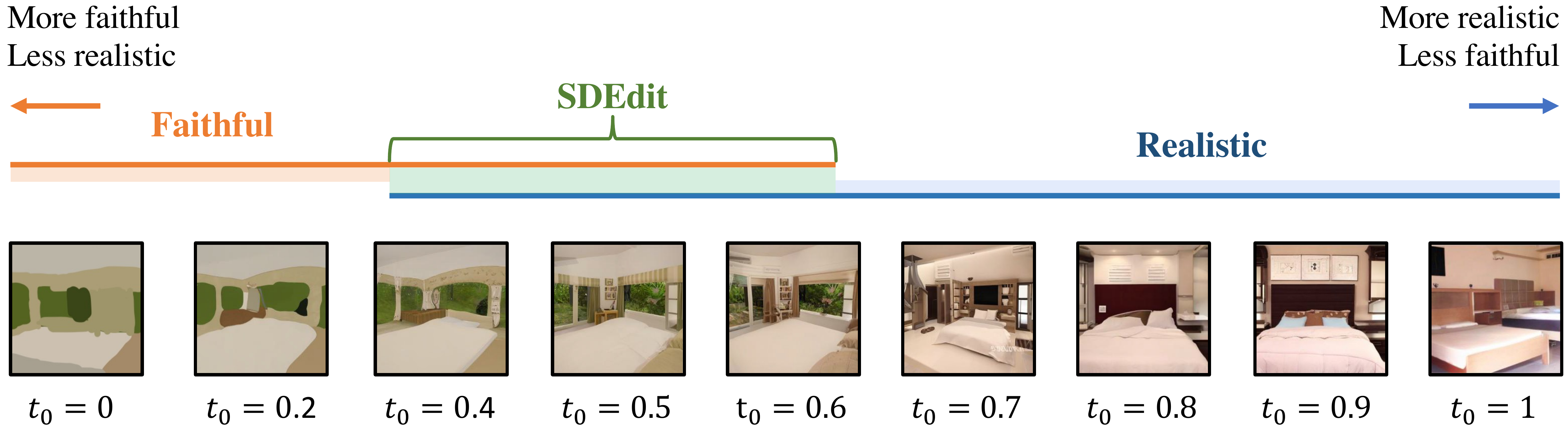}
\caption{
We illustrate synthesized images of SDEdit with various $t_0$ initializations. $t_0 = 0$ indicates the guide itself, whereas $t_0 = 1$ indicates a random sample.
}
     \end{subfigure}
     
\caption{Trade-off between faithfulness and realism for stroke-based generation on LSUN. As $t_0$ increases, the generated images become {\textbf{more realistic}}
while {\textbf{less faithful}}. Given an input, \model aims at generating an image that is both faithful and realistic, which means that we should choose $t_0$ appropriately ($t_0 \in [0.3, 0.6]$ in this example). 
}
\label{fig:sde_tradeoff}
\end{figure}

\paragraph{Procedure.} 
Our method, SDEdit, uses the fact that the reverse SDE can be solved not only from $t_0 = 1$, but also from any intermediate time $t_0 \in (0, 1)$ -- an approach not studied by previous SDE-based generative models.
We need to find a proper initialization from our guides from which we can solve the reverse SDE to obtain desirable, realistic, and faithful images.
For any given guide $\rvx^{(g)}$, we define the SDEdit procedure as follows:
\begin{quote}
    Sample $\rvx^{(g)}(t_0) \sim \gN(\rvx^{(g)}; \sigma^2(t_0) \mathbf{I})$, then produce $\rvx(0)$ by iterating \Eqref{eq:update-1-step}.
\end{quote}
We use $\mathrm{SDEdit}(\rvx^{(g)}; t_0, \theta)$ to denote the above procedure.
Essentially, SDEdit selects a particular time $t_0$, add Gaussian noise of standard deviation $\sigma^2(t_0)$ to the guide $\rvx^{(g)}$ and then solves the corresponding reverse SDE at $t = 0$ to produce the synthesized $\rvx(0)$. 

Apart from the discretization steps taken by the SDE solver, %
the key hyperparameter for SDEdit is $t_0$, the time from which we begin the image synthesis procedure in the reverse SDE. In the following, we describe a realism-faithfulness trade-off that allows us to select reasonable values of $t_0$.

 \begin{figure}
  \vspace{-20pt}
     \centering
     \includegraphics[width=0.85\linewidth]{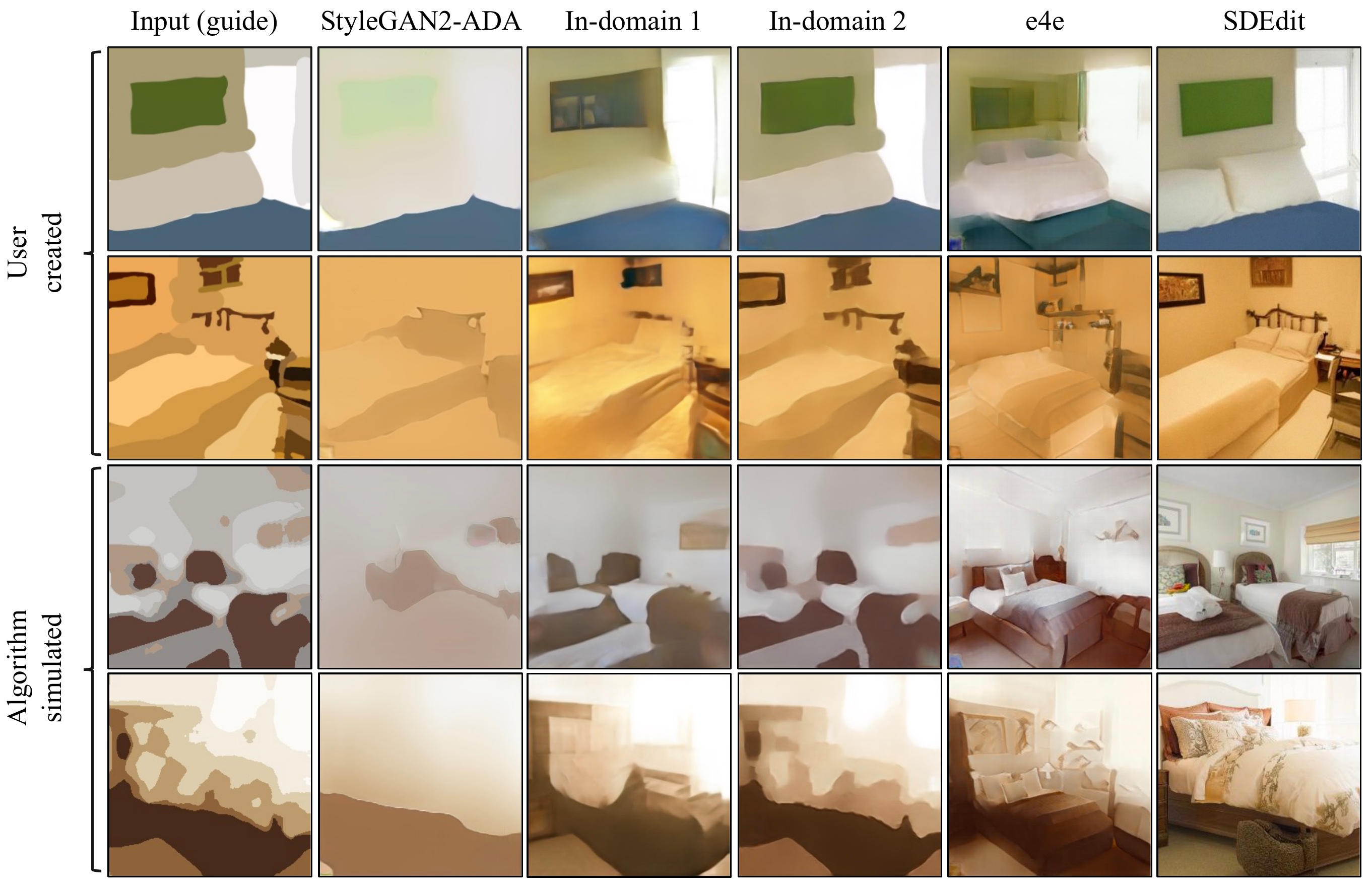}
     \caption{ 
     {{\model generates more realistic and faithful images}} than state-of-the-art GAN-based models on stroke-based generation (LSUN bedroom).
     The guide in the first two rows are created by human and the ones in the last two rows are simulated by algorithm.
}
\label{fig:sde_stroke_generation_baseline}
\vspace{-10pt}
\end{figure}

\paragraph{Realism-faithfulness trade-off.} We note that for properly trained SDE models, there is a realism-faithfulness trade-off when choosing different values of $t_0$. 
To illustrate this, we focus on the LSUN dataset,
and use 
high-level stroke paintings 
as guides to perform stroke-based image generation.
We provide experimental details in \cref{app:sec:generating_stroke}.
We consider different choices of $t_0\in[0,1]$ for the same input. To quantify realism, we adopt neural methods for comparing image distributions, such as the Kernel Inception Score \citep[KID;][]{binkowski2018demystifying}.
If the KID between synthesized images
and real images are low, then the synthesized images are realistic. %
For faithfulness, we measure the squared $L_2$ distance between the synthesized images and the guides $\x^{(g)}$. 
From \cref{fig:sde_tradeoff},
we observe increased realism but decreased faithfulness as $t_0$ increases.

The realism-faithfulness trade-off can be interpreted from another angle. If the guide is far from any realistic images, then we must tolerate at least a certain level of deviation from the guide (non-faithfulness) in order to produce a realistic image. 
This is illustrated in the following proposition.
\begin{restatable}{proposition}{propbound}
Assume that $\norm{s_\theta(\rvx, t)}^2_2 \leq C$ for all $\rvx \in \gX$ and $t \in [0, 1]$. Then for all $\delta \in (0, 1)$ with probability at least $(1 - \delta)$,
\begin{align}
    \norm{\rvx^{(g)} - \mathrm{SDEdit}(\rvx^{(g)}; t_0, \theta)}^2_2 \leq \sigma^2(t_0) (C \sigma^2(t_0) + d + 2\sqrt{- d \cdot \log \delta} - 2 \log \delta)
\end{align}
where $d$ is the number of dimensions of $\rvx^{(g)}$.
\end{restatable}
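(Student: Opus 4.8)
The plan is to unroll $\mathrm{SDEdit}(\rvx^{(g)};t_0,\theta)$ explicitly and split its displacement from the guide into the Gaussian noise injected at time $t_0$ and the net update produced by the reverse solver. Writing the noised guide as $\rvx^{(g)}(t_0)=\rvx^{(g)}+\sigma(t_0)\rvz$ with $\rvz\sim\gN(\bm{0},\bm{I})$ and telescoping the update rule \Eqref{eq:update-1-step} over a discretization $0=\tau_0<\tau_1<\dots<\tau_N=t_0$, one obtains
\begin{align*}
\mathrm{SDEdit}(\rvx^{(g)};t_0,\theta)=\rvx^{(g)}+\sigma(t_0)\,\rvz+\bm{\Delta},\qquad \bm{\Delta}=\sum_{i=1}^{N}\big(\sigma^2(\tau_i)-\sigma^2(\tau_{i-1})\big)\,s_\theta\big(\rvx(\tau_i),\tau_i\big),
\end{align*}
so $\rvx^{(g)}-\mathrm{SDEdit}(\rvx^{(g)};t_0,\theta)=-\sigma(t_0)\rvz-\bm{\Delta}$. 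I would present the argument for the deterministic (probability-flow ODE) reverse trajectory; the Euler--Maruyama diffusion term is an independent centered Gaussian of covariance at most $\sigma^2(t_0)\bm{I}$ and only affects constants, so I return to it at the end.

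The first step is a pathwise bound on the reverse drift $\bm{\Delta}$. The coefficients telescope, $\sum_{i=1}^N\big(\sigma^2(\tau_i)-\sigma^2(\tau_{i-1})\big)=\sigma^2(t_0)-\sigma^2(0)\le\sigma^2(t_0)$, so the triangle inequality together with the hypothesis $\norm{s_\theta(\rvx,t)}_2\le\sqrt{C}$ gives $\norm{\bm{\Delta}}_2\le\sqrt{C}\,\sigma^2(t_0)$ on every sample path, hence $\norm{\bm{\Delta}}_2^2\le C\sigma^4(t_0)$. This is the only use of the assumption and it contributes the $C\sigma^2(t_0)$ term. Expanding the squared norm and discarding the cross term,
\begin{align*}
\norm{\rvx^{(g)}-\mathrm{SDEdit}(\rvx^{(g)};t_0,\theta)}_2^2=\sigma^2(t_0)\norm{\rvz}_2^2+2\sigma(t_0)\langle\rvz,\bm{\Delta}\rangle+\norm{\bm{\Delta}}_2^2\le\sigma^2(t_0)\norm{\rvz}_2^2+C\sigma^4(t_0),
\end{align*}
using $\langle\rvz,\bm{\Delta}\rangle\le 0$, which reflects that the reverse process denoises, so its net displacement $\bm{\Delta}$ is anti-aligned with the noise $\rvz$ just added to $\rvx^{(g)}$. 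It then remains to control $\norm{\rvz}_2^2\sim\chi^2_d$: the Laurent--Massart tail bound $\Pr[\chi^2_d\ge d+2\sqrt{dx}+2x]\le e^{-x}$, applied with $x=-\log\delta$, gives $\norm{\rvz}_2^2\le d+2\sqrt{-d\log\delta}-2\log\delta$ with probability at least $1-\delta$, and substituting yields exactly
\begin{align*}
\norm{\rvx^{(g)}-\mathrm{SDEdit}(\rvx^{(g)};t_0,\theta)}_2^2\le\sigma^2(t_0)\big(C\sigma^2(t_0)+d+2\sqrt{-d\log\delta}-2\log\delta\big).
\end{align*}

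The step I expect to be the main obstacle is justifying $\langle\rvz,\bm{\Delta}\rangle\le 0$ rigorously: $\bm{\Delta}$ is a function of the whole reverse trajectory, which depends on $\rvz$, and for an arbitrary learned $s_\theta$ there is no pointwise sign guarantee. I would attempt an induction along the recursion showing each increment $s_\theta(\rvx(\tau_i),\tau_i)$ has nonpositive inner product with $\rvz$ (this holds for the exact score when the current iterate is the noised guide and $p_{\tau_i}$ is log-concave). If a clean argument is unavailable, the robust fallback is the triangle inequality $\norm{\sigma(t_0)\rvz+\bm{\Delta}}_2\le\sigma(t_0)\norm{\rvz}_2+\sqrt{C}\,\sigma^2(t_0)$, which on squaring leaves an extra term $2\sqrt{C}\,\sigma^3(t_0)\norm{\rvz}_2$ that can be folded into a slightly larger constant in front of $\sigma^4(t_0)$, giving a bound of the same shape. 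Lastly, for the stochastic reverse update the injected noise and the accumulated diffusion noise are independent Gaussians with total covariance at most $2\sigma^2(t_0)\bm{I}$, so the same argument goes through with $\sigma^2(t_0)$ replaced by $2\sigma^2(t_0)$ in the $\chi^2$ term; the stated bound is the deterministic-trajectory version.
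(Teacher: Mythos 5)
Your proposal is, at its core, the same argument the paper gives: split the displacement into an accumulated score drift, bounded pathwise by $C\sigma^4(t_0)$ via the telescoping coefficients $\sum_i(\sigma^2(\tau_i)-\sigma^2(\tau_{i-1}))=\sigma^2(t_0)$ and the sup-norm assumption on $s_\theta$, plus a Gaussian whose squared norm is handled by the Laurent--Massart $\chi^2_d$ tail with $x=-\log\delta$; your discrete unrolling of the Euler--Maruyama recursion versus the paper's continuous-time integral is cosmetic. Two remarks on the points where you diverge. First, the obstacle you flag --- justifying the dropped cross term $2\sigma(t_0)\langle\rvz,\bm{\Delta}\rangle$ --- is genuine, but the paper does not resolve it either: it simply asserts $\norm{a+b}_2^2\le\norm{a}_2^2+\norm{b}_2^2$ on the grounds that the noise term is ``independent'' of the drift term, which is not a valid pathwise justification (independence could at best kill the cross term in expectation, and the drift actually depends on the noisy trajectory). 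Your proposed anti-alignment induction will not go through for an arbitrary learned $s_\theta$, so your triangle-inequality fallback, which inflates the drift contribution, is the honest rigorous version of this step; the paper's stated constant relies on the same unproved inequality you worried about. Second, the factor-of-two issue you raise is sidestepped in the paper by a change of reference point: the paper bounds $\norm{\rvx^{(g)}(t_0)-\rvx^{(g)}(0)}_2^2$, the distance from the \emph{noised} guide, so only the reverse-time Wiener integral (marginal $\gN(\bm{0},\sigma^2(t_0)\mathbf{I})$) contributes a single Gaussian, and the result is then silently identified with the quantity $\norm{\rvx^{(g)}-\mathrm{SDEdit}(\rvx^{(g)};t_0,\theta)}_2^2$ in the statement. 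Your decomposition, which starts from the original guide and therefore must count both the initial perturbation and the reverse diffusion noise, is actually the more faithful reading of the proposition, at the cost of the $2\sigma^2(t_0)$ you note. In short: same approach, and the gaps you identify are present, not repaired, in the paper's own proof.
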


We provide the proof in \cref{app:sec:proofs}. On a high-level, the difference from the guides and the synthesized images can be decomposed into the outputs of the score and random Gaussian noise; both would increase as $t_0$ increases, and thus the difference becomes greater. 
The above proposition suggests that for the image to be realistic with high probability, we must have a large enough $t_0$. On the flip side, if $t_0$ is too large, then the faithfulness to the guide deteriorates, and $\mathrm{SDEdit}$ will produce random realistic images (with the extreme case being unconditional image synthesis). 

\paragraph{Choice of $t_0$.} We note that the quality of the guide may affect the overall quality of the synthesized image. 
For reasonable guides, we find that $t_0  \in [0.3, 0.6]$ works well.
However, if the guide is an image with only white pixels, then even the closest ``realistic'' samples from the model distribution can be quite far, and we must sacrifice faithfulness for better realism by choosing a large $t_0$. 
In interactive settings (where user draws a sketch-based guide), we can initialize $t_0 \in [0.3, 0.6]$, synthesize a candidate with SDEdit, and ask the user whether the sample should be more faithful or more realistic; from the responses, we can obtain a reasonable $t_0$ via binary search.
In large-scale non-interactive settings (where we are given a set of produced guides), 
we can perform a similar binary search on a randomly selected image to obtain $t_0$ and subsequently fix $t_0$ for all guides in the same task.
Although different guides could potentially have different optimal $t_0$, we empirically observe that the shared $t_0$ works well for all reasonable guides in the same task.

\paragraph{Detailed algorithm and extensions.} 
We present the general algorithm for VE-SDE in \cref{alg:synthesis_ve}.
Due to space limit, we describe our detailed algorithm for VP-SDE in \cref{app:sdedit}. Essentially, the algorithm is an Euler-Maruyama method for solving $\mathrm{SDEdit}(\rvx^{(g)}; t_0, \theta)$.
For cases where we wish to keep certain parts of the synthesized images to be identical to that of the guides, we can also introduce an additional channel that masks out parts of the image we do not wish to edit. This is a slight modification to the SDEdit procedure mentioned in the main text, and we discuss the details in \cref{app:sec:algorithm}.

\begin{algorithm}
  \caption{Guided image synthesis and editing with SDEdit (VE-SDE)}\label{alg:synthesis_ve}
  \begin{algorithmic}
  \Require{$\x^{(g)}$ (guide), $t_0$ (SDE hyper-parameter), $N$ (total denoising steps)
  }
  \State{$\Delta t \gets \frac{t_0}{N}$}
    \State{$\rvz \sim \mcal{N}(\bm{0}, \bm{I})$}
    \State{$\rvx \gets \rvx + \sigma(t_0) \rvz$}
    \For{$n \gets N$ \textbf{to} $1$}
        \State{$t \gets t_0\frac{n}{N}$}
        \State{$\rvz \sim \mcal{N}(\bm{0}, \bm{I})$} 
        \State{$\epsilon \gets \sqrt{\sigma^2(t) - \sigma^2(t - \Delta t)}$}
        \State{$\rvx \gets \rvx + \epsilon^2 \vs_\vtheta(\rvx, t) + \epsilon \rvz$}
    \EndFor
  \State{\textbf{Return} $\rvx$}
  \end{algorithmic}
\end{algorithm}

\section{Related Work}

\myparagraph{Conditional GANs.} Conditional GANs for image editing~\citep{isola2017image,zhu2017unpaired,jo2019sc,liu2021deflocnet} learn to directly generate an image based on a user input, and have demonstrated success on a variety of tasks including image synthesis and editing~\citep{Faceshop,chen2017photographic,dekel2018sparse,wang2018pix2pixHD,park2019SPADE,zhu2020sean,jo2019sc,liu2021deflocnet}, 
inpainting~\citep{pathak2016context,iizuka2017globally,yang2017high,liu2018image}, photo colorization~\citep{zhang2016colorful,larsson2016learning,zhang2017real,he2018deep}, semantic image texture and geometry synthesis~\citep{TexSyn18,TerrainSyn17,xian2017texturegan}. 
They have also achieved strong performance on image editing using user sketch or color~\citep{jo2019sc,liu2021deflocnet,sangkloy2016scribbler}.
However, conditional models have to be trained on both original and edited images, thus requiring data collection and model re-training for new editing tasks. Thus, applying such methods to on-the-fly image manipulation 
is still challenging since a new model needs to be trained for each new application. Unlike conditional GANs, SDEdit only requires training on the original image. As such, it can be directly applied to various editing tasks at test time as illustrated in \cref{fig:teaser}.

\myparagraph{GANs inversion and editing.} Another mainstream approach to image editing involves GAN inversion~\citep{zhu2016generative,brock2017neural}, where the input is first projected into the latent space of an unconditional GAN before synthesizing a new image from the modified latent code.
Several methods have been proposed in this direction, including fine-tuning network weights for each image~\citep{bau2019semantic,pan2020exploiting,roich2021pivotal}, choosing better or multiple layers to project and edit~\citep{abdal2019image2stylegan,abdal2020image2stylegan++,gu2020image,wu2021stylespace}, designing better encoders~\citep{richardson2021encoding,tov2021designing}, modeling image corruption and transformations~\citep{anirudh2020mimicgan,huh2020transforming}, and discovering meaningful latent directions~\citep{shen2020interpreting,goetschalckx2019ganalyze,jahanian2019steerability,harkonen2020ganspace}. 
However, these methods need to define different loss functions for different tasks. They also require GAN inversion, which can be inefficient and inaccurate for various datasets~\citep{huh2020transforming,karras2020analyzing,bau2019seeing,xu2021generative}.

\myparagraph{Other generative models.}
Recent advances in training non-normalized probabilistic models, such as score-based generative models~\citep{song2019generative,song2020improved,song2021scorebased, ho2020denoising, song2020denoising, jolicoeur-martineau2021adversarial} and energy-based models~\citep{ackley1985learning,gao2017learning,du2019implicit,xie2018cooperative, xie2016theory, song2021train}, have 
achieved comparable image sample quality as GANs. However, most of the prior works in this direction have focused on unconditional image generation and density estimation, and state-of-the-art techniques for image editing and synthesis are still dominated by GAN-based methods. In this work, we focus on the recently emerged generative modeling with stochastic differential equations (SDE), and study its application to controllable image editing and synthesis tasks. 
A concurrent work~\citep{choi2021ilvr} performs conditional image synthesis with diffusion models, where the conditions can be represented as the known function of the underlying true image.

\section{Experiments}
\begin{figure}
\vspace{-10pt}
\centering
\includegraphics[width=\linewidth]{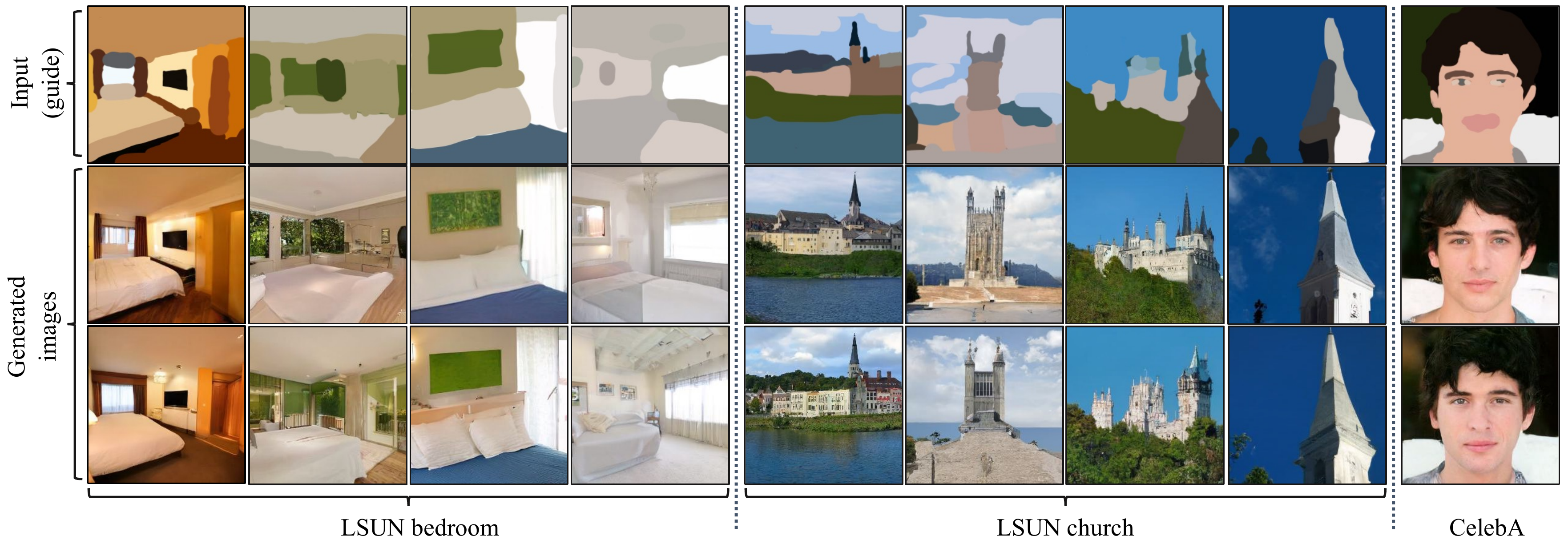}
\caption{
{{\model can generate realistic, faithful and diverse images} for a given stroke input drawn by human.}
}
\label{fig:sde_stroke_generation}
\vspace{-10pt}
\end{figure}

In this section, we show that \model is able to outperform state-of-the-art GAN-based models on stroke-based image synthesis and editing as well as image compositing.
Both \model
and the baselines use publicly available pre-trained checkpoints. Based on the availability of open-sourced SDE checkpoints, we use VP-SDE for experiments on LSUN datasets, and VE-SDE for experiments on CelebA-HQ.

\begin{table}[H]
\vspace{-10pt}
\newcommand{\mround}[1]{\round{#1}{2}}
    \newcommand{\nastar}{\multicolumn{1}{c}{\hspace{15pt}--~*}}
    \newcommand{\na}{\multicolumn{1}{c}{--}}
    \centering %
    {
        \centering %

        \begin{adjustbox}{max width=0.9\linewidth}
        \begin{tabular}{l @{\extracolsep{5pt}}  c c c c @{}}
        \\
        \toprule
        {Baselines}
        & Faithfulness score ($L_2$) $\downarrow$
        & \shortstack{\model is more realistic (MTurk) $\uparrow$}
        & \shortstack{\model is more satisfactory (Mturk) $\uparrow$}
        \\ 
        \midrule
        In-domain GAN-1 %
        & 101.18 %
        & 94.96\% %
        & 89.48\% %
        \\
        In-domain GAN-2%
        & 57.11 %
        & 97.87\% %
        & 89.51\% %
        
        \\
        StyleGAN2-ADA%
        & 68.12 %
        & 98.09\% %
        & 91.72\% %
        
        \\
        e4e%
        & 53.76 %
         & 80.34\% %
        & 75.43\%
       
        \\
        \model
        & \textbf{32.55} %
        & \na
        & \na
        \\
    \bottomrule
        \end{tabular}
        \end{adjustbox}
}
\caption{
{{\model outperforms all the
    GAN baselines}} on stroke-based generation on LSUN (bedroom).
The input strokes are created by human users.
The rightmost two columns stand for the percentage of MTurk workers that prefer \model to the baseline for pairwise comparison.}
\label{tab:stroke_lsun}
\vspace{-10pt}
\end{table}

\paragraph{Evaluation metrics.}
We evaluate the editing results based on \emph{realism} and \emph{faithfulness}. To quantify \emph{realism}, we use Kernel Inception Score (KID) between the generated images and the target realistic image dataset (details in \cref{app:sec:generating_stroke}), and pairwise human evaluation between different approaches with Amazon Mechanical Turk (MTurk).
To quantify \emph{faithfulness}, we report the $L_2$ distance summed over all pixels between the guide and the edited output image normalized to [0,1]. We also consider LPIPS~\citep{zhang2018context} and MTurk human evaluation for certain experiments. 
To quantify the overall human satisfaction score (\emph{realism} + \emph{faithfulness}), we leverage MTurk human evaluation to perform pairwise comparsion between the baselines and \model (see \cref{app:sec:mturk}).

\subsection{Stroke-Based Image Synthesis}

Given an input stroke painting, our goal is to generate a \emph{realistic} and \emph{faithful} image 
\emph{when no paired data is available}. We consider stroke painting guides created by human users (see \cref{fig:sde_stroke_generation}). At the same time, we also propose an algorithm to automatically simulate user stroke paintings based on a source image (see \cref{fig:sde_stroke_generation_baseline}), allowing us to perform large scale quantitative evaluations for \model. We provide more details in \cref{app:sec:generating_stroke}.

\begin{figure}
\centering
\includegraphics[width=\linewidth]{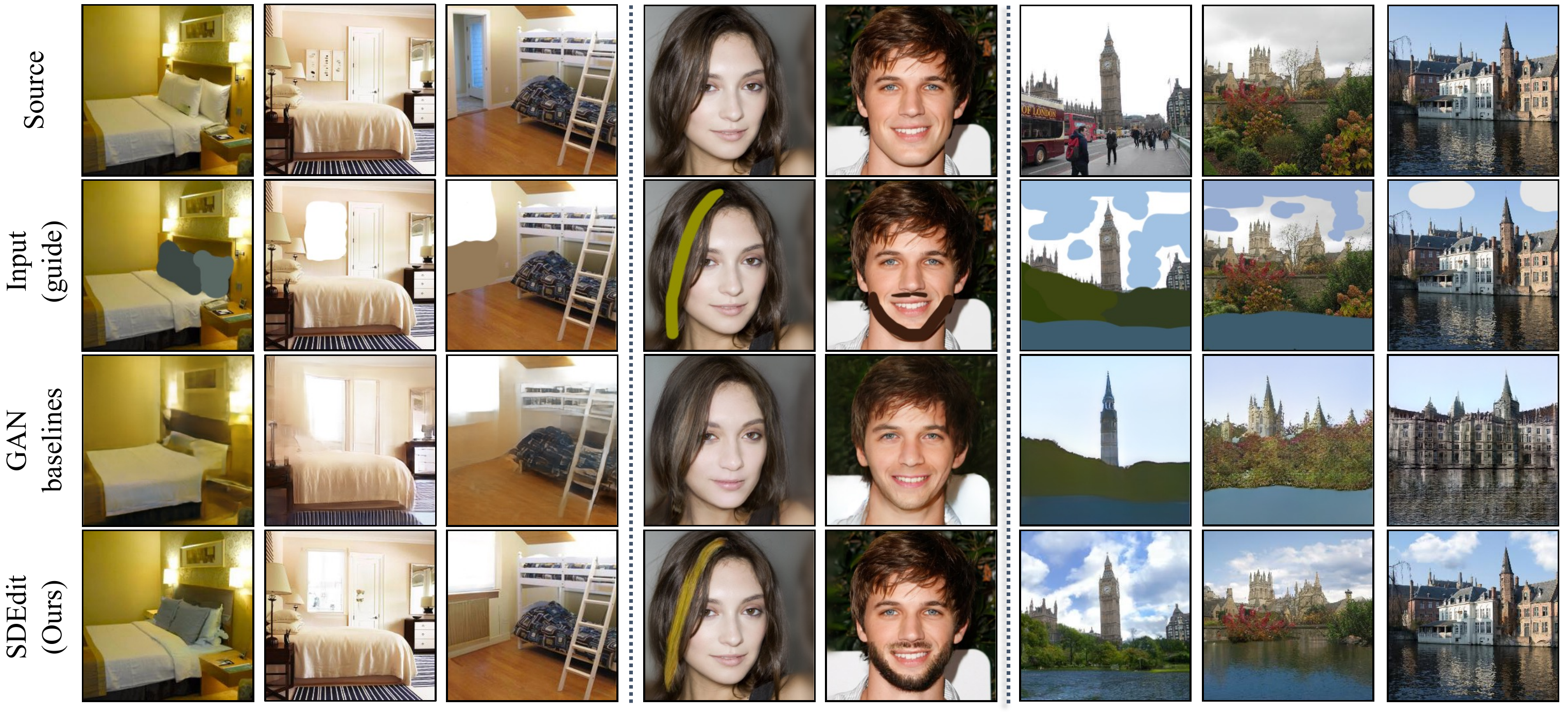}
\caption{Stroke-based image editing with \model
on LSUN bedroom, CelebA-HQ, and LSUN church datasets. For comparison, we show the results of GAN baselines,
where results for LSUN bedroom and CelebA-HQ are obtained by in-domain GAN (the leftmost 5 panels), and results for LSUN church are from StyleGAN2-ADA (the rightmost 3 panels). We observe that {{\model is able to produce more faithful and realistic editing compared to the baselines}}.
}
\label{fig:sde_stroke_edit}
\vspace{-5pt}
\end{figure}

\paragraph{Baselines.} For comparison, we choose three state-of-the-art
GAN-based 
image editing and synthesis methods as our baselines. 
Our first baseline is the image projection method used in StyleGAN2-ADA\footnote{\url{https://github.com/NVlabs/stylegan2-ada}}~\citep{karras2020training}, where inversion is done in the $W^+$ space of StyleGANs by minimizing the perceptual loss. Our second baseline is in-domain GAN\footnote{\url{https://github.com/genforce/idinvert_pytorch}}~\citep{zhu2020domain}, where inversion is accomplished by running optimization steps on top of an encoder. Specifically, we consider two versions of the in-domain GAN inversion techniques: the first one (denoted as In-domain GAN-1) only uses the encoder to maximize the inversion speed, whereas the second (denoted as In-domain GAN-2) runs additional optimization steps to maximize the inversion accuracy. Our third baseline is e4e\footnote{\url{https://github.com/omertov/encoder4editing}}~\citep{tov2021designing}, whose encoder objective is explicitly designed to balance between perceptual quality and editability by encouraging to invert images close to $W$ space of a pretrained StyleGAN model.

\begin{wrapfigure}{r}{0.5\textwidth}
 \newcommand{\mround}[1]{\round{#1}{2}}
    \newcommand{\nastar}{\multicolumn{1}{c}{\hspace{15pt}--~*}}
    \newcommand{\na}{\multicolumn{1}{c}{--}}
    \centering %
    {
        \centering %
        \begin{adjustbox}{max width=0.99\linewidth}
  \begin{tabular}{@{}lllll@{}}
\toprule
\multirow{2}{*}{Methods} & \multicolumn{2}{c}{LSUN Bedroom} & \multicolumn{2}{c}{LSUN Church} \\
\cline{2-5}
& $L_2$ $\downarrow$    & KID $\downarrow$  & $L_2$ $\downarrow$           & KID $\downarrow$         \\ 
\midrule
In-domain GAN-1 & 105.23       & 0.1147      & -             &  -       \\
In-domain GAN-2 & 76.11       & 0.2070      & -             &  -       \\
StyleGAN2-ADA   & 74.03       & 0.1750      & 72.41      & 0.1544      \\
e4e             & 52.40       & 0.0464      & 68.53      & 0.0354     \\
SDEdit (ours)   & \textbf{36.76}       & \textbf{0.0030}      & \textbf{37.67}      & \textbf{0.0156}      \\ \bottomrule
\end{tabular}
        \end{adjustbox}
}
\captionof{table}{{{\model outperforms {all} the
    GAN baselines on both faithfulness and realism}} for stroke-based image generation.
The input strokes are generated with the stroke-simulation algorithm.
KID is computed using the generated images and the corresponding validation sets (see \cref{app:sec:generating_stroke}). 
}
\vspace{-5pt}
\label{tab:stroke_lsun_synthetic}
\end{wrapfigure}

\paragraph{Results.}
We present qualitative comparison results in
\cref{fig:sde_stroke_generation_baseline}.
We observe that all baselines struggle to generate realistic images based on stroke painting inputs whereas \model successfully generates realistic images that preserve semantics of the input stroke painting. 
As shown in \cref{fig:sde_stroke_generation}, \model can also synthesize diverse images for the same input. 
We present quantitative comparison results using user-created stroke guides in \cref{tab:stroke_lsun} and algorithm-simulated stroke guides in \cref{tab:stroke_lsun_synthetic}.
We report the $L_2$ distance for faithfulness comparison, and leverage MTurk (see \cref{app:sec:mturk})
or KID scores for realism comparison.
To quantify the overall human satisfaction score (faithfulness + realism),
we ask a different set of MTurk workers to perform another 3000 pairwise comparisons between the baselines and \model based on \emph{both faithfulness} and \emph{realism}.
We observe that {{\model outperforms GAN baselines on {all} the evaluation metrics}}, beating the baselines by more than \textbf{80\%} on realism scores and \textbf{75\%} on overall satisfaction scores. 
We provide more experimental details in \cref{app:sdedit} and more results in \cref{app:results}.

\subsection{Flexible Image Editing}
In this section, we show that \model is able to outperform existing GAN-based models on image editing tasks.
We focus on LSUN (bedroom, church) and CelebA-HQ datasets, and provide more details on the experimental setup in the \cref{app:experiment}. %

\paragraph{Stroke-based image editing.} 
Given an image with stroke edits, we want to generate a realistic and faithful image based on the user edit.
We consider the same GAN-based baselines \citep{zhu2020domain,karras2020training,tov2021designing} as our previous experiment.
As shown in \cref{fig:sde_stroke_edit}, results generated by the baselines tend to introduce undesired modifications, occasionally making the region outside the stroke blurry. In contrast, \model is able to generate image edits that are {\emph{both realistic}} and {\emph{faithful}} to the input,
while avoiding making undesired modifications. We provide extra results in \cref{app:results}.

\paragraph{Image compositing.}
\begin{figure}
\vspace{-5pt}
\centering
\includegraphics[width=\linewidth]{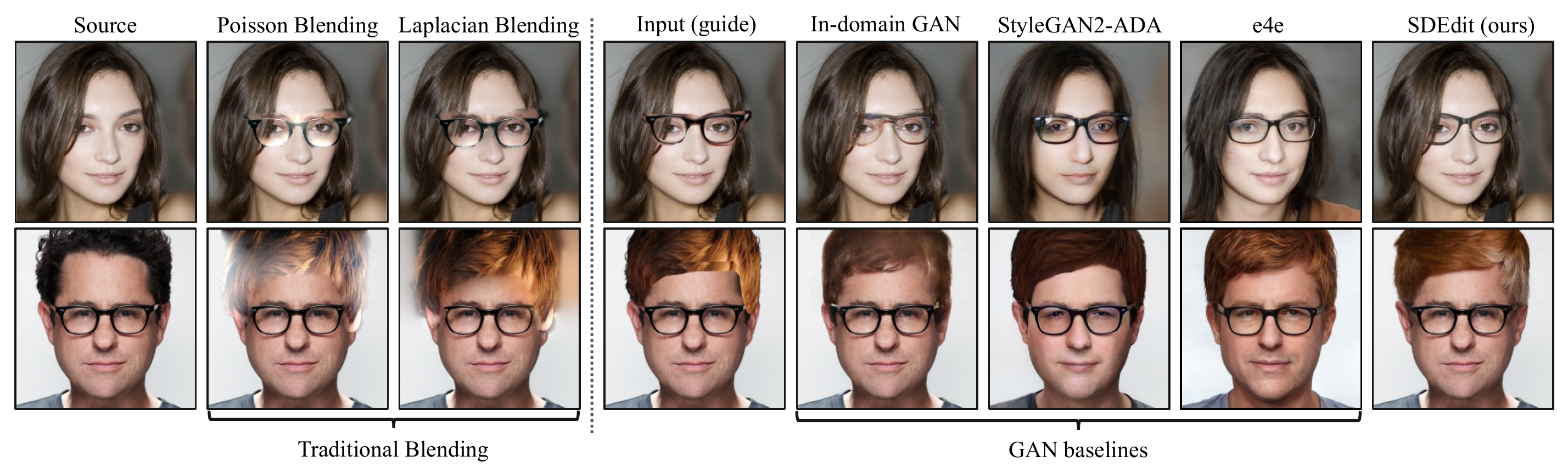}
\caption{
{{\model is able to achieve realistic while more faithful editing results}} compared to traditional blending and recent GAN-based approaches for image compositing on CelebA-HQ. 
Quantitative results are reported in \cref{tab:image_edit_celeba}.
}
\label{fig:celeba_edit}
\vspace{-5pt}
\end{figure}
We focus on compositing images on the CelebA-HQ dataset~\citep{karras2017progressive}. 
Given an image randomly sampled from the dataset, we ask users to specify how they want the edited image to look like using pixel patches copied from other reference images as well as the pixels they want to perform modifications (see \cref{fig:celeba_edit}). 
We compare our method with traditional blending algorithms \citep{burt1987laplacian,perez2003poisson} and the same GAN baselines considered previously. We perform qualitative comparison in \cref{fig:celeba_edit}.
For quantitative comparison, we report the $L_2$ distance to quantify faithfulness. 
To quantify realism, we ask MTurk workers to perform 1500 pairwise comparisons between the baselines and \model.
To quantify user satisfaction score (faithfulness + realism), 
we ask different workers to perform another 1500 pairwise comparisons against \model.
To quantify undesired changes (\eg change of identity), we follow \cite{bau2020rewriting} to compute masked LPIPS~\citep{zhang2018context}. As evidenced in \cref{tab:image_edit_celeba}, we observe that \model is able to generate both faithful and realistic images with {{much better LPIPS scores}} than the baselines, outperforming the baselines by up to \textbf{83.73\%} on overall satisfaction score and \textbf{75.60\%} on realism. Although our realism score is marginally lower than e4e, images generated by \model are more faithful and more satisfying overall.
We present more experiment details in \cref{app:experiment}. %

\begin{table}%
    \newcommand{\mround}[1]{\round{#1}{2}}
    \newcommand{\nastar}{\multicolumn{1}{c}{\hspace{15pt}--~*}}
    \newcommand{\na}{\multicolumn{1}{c}{--}}
    \centering \small
    {
        \centering \small
        
        \resizebox{
          \ifdim\width>\textwidth
            \textwidth
          \else
            0.92\width
          \fi
        }{!}{%
        \begin{tabular}{l @{\extracolsep{5pt}}  c c c c @{}}
        \\
        \toprule
        \multirow{2}{*}{Methods}
        & $L_2$ $\downarrow$
        & \shortstack{\model  more realistic}
        & \shortstack{\model  more satisfactory}
        & \shortstack{LPIPS}
        \\ 
        & (faithfulness) 
        & (Mturk) $\uparrow$
        & (Mturk) $\uparrow$
        & (masked) $\downarrow$
        \\ 
        \midrule
        Laplacian Blending%
        & 68.45 %
        & 75.27\% %
        & 83.73\% %
        & 0.09 %
        \\
        Poisson Blending%
        & 63.04 %
        & 75.60\% %
        & 82.18\% %
        & 0.05 %
        \\
        In-domain GAN%
        & 36.67 %
        & 53.08\% %
        & 73.53\% %
        & 0.23 %
        \\
        StyleGAN2-ADA%
        & 69.38 %
        & 74.12\% %
        & 83.43\%  %
        & 0.21 %
        \\
        e4e%
        & 53.90 %
        &43.67\%
        &66.00\%
        & 0.33%
        \\
        \model (ours)
        &\textbf{21.70} %
        &\na
        &\na
        &\textbf{0.03}\\
    \bottomrule
        \end{tabular}
}
    }
    \vspace{-5pt}
    \caption{
    Image compositing experiments on CelebA-HQ.
    The middle two columns indicate the percentage of MTurk workers that {{prefer \model}}.
    We also report the masked LPIPS distance between edited and unchanged images to quantify undesired changes outside the masks. We observe that \model is able to achieve realistic editing while being more faithful than the baselines, {beating the baseline by up to \textbf{83.73\%}} on overall satisfaction score by human evaluators.}
    \vspace{-10pt}
    \label{tab:image_edit_celeba}%
\end{table}

\section{Conclusion}
We propose \modelfull (\model), a guided image editing and synthesis method via generative modeling of images with stochastic differential equations (SDEs) allowing for balanced realism and faithfulness. Unlike image editing techniques via GAN inversion, our method does not require task-specific optimization algorithms for reconstructing inputs, and is particularly suitable for datasets or tasks where GAN inversion losses are hard to design or optimize.
Unlike conditional GANs, our method does not  require collecting new datasets for the ``guide" images or re-training models, both of which could be expensive or time-consuming.
We demonstrate that \model outperforms existing GAN-based methods on stroke-based image synthesis,  stroke-based image editing and image compositing
without task-specific training.

\myparagraph{Acknowledgments.}
The authors want to thank Kristy Choi for proofreading. This research
was supported by NSF (\#1651565, \#1522054, \#1733686), ONR (N00014-19-1-2145), AFOSR
(FA9550-19-1-0024), ARO, Autodesk, Stanford HAI, Amazon ARA, and Amazon AWS. Yang Song is supported by the Apple PhD Fellowship in AI/ML. J.-Y. Zhu is partly supported by Naver Corporation. 
\section*{Ethics Statement}
In this work, we propose \model, which is a new image synthesis and editing methods based on generative stochastic differential equations (SDEs). In our experiments, all the considered datasets are open-sourced and publicly available, being used under permission. 
Similar to commonly seen deep-learning based image synthesis and editing algorithms, our method has both positive and negative societal impacts depending on the applications and usages. 
On the positive side, \model enables everyday users with or without artistic expertise to create and edit photo-realistic images with minimum effort, lowering the barrier to entry for visual content creation. 
On the other hand, \model can be used to generate high-quality edited images that are hard to be distinguished from real ones by humans, which could be used in malicious ways to deceive humans and spread misinformation. 
Similar to commonly seen deep-learning models (such as GAN-based methods for face-editing), SDEdit might be exploited by malicious users with potential negative impacts. In our code release, we will explicitly specify allowable uses of our system with appropriate licenses.

We also notice that forensic methods for detecting fake machine-generated images mostly focus on distinguishing samples generated by GAN-based approaches. Due to the different underlying nature between GANs and generative SDEs, we observe that state-of-the-art approaches for detecting fake images generated by GANs~\citep{wang2019cnngenerated} struggle to distinguish fake samples generated by SDE-based models. For instance, on the LSUN bedroom dataset, it only successfully detects less than $3\%$ of \model-generated images whereas being able to distinguish up to $93\%$ on GAN-based generation.
Based on these observations, we believe developing forensic methods for SDE-based models is also critical as SDE-based methods become more prevalent.

For human evaluation experiments, we leveraged Amazon Mechanical Turk (MTurk). 
For each worker, the evaluation HIT 
contains 15 pairwise comparison questions for comparing edited images. The reward per task is kept as 0.2\$.  Since each task takes around 1 minute, the wage is around 12\$ per hour. 
We provide more details on Human evaluation experiments in \cref{app:sec:mturk}.
We also note that the bias of human evaluators (MTurk workers) and the bias of users (through the input ``guidance") could potentially 
affect the evaluation metrics and results used to track the progress towards guided image synthesis and editing.

\section*{Reproducibility Statement}
\begin{itemize}
    \item [1.] Our code will be released upon publication. 
    \item [2.] We use open source datasets and SDE checkpoints on the corresponding datasets. We did not train any SDE models. 
    \item [3.] Proofs are provided in \cref{app:sec:proofs}.
    \item [4.] Extra details on \model and pseudocode are provided in \cref{app:sdedit}.
    \item [5.] Details on experimental settings are provided in \cref{app:experiment}.
    \item [6.] Extra experimental results are provided in \cref{app:results}.
    \item [7.] Details on human evaluation are provided in \cref{app:sec:mturk}.
\end{itemize}

\newpage
{\small
\bibliographystyle{iclr2022_conference}
\bibliography{iclr2022_conference}
}

\clearpage
\appendix

\section{Proofs}
\label{app:sec:proofs}
\propbound*
\begin{proof}
Denote $\rvx^{(g)}(0) = \mathrm{SDEdit}(\rvx^{(g)}; t, \theta)$, then
\begin{align}
\norm{\rvx^{(g)}(t_0) - \rvx^{(g)}(0)}_2^2 &= \norm{\int_{t_0}^{0} \frac{\mathrm{d} \rvx^{(g)}(t)}{\mathrm{d} t} \mathrm{d} t}_2^2 \\
&= \norm{\int_{t_0}^{0} \left[ - \frac{\mathrm{d} [\sigma^2(t)]}{\mathrm{d} t} s_\theta(\rvx, t; \theta)\right]  \mathrm{d} t + \sqrt{\frac{\mathrm{d} [\sigma^2(t)]}{\mathrm{d} t}}  \mathrm{d} \bar{\mathbf{w}}}_2^2\\
 & \leq \norm{\int_{t_0}^{0} \left[ - \frac{\mathrm{d} [\sigma^2(t)]}{\mathrm{d} t} s_\theta(\rvx, t; \theta)\right]  \mathrm{d} t}_2^2 + \norm{\int_{t_0}^{0} \sqrt{\frac{\mathrm{d} [\sigma^2(t)]}{\mathrm{d} t}}  \mathrm{d} \bar{\mathbf{w}}}_2^2
\end{align}
From the assumption over $s_\theta(\rvx, t; \theta)$, the first term is not greater than $$C \norm{ \int_{t_0}^{0} \left[- \frac{\mathrm{d} [\sigma^2(t)]}{\mathrm{d} t} \right] \mathrm{d} t }_2^2  = C \sigma^4(t_0),$$
where equality could only happen when each score output has a squared $L_2$ norm of $C$ and they are linearly dependent to one other.
The second term is independent to the first term as it only concerns random noise; this is equal to the squared $L_2$ norm of a random variable from a Wiener process at time $t = 0$, with marginal distribution being $\epsilon \sim \gN(\mathbf{0}, \sigma^2(t_0)\mathbf{I})$ (this marginal does not depend on the discretization steps in Euler-Maruyama).  
The squared $L_2$ norm of $\epsilon$ divided by $\sigma^2(t_0)$ is a $\chi^2$-distribution with $d$-degrees of freedom. From \citet{laurent2000adaptive}, Lemma 1, we have the following one-sided tail bound: 
\begin{align}
    \Pr(\norm{\epsilon}_2^2 / \sigma^2(t_0) \geq d + 2 \sqrt{d \cdot -\log \delta} - 2 \log \delta) \leq \exp(\log \delta) = \delta.
\end{align}
Therefore, with probability at least $(1 - \delta)$, we have that:
\begin{align}
    \norm{\rvx^{(g)}(t_0) - \rvx^{(g)}(0)}_2^2 \leq \sigma^2(t_0) (C \sigma^2(t_0) + d + 2\sqrt{- d \cdot \log \delta} - 2 \log \delta),
\end{align}
completing the proof.
\end{proof}

\section{Extra ablation studies}

\label{app:extra}
In this section, we perform extra ablation studies and analysis for \model{}.

\subsection{Analysis on the quality of user guide}

As discussed in \cref{sec:method}, if the guide is far from any realistic images (\eg, random noise or has an unreasonable composition) , then we must tolerate at least a certain level of deviation from the guide (non-faithfulness) in order to produce a realistic image.

For practical applications, we perform extra ablation studies on how the quality of guided stroke would affect the results in \cref{fig:new_weird_input}, \cref{fig:new_input_ablation} and \cref{app:tab:input_ablation}. Specifically, in \cref{fig:new_weird_input} we consider stroke input of 
1) a human face with limited detail for a CelebA-HQ model, 2) a human face with spikes for a CelebA-HQ model, 3) a building with limited detail for a LSUN-church model, 4) a horse for a LSUN-church model.
We observe that SDEdit is in general tolerant to different kinds of user inputs.
In \cref{app:tab:input_ablation}, we quantitatively analyze the effect of user guide quality using simulated stroke paintings as input. Described in \cref{app:sec:generating_stroke}, the human-stroke-simulation algorithm uses different numbers of colors to generate stroke guides with different levels of detail. We compare \model with baselines qualitatively in \cref{fig:new_input_ablation} and quantitatively in \cref{app:tab:input_ablation}.
Similarly, we observe 
that \model has a high tolerance to input guides and consistently outperforms the baselines across all setups in this experiment.

\begin{figure}[H]
    \centering
    \includegraphics[width=0.8\textwidth]{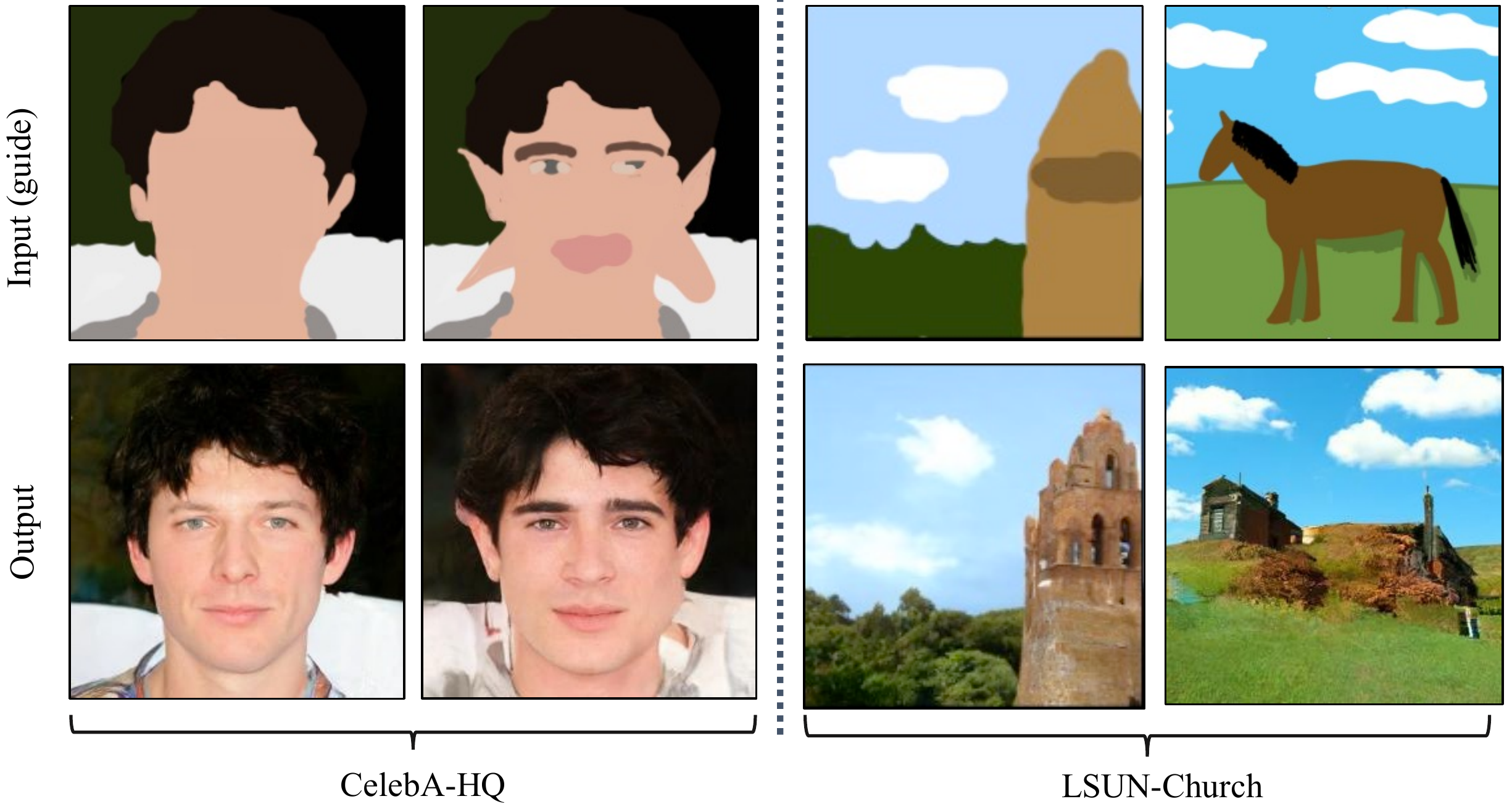}
    \caption{Analysis on the quality of user guide for stoke-based image synthesis. We observe that \model{} is in general tolerant to different kinds of user inputs.}
    \label{fig:new_weird_input}
\end{figure}

\begin{figure}[H]
    \centering
    \includegraphics[width=0.8\textwidth]{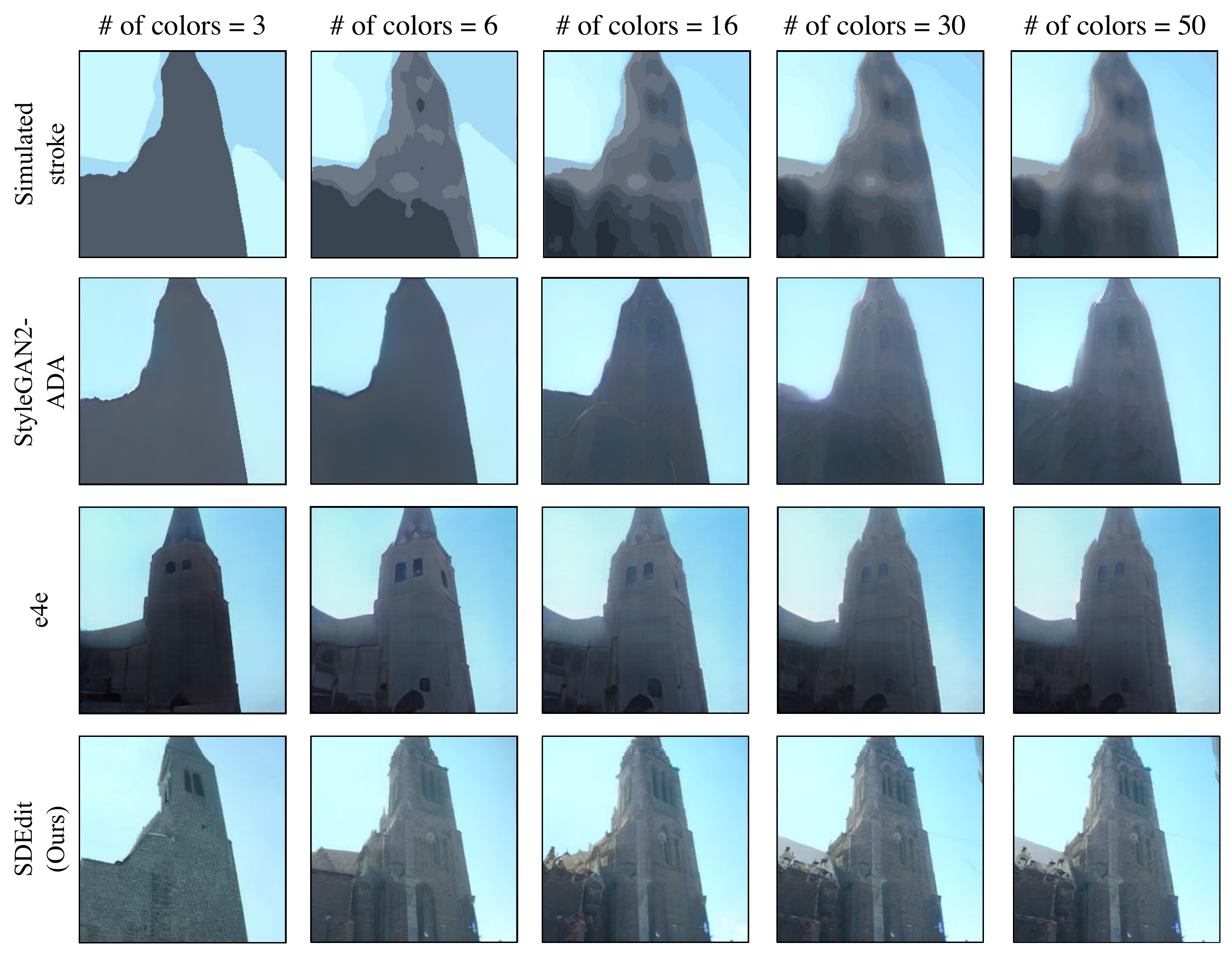}
    \caption{Analysis on the quality of user guide for stoke-based image synthesis. We observe that \model{} is in general tolerant to different kinds of user inputs.}
    \label{fig:new_input_ablation}
\end{figure}

\begin{figure}
 \newcommand{\mround}[1]{\round{#1}{2}}
    \newcommand{\nastar}{\multicolumn{1}{c}{\hspace{15pt}--~*}}
    \newcommand{\na}{\multicolumn{1}{c}{--}}
    \centering %
    {
        \centering %
        \begin{adjustbox}{max width=0.99\linewidth}
\begin{tabular}{@{}ccccccc@{}}
\toprule
\multirow{2}{*}{\# of colors} & \multicolumn{2}{c}{StyleGAN2-ADA} & \multicolumn{2}{c}{e4e} & \multicolumn{2}{c}{SDEdit (Ours)} \\ \cmidrule(l){2-7} 
& KID $\downarrow$  & $L_2$ $\downarrow$  & KID $\downarrow$  & $L_2$ $\downarrow$       & KID $\downarrow$  & $L_2$ $\downarrow$             \\ \midrule
3                             & 0.1588           & 67.22          & 0.0379      & 70.73     & \textbf{0.0233}           & \textbf{36.00}          \\
6                             & 0.1544           & 72.41          & 0.0354      & 68.53     & \textbf{0.0156}           & \textbf{37.67}          \\
16                            & 0.0923           & 69.52          & 0.0319      & 68.20     & \textbf{0.0135}           & \textbf{37.70}          \\
30                            & 0.0911           & 67.11          & 0.0304      & 68.66     & \textbf{0.0128}           & \textbf{37.42}          \\
50                            & 0.0922           & 65.28          & 0.0307      & 68.80     & \textbf{0.0126}           & \textbf{37.40}          \\ \bottomrule
\end{tabular}
        \end{adjustbox}
}
\captionof{table}{
We compare \model with baselines quantitatively on LSUN-church dataset on stroke-based generation. 
``\# of colors" denotes the number of colors used to generate the synthetic stroke paintings, with fewer colors corresponding to a less accurate and less detailed input guide (see \cref{fig:new_input_ablation}). We observe that \model consistently achieves more realistic and more faithful outputs and outperforms the baselines across all setups.
}
\label{app:tab:input_ablation}
\end{figure}

\FloatBarrier
\subsection{Flexible image editing with \model{}}

In this section, we perform extra image editing experiments including editing closing eyes~\cref{fig:new_flexible_editing_closing}, opening mouth, and changing lip color~\cref{fig:new_flexible_editing}. We observe that \model can still achieve reasonable editing results, which shows that \model{} is capable of flexible image editing tasks.

\begin{figure}[H]
    \centering
    \includegraphics[width=0.56\textwidth]{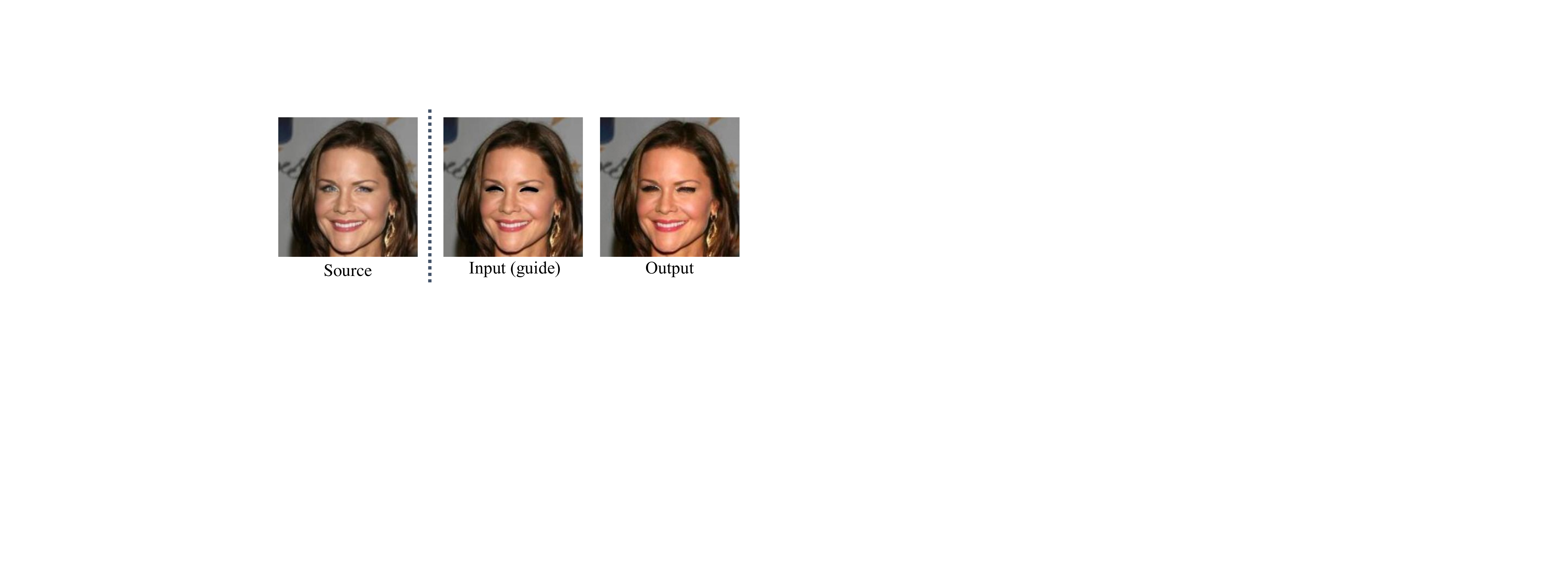}
    \caption{Flexible image editing on closing eyes with \model{}.}
    \label{fig:new_flexible_editing_closing}
\end{figure}

\begin{figure}[H]
    \centering
    \includegraphics[width=0.7\textwidth]{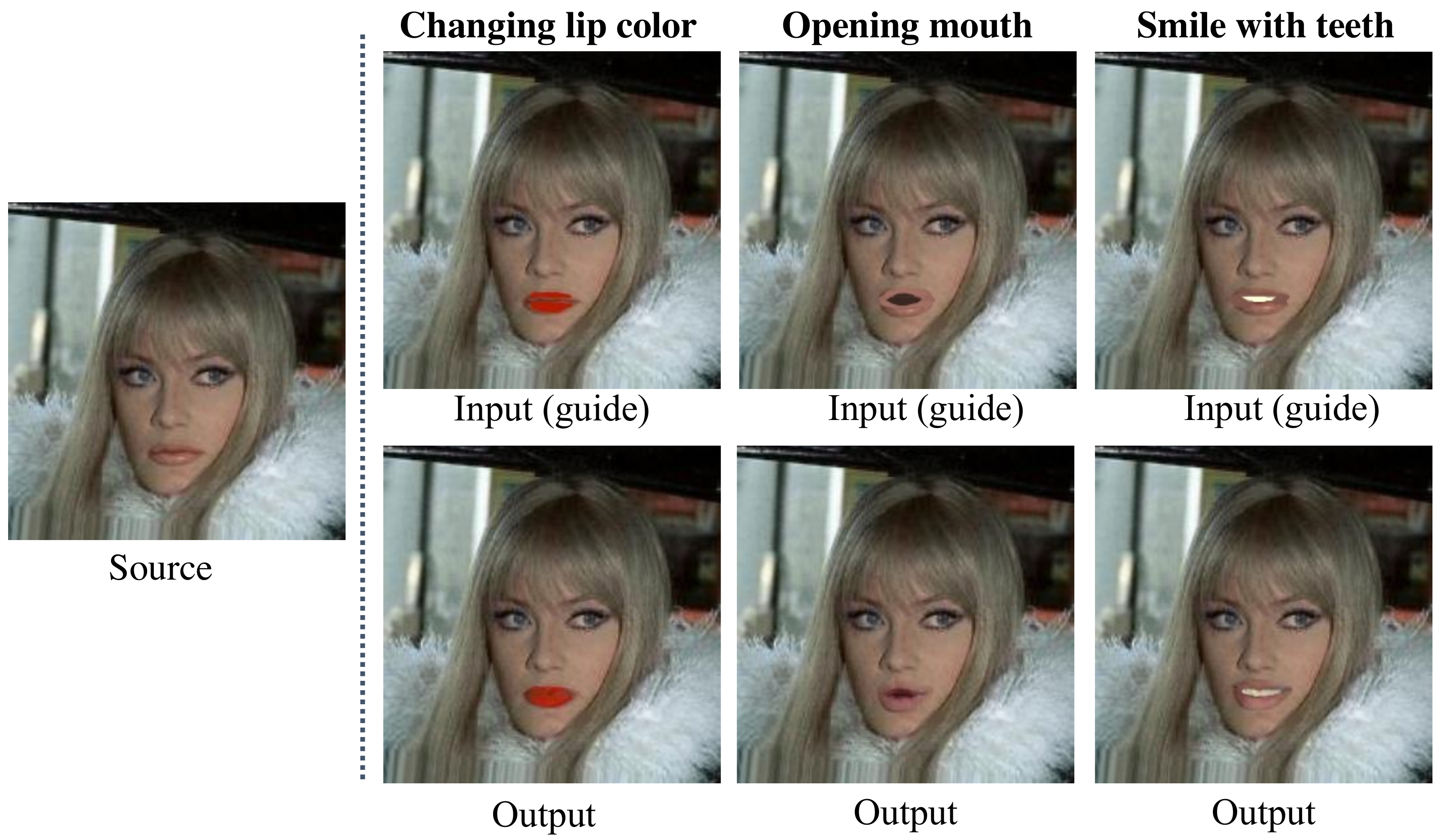}
    \caption{Flexible image editing on mouth with \model{}.}
    \label{fig:new_flexible_editing}
\end{figure}

\subsection{Analysis on $t_0$}
In this section, we provide extra analysis on the effect of $t_0$ (see \cref{fig:new_fig1_analysis}). 
As illustrated in \cref{fig:sde_tradeoff}, we can tune $t_0$ to tradeoff between faithfulness and realism---with a smaller $t_0$ corresponding to a more faithful but less realistic generated image. If we want to keep the brown stroke in \cref{fig:new_fig1_analysis}, we can reduce $t_0$ to increase its faithfulness which could potentially decrease its realism. Additional analysis can be found in \cref{app:sec:real-faith-trade-off}.

\begin{figure}[H]
    \centering
    \includegraphics[width=\textwidth]{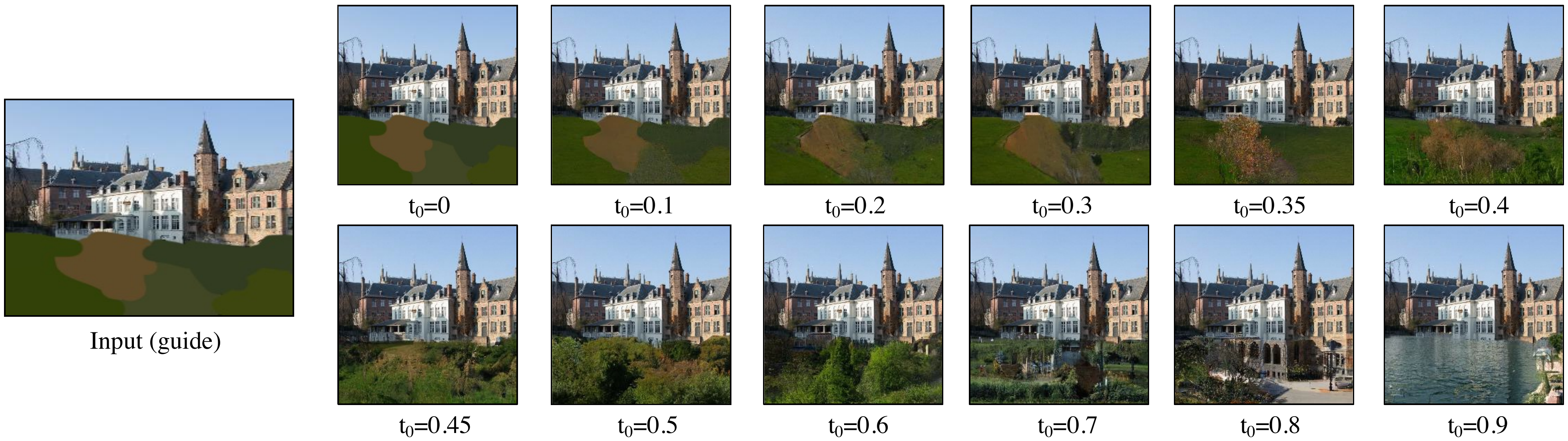}
    \caption{Extra analysis on $t_0$. As $t_0$ increases, the generated images become {{more realistic}}
while {{less faithful}}.}
    \label{fig:new_fig1_analysis}
\end{figure}

\subsection{Extra comparison with other baselines}

We perform extra comparison with SC-FEGAN~\citep{jo2019sc} in \cref{fig:new_baseline}. We observe that \model{} is able to have more realistic results than SC-FEGAN~~\citep{jo2019sc} when using {the same} stroke input guide. We also present results for SC-FEGAN~\citep{jo2019sc} where we use extra sketch together with stroke as the input guide (see \cref{fig:new_baseline1}). We observe that \model{} is still able to outperform SC-FEGAN in terms of realism even when SC-FEGAN is using both sketch and stroke as the input guide. 

\begin{figure}[H]
    \centering
    \includegraphics[width=0.8\textwidth]{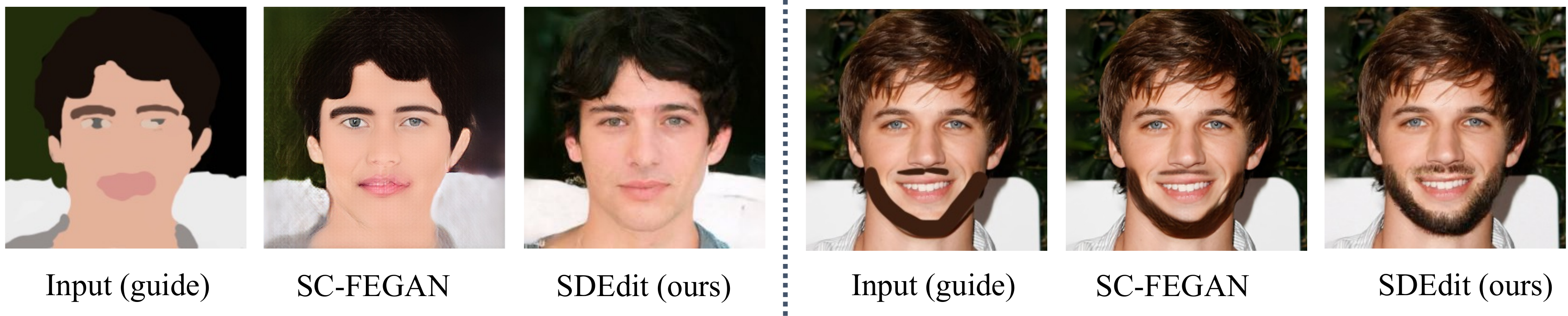}
    \caption{Comparison with SC-FEGAN~\citep{jo2019sc} on stroke-based image synthesis and editing. We observe that \model{} is able to generate more realistic results than SC-FEGAN.}
    \label{fig:new_baseline}
\end{figure}

\begin{figure}[H]
    \centering
    \includegraphics[width=0.47\textwidth]{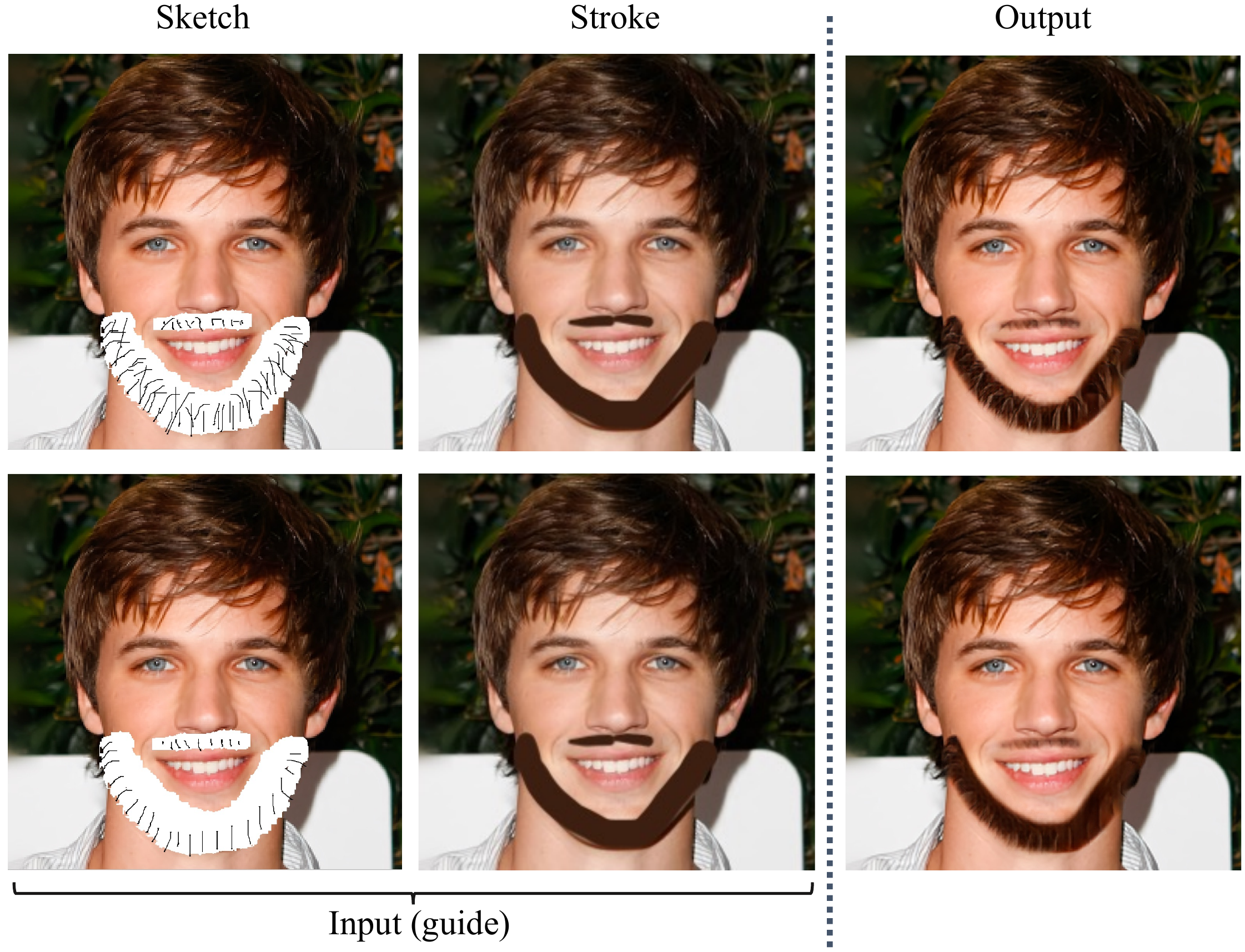}
    \caption{Stroke-based editing for SC-FEGAN~\citep{jo2019sc} using both stroke and extra sketch as the input guide. 
    We observe that \model{} still outperforms SC-FEGAN using only stroke as the input guide.}
    \label{fig:new_baseline1}
\end{figure}

\subsection{Comparison with \cite{song2021scorebased}}
Methods proposed by \cite{song2021scorebased} introduce an extra noise-conditioned classifier for conditional generation and the performance of the classifier is critical to the conditional generation performance. Their settings are more similar to regular inverse problems where the measurement function is known, which is discussed in \cref{sec:method}. Since we do not have a known ``measurement" function for user-generated guides, their approach cannot be directly applied to user-guided image synthesis or editing in the form of manipulating pixel RGB values. To deal with this limitation, \model{} initializes the reverse SDE based on user input and modifies $t_0$ accordingly---an approach different from \cite{song2021scorebased} (which always have the same initialization). This technique allows \model{} to achieve faithful and realistic image editing or generation results without extra task-specific model learning (\eg, an additional classifier in \cite{song2021scorebased}).

For practical applications, we compare with  \cite{song2021scorebased} on stroke-based image synthesis and editing where
we do not learn an extra noise-conditioned classifier (see \cref{fig:new_baseline_ncsn}). In fact, we are also unable to learn the noise-conditioned classifier since we do not have a known ``measurement" function for user-generated guides and we only have one random user input guide instead of a dataset of input guide.
We observe that this application of \cite{song2021scorebased} fails to generate faithful results by performing random inpainting (see \cref{fig:new_baseline_ncsn}). \model{}, on the other hand, generates both realistic and faithful images without learning extra task-specific models (\eg, an additional classifier)
and can be directly applied to pretrained SDE-based generative models, allowing for guided image synthesis and editing using SDE-based models.
We believe this shows the novelty and contribution of \model{}.

\begin{figure}[H]
    \centering
    \includegraphics[width=0.8\textwidth]{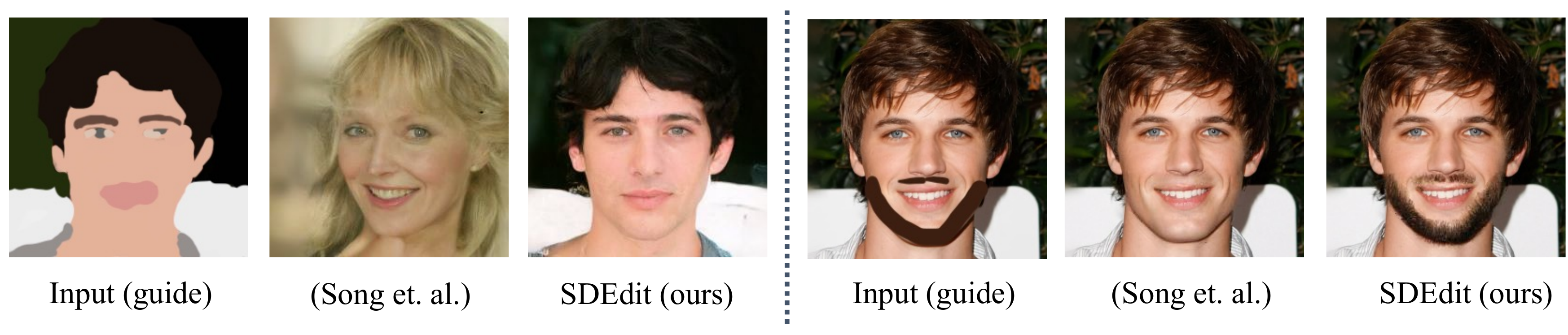}
    \caption{Comparison with \cite{song2021scorebased} on stroke-based image synthesis and editing. We observe that \model{} is able to generate more faithful results than \cite{song2021scorebased} without training an extra task-specific model (\eg, an additional classifier).}
    \label{fig:new_baseline_ncsn}
\end{figure}

\section{Details on \model}
\label{app:sdedit}
\subsection{Details on the VP and VE SDEs}
We follow the definitions of VE and VP SDEs in \cite{song2021scorebased}, and adopt the same settings therein. 
\paragraph{VE-SDE}
In particular, for the VE SDE, we choose
\begin{align*}
    \sigma(t) = \begin{cases}
    0, &\quad t = 0\\
    \sigma_\text{min} \Big(\frac{\sigma_\text{max}}{\sigma_\text{min}} \Big)^t, &\quad t > 0
    \end{cases}
\end{align*}
where $\sigma_\text{min} = 0.01$ and $\sigma_\text{max} = 380$, $378$, $348$, $1348$ for LSUN churches, bedroom, FFHQ/CelebA-HQ $256\times 256$, and FFHQ $1024\times 1024$ datasets respectively.

\paragraph{VP-SDE}
For the VP SDE, it takes the form of
\begin{align}
    \ud \rvx(t) = -\frac{1}{2}\beta(t)\rvx(t) \ud t + \sqrt{\beta(t)} \ud \rvw(t),
\end{align}
where $\beta(t)$ is a positive function. In experiments, we follow \cite{song2021scorebased,ho2020denoising, dhariwal2021diffusion} and set
\begin{align*}
    \beta(t) = \beta_\text{min} + t(\beta_\text{max} - \beta_\text{min}),
\end{align*}
For SDE trained by \cite{song2021scorebased,ho2020denoising} we use 
$\beta_\text{min} = 0.1$ and $\beta_\text{max} = 20$; for SDE trained by \cite{dhariwal2021diffusion}, the model learns to rescale the variance based on the same choices of $\beta_\text{min}$ and $\beta_\text{max}$.
We always have $p_1(\rvx) \approx \mcal{N}(\bm{0}, \bm{I})$ under these settings.

Solving the reverse VP SDE is similar to solving the reverse VE SDE. Specifically, we follow the iteration rule below:
\begin{equation}
\rvx_{n-1} = \frac{1}{\sqrt{1 - \beta(t_n)\Delta t}}(\rvx_n + \beta(t_{n})\Delta t \vs_\vtheta(\rvx(t_n), t_n))\\
+ \sqrt{\beta(t_n)\Delta t}~\rvz_n,
\end{equation}
where $\rvx_N \sim \mcal{N}(\bm{0}, \bm{I})$, $\rvz_n \sim \mcal{N}(\bm{0}, \bm{I})$ and $ n= N, N-1, \cdots, 1$.

\subsection{Details on Stochastic Differential Editing}
\label{app:sec:algorithm}
In generation the process detailed in \cref{alg:synthesis_ve} can also be repeated  for
$K$ number of times as detailed in \cref{alg:app:synthesis_ve}. Note that 
\cref{alg:synthesis_ve} is a special case of \cref{alg:app:synthesis_ve}:
when $K=1$, we recover \cref{alg:synthesis_ve}.
For VE-SDE, \cref{alg:app:synthesis_ve}
converts a stroke painting to a photo-realistic image, which typically modifies all pixels of the input. However, in cases such as image compositing and stroke-based editing, certain regions of the input are already photo-realistic and therefore we hope to leave these regions intact. To represent a specific region, we use a binary mask $\bm{\Omega} \in \{0, 1\}^{C\times H \times W}$ that evaluates to $1$ for editable pixels and $0$ otherwise. We can generalize \cref{alg:app:synthesis_ve} to restrict editing in the region defined by $\bm{\Omega}$.

For editable regions, we perturb the input image with the forward SDE and generate edits by reversing the SDE, using the same procedure in \cref{alg:app:synthesis_ve}. For uneditable regions, we perturb it as usual but design the reverse procedure carefully so that it is guaranteed to recover the input. Specifically, suppose $\x \in \mbb{R}^{C\times H \times W}$ is an input image of height $H$, width $W$, and with $C$ channels. Our algorithm first perturbs $\x(0) = \x$ with an SDE running from $t=0$ till $t = t_0$ to obtain $\x(t_0)$.
Afterwards, we denoise $\x(t_0)$ with separate methods for $\bm{\Omega} \odot \rvx(t)$ and $(\bm{1} - \bm{\Omega}) \odot \rvx(t)$, where $\odot$ denotes the element-wise product and $0 \leq t \leq t_0$. For $\bm{\Omega} \odot \rvx(t)$, we simulate the reverse SDE~\citep{song2021scorebased} and project the results by element-wise multiplication with $\bm{\Omega}$. For $(\bm{1} - \bm{\Omega}) \odot \rvx(t)$, we set it to $(\bm{1} - \bm{\Omega})\odot (\x + \sigma(t) \z)$, where $\z \sim \mcal{N}(\bm{0}, \bm{I})$. Here we gradually reduce the noise magnitude according to $\sigma(t)$ to make sure $\bm{\Omega} \odot \x(t)$ and $(\bm{1} - \bm{\Omega})\odot \x(t)$ have comparable amount of noise. Moreover, since $\sigma(t) \to 0$ as $t\to 0$, this ensures that $(\bm{1} - \bm{\Omega}) \odot \x(t)$ converges to $(\bm{1} -\bm{\Omega})\odot \x$, keeping the uneditable part of $\x$ intact.
The complete SDEdit method (for VE-SDEs) is given in \cref{alg:sdeit}. We provide algorithm for VP-SDEs in \cref{app:alg:vp}
and the corresponding 
masked version in \cref{app:alg:vp_mask}.

With different inputs to \cref{alg:sdeit} or \cref{app:alg:vp_mask}, we can perform multiple image synthesis and editing tasks with a single unified approach, including but not limited to the following:
\begin{itemize}
    \item \textbf{Stroke-based image synthesis:} We can recover \cref{alg:app:synthesis_ve} or \cref{app:alg:vp} by setting all entries in $\bm{\Omega}$ to 1.
    \item \textbf{Stroke-based image editing:} Suppose $\x^{(g)}$ is an image marked by strokes, and $\bm{\Omega}$ masks the part that are not stroke pixels. We can reconcile the two parts of $\x^{(g)}$ with \cref{alg:sdeit} to obtain a photo-realistic image.
    
    \item \textbf{Image compositing:} Suppose $\x^{(g)}$ is an image superimposed by elements from two images, and $\bm{\Omega}$ masks the region that the users do not want to perform editing, we can perform image compositing with \cref{alg:sdeit} or \cref{app:alg:vp_mask}.
\end{itemize}

\begin{algorithm}[H]
  \caption{Guided image synthesis and editing (VE-SDE)}\label{alg:app:synthesis_ve}
  \begin{algorithmic}
  \Require{$\x^{(g)}$ (guide), $t_0$ (SDE hyper-parameter), $N$ (total denoising steps), $K$ (total repeats)
  }
  \State{$\Delta t \gets \frac{t_0}{N}$}
  \For{$k \gets 1$ \textbf{to} $K$}
    \State{$\rvz \sim \mcal{N}(\bm{0}, \bm{I})$}
    \State{$\rvx \gets \rvx + \sigma(t_0) \rvz$}
    \For{$n \gets N$ \textbf{to} $1$}
        \State{$t \gets t_0\frac{n}{N}$}
        \State{$\rvz \sim \mcal{N}(\bm{0}, \bm{I})$} 
        \State{$\epsilon \gets \sqrt{\sigma^2(t) - \sigma^2(t - \Delta t)}$}
        \State{$\rvx \gets \rvx + \epsilon^2 \vs_\vtheta(\rvx, t) + \epsilon \rvz$}
    \EndFor
  \EndFor
  \State{\textbf{Return} $\rvx$}
  \end{algorithmic}
\end{algorithm}

\algrenewcommand\algorithmicindent{0.7em}%
\begin{algorithm}[H]
  \caption{Guided image synthesis and editing with mask (VE-SDE) 
  }\label{alg:sdeit}
  \begin{algorithmic}
  \Require{$\x^{(g)}$ (guide), %
  $\bm{\Omega}$ (mask for edited regions), $t_0$ (SDE hyper-parameter), $N$ (total denoising steps), $K$ (total repeats)
  }
  \State{$\Delta t \gets \frac{t_0}{N}$}
  \State{$\rvx_0 \gets \rvx$}
  \For{$k \gets 1$ \textbf{to} $K$}
    \State{$\rvz \sim \mcal{N}(\bm{0}, \bm{I})$}
    \State{$\rvx \gets (\bm{1}-\bm{\Omega}) \odot \rvx_0 + \bm{\Omega} \odot \rvx + \sigma(t_0) \rvz$}
    \For{$n \gets N$ \textbf{to} $1$}
        \State{$t \gets t_0\frac{n}{N}$}
        \State{$\rvz \sim \mcal{N}(\bm{0}, \bm{I})$} 
        \State{$\epsilon \gets \sqrt{\sigma^2(t) - \sigma^2(t - \Delta t)}$}
        \State{$\rvx \gets (\bm{1}-\bm{\Omega}) \odot (\rvx_0 + \sigma(t) \rvz) + \bm{\Omega} \odot (\rvx + \epsilon^2 \vs_\vtheta(\rvx, t) + \epsilon \rvz)$}
    \EndFor
  \EndFor
  \State{\textbf{Return} $\rvx$}
  \end{algorithmic}
\end{algorithm}

\algrenewcommand\algorithmicindent{0.7em}%
\begin{algorithm}[H]
  \caption{Guided image synthesis and editing (VP-SDE)}\label{alg:synthesis_vp}
  \begin{algorithmic}
   \Require{$\x^{(g)}$ (guide), $t_0$ (SDE hyper-parameter), $N$ (total denoising steps), $K$ (total repeats)
  }
  \State{$\Delta t \gets \frac{t_0}{N}$}
  \State{$\alpha(t_0) \gets \prod_{n=1}^{N} (1-\beta(\frac{nt_0}{N})\Delta t)$}
  \For{$k \gets 1$ \textbf{to} $K$}
    \State{$\rvz \sim \mcal{N}(\bm{0}, \bm{I})$}
    \State{$\rvx \gets \sqrt{\alpha(t_0)}\rvx + \sqrt{1-\alpha(t_0)}\rvz$}
    \For{$n \gets N$ \textbf{to} $1$}
        \State{$t \gets t_0 \frac{n}{N}$}
        \State{$\rvz \sim \mcal{N}(\bm{0}, \bm{I})$}
        \State{$\rvx \gets \frac{1}{\sqrt{1 - \beta(t)\Delta t}}(\rvx + \beta(t)\Delta t \vs_\vtheta(\rvx, t)) + \sqrt{\beta(t)\Delta t}~\rvz$}
    \EndFor
  \EndFor
  \State{\textbf{Return} $\rvx$}
  \end{algorithmic}
  \label{app:alg:vp}
\end{algorithm}
\begin{algorithm}[H]
  \caption{Guided image synthesis and editing with mask (VP-SDE)}\label{alg:editing_vp}
  \begin{algorithmic}
  \Require{$\x^{(g)}$ (guide), 
  $\bm{\Omega}$ (mask for edited regions), $t_0$ (SDE hyper-parameter), $N$ (total denoising steps), $K$ (total repeats)
  }
  \State{$\Delta t \gets \frac{t_0}{N}$}
  \State{$\x_0 \gets \x$}
  \State{$\alpha(t_0) \gets \prod_{i=1}^{N} (1-\beta(\frac{it_0}{N})\Delta t)$}
  \For{$k \gets 1$ \textbf{to} $K$}
    \State{$\rvz \sim \mcal{N}(\bm{0}, \bm{I})$}
    \State{$\rvx \gets  [(\bm{1} - \Omega) \odot \sqrt{\alpha(t_0)} \x_0 + \Omega \odot \sqrt{\alpha(t_0)} \x + \sqrt{1-\alpha(t_0)}\rvz]$}
    \For{$n \gets N$ \textbf{to} $1$}
        \State{$t \gets t_0 \frac{n}{N}$}
        \State{$\rvz \sim \mcal{N}(\bm{0}, \bm{I})$}
        \State{$\alpha(t) \gets \prod_{i=1}^{n} (1-\beta(\frac{it_0}{N})\Delta t)$}
        \State{$\rvx \gets \Big\{(\bm{1} - \Omega) \odot (\sqrt{\alpha(t)}\x_0 + \sqrt{1-\alpha(t)} \rvz) + \Omega \odot \Big[\frac{1}{\sqrt{1 - \beta(t)\Delta t}}( \rvx + \beta(t)\Delta t \vs_\vtheta(\rvx, t)) + \sqrt{\beta(t)\Delta t}~\rvz)\Big]\Big\}$}
    \EndFor
  \EndFor
  \State{\textbf{Return} $\rvx$}
  \end{algorithmic}
  \label{app:alg:vp_mask}
\end{algorithm}

\section{Experimental settings}
\label{app:experiment}
\subsection{Implementation details}
Below, we add additional implementation details for each application. We use publicly available pretrained SDE checkpoints provided by~\citeauthor{song2021scorebased,ho2020denoising,dhariwal2021diffusion}. %
Our code  will be publicly available upon publication. 
\myparagraph{Stroke-based image synthesis.} 
In this experiment, we use $K=1, N=500$, $t_0=0.5$, for SDEdit (VP).
We find that $K=1$ to $3$ work reasonably well, with larger $K$ generating more realistic images but at a higher computational cost.

For StyleGAN2-ADA, in-domain GAN and e4e, we use the official implementation with default parameters to project each input image into the latent space, and subsequently use the obtained latent code to produce stroke-based image samples.

\myparagraph{Stroke-based image editing.}
We use $K=1$ in the experiment for SDEdit (VP). We use $t_0=0.5$, $N=500$ for SDEdit (VP), and $t_0 = 0.45$, $N=1000$ for SDEdit (VE).

\myparagraph{Image compositing.}
We use CelebA-HQ (256$\times$256)~\citep{karras2017progressive} for image compositing experiments.
More specifically, given an image from CelebA-HQ, the user will copy pixel patches from other reference images, and also specify the pixels they want to perform modifications, which will be used as the mask in \cref{alg:sdeit}. In general, the masks are simply the pixels the users have copied pixel patches to.
We focus on editing hairstyles and adding glasses. 
We use an SDEdit model pretrained on FFHQ~\citep{karras2019style}.
We use $t_0=0.35$, $N=700$, $K=1$ for SDEdit (VE).
We present more results in Appendix~\ref{app:image_compositing}.

\subsection{Synthesizing stroke painting}
\label{app:sec:generating_stroke}
\paragraph{Human-stroke-simulation algorithm}
We design a human-stroke-simulation algorithm in order to perform large scale quantitative analysis on stroke-based generation. Given a 256$\times$256 image, we first apply a median filter with kernel size 23 to the image, then reduce the number of colors to 6 using the adaptive palette. We use this algorithm on the validation set of LSUN bedroom and LSUN church outdoor, and subset of randomly selected 6000 images in the CelebA (256$\times$256) test set to produce the stroke painting inputs for \cref{fig:kid_l2}, \cref{tab:stroke_lsun_synthetic} and \cref{tab:att_cls}. Additionally \cref{fig:auto_stroke_bedroom}, \cref{fig:auto_stroke_church} and \cref{fig:auto_stroke_celeba} show examples of the ground truth images, synthetic stroke paintings, and the corresponding generated images by SDEdit. The simulated stroke paintings resemble the ones drawn by humans and SDEdit is able to generate high quality images based on this synthetic input, while the baselines fail to obtain comparable results.

\paragraph{KID evaluation}
 KID is calculated between the real image from the validation set and the generated images using synthetic stroke paintings (based on the validation set), and the squared $L_2$ distance is calculated between the simulated stroke paintings and the generated images.

\paragraph{Realism-faithfulness trade-off} 
\label{app:sec:real-faith-trade-off}
To search for the sweet spot for realism-faithfulness trade-off as presented in Figure \ref{fig:kid_l2}, we select $0.01$ and every $0.1$ interval from $0.1$ to $1$
for $t_0$ and generate images for the LSUN church outdoor dataset. We apply the human-stroke-simulation algorithm on the original LSUN church outdoor validation set and generate one stroke painting per image to produce the same input stroke paintings for all choices of $t_0$. As shown in Figure \ref{fig:auto_stroke_t}, this algorithm is sufficient to simulate human stroke painting and we can also observe the realism-faithfulness trade-off given the same stroke input. KID is calculated between the real image from the validation set and the generated images, and the squared $L_2$ distance is calculated between the simulated stroke paintings and the generated images.

\subsection{Training and inference time}
We use open source pretrained SDE models provided by \citeauthor{song2021scorebased,ho2020denoising,dhariwal2021diffusion}. 
In general, VP and VE have comparable speeds, and can be slower than encoder-based GAN inversion methods. 
For scribble-based generation on 256$\times$256 images, \model takes 29.1s to generate one image 
 on one 2080Ti GPU.
In comparison, StyleGAN2-ADA~\citep{karras2020training} takes around 72.8s and In-domain GAN 2~\citep{zhu2020domain} takes 5.2s using the same device and setting. We note that our speed is in general faster than optimization-based GAN inversions while slower than encoder-based GAN inversions. The speed of \model{} could be improved by recent works on faster SDE sampling.

\section{Extra experimental results}
\label{app:results}
\subsection{Extra results on LSUN datasets}

\myparagraph{Stroke-based image generation.}
We present more SDEdit (VP) results on LSUN bedroom in \cref{fig:lsun_stroke_generation_app}. We use $t_0=0.5$, $N=500$, and $K=1$. We observe that, SDEdit is able to generate realistic images that share the same structure as the input paintings when \emph{no paired data is provided}.

\myparagraph{Stroke-based image editing.}
We present more SDEdit (VP) results on LSUN bedroom in \cref{fig:lsun_stroke_edit_app}. SDEdit generates image edits that are both realistic and faithful to the user edit, while avoids making undesired modifications on pixels not specified by users. See Appendix~\ref{app:experiment} for experimental settings.

\subsection{Extra results on Face datasets}

\myparagraph{Stroke-based image editing.}
We provide intermediate step visualizations for SDEdit in \cref{fig:stroke_visualization_app}.
We present extra SDEdit results on CelebA-HQ in \cref{fig:celeba_stroke_edit_app}.  We also presents results on CelebA-HQ (1024$\times$1024) in \cref{fig:celeba_1024_scribble_app}.
SDEdit generates images that are both realistic and faithful (to the user edit), while avoids introducing undesired modifications on pixels not specified by users.
We provide experiment settings in Appendix~\ref{app:experiment}.

\myparagraph{Image compositing.}
\label{app:image_compositing}
We focus on editing hair styles and adding glasses.
We present more SDEdit (VE) results on CelebA-HQ (256$\times$256) in \cref{fig:brown_hair_app}, \cref{fig:glasses_app}, and \cref{fig:blond_hair_app}. We also presents results on CelebA-HQ (1024$\times$1024) in \cref{fig:celeba_1024_app}. We observe that SDEdit can generate both faithful and realistic edited images.
See Appendix~\ref{app:experiment} for experiment settings.

\myparagraph{Attribute classification with stroke-based generation.}
\label{app:att_cls}
In order to further evaluate how the models convey user intents with high level user guide, we perform attribute classification on stroke-based generation for human faces. We use the human-stroke-simulation algorithm on a subset of randomly selected 6000 images from CelebA (256$\times$256) test set to create the stroke inputs, and apply Microsoft Azure Face API\footnote{\url{https://github.com/Azure-Samples/cognitive-services-quickstart-code/tree/master/python/Face}} to detect fine-grained face attributes from the generated images. We choose gender and glasses to conduct binary classification, and hair color to perform multi-class classification on the images. Images where no face is detected will be counted as providing false and to the classification problems. \cref{tab:att_cls} shows the classification accuracy, and SDEdit (VP) outperforms all other baselines in all attributes of choice.

\subsection{Class-conditional generation with stroke painting}
\label{app:class_dependent}
In addition to user guide, SDEdit is able to also leverage other auxiliary information and models to obtain further control of the generation. Following \cite{song2021scorebased} and \cite{dhariwal2021diffusion}, we present an \textbf{extra} experiment on class-conditional generation with SDEdit. Given a time-dependent classifier $p_t(\rvy\mid\rvx)$, for SDEdit (VE) one can solve the reverse SDE:
\begin{align}
    \mathrm{d} \rvx(t) = \left[ - \frac{\mathrm{d} [\sigma^2(t)]}{\mathrm{d} t} (\nabla_\rvx \log p_t(\rvx) + \nabla_\rvx \log p_t(\rvy\mid\rvx))\right] \mathrm{d}t + \sqrt{\frac{\mathrm{d} [\sigma^2(t)]}{\mathrm{d} t}} \mathrm{d} \bar{\mathbf{w}} \label{eq:class-dependent-sde-ve}
\end{align}
and use the same sampling procedure defined in Section \ref{sec:method}.

For SDEdit (VP), we follow the class guidance setting in \cite{dhariwal2021diffusion} and solve:
\begin{equation}
\rvx_{n-1} = \frac{1}{\sqrt{1 - \beta(t_n)\Delta t}}(\rvx_n + \beta(t_{n})\Delta t \vs_\vtheta(\rvx(t_n), t_n)) \\
+ \beta(t_n)\Delta t\nabla_\rvx \log p_t(\rvy\mid\rvx_n) \\ + \sqrt{\beta(t_n)\Delta t}~\rvz_n,
\end{equation}

\cref{fig:class_conditional} shows the ImageNet (256$\times$256) class-conditional generation results using SDEdit (VP). Given the same stroke inputs, SDEdit is capable of generating diverse results that are consistent with the input class labels.

\subsection{Extra datasets}
\label{app:cat_horse}
We present additional stroke-based image synthesis results on LSUN cat and horse dataset for SDEdit (VP).
\cref{fig:cat_horse} presents the image generation results based on input stroke paintings with various levels of details. We can observe that SDEdit produce images that are both realistic and faithful to the stroke input on both datasets. Notice that for coarser guide (\eg the third row in \cref{fig:cat_horse}), we choose to slightly sacrifice faithfulness in order to obtain more realistic images by selecting a larger $t_0=0.6$, while all the other images in \cref{fig:cat_horse} are generated with $t_0=0.5$.

\subsection{Extra results on baselines}
\model  preserves  the  un-masked regions automatically, while GANs do not. We tried post-processing samples from GANs by masking out undesired changes, yet the artifacts are strong at the boundaries. We further tried blending on GANs (GAN blending) with StyleGAN2-ADA, but the artifacts are still distinguishable (see \cref{fig:app:gan_blending}).

\begin{figure}
 \vspace{-20pt}
     \begin{subfigure}[h]{0.22\linewidth}
         \centering
    \includegraphics[width=\linewidth]{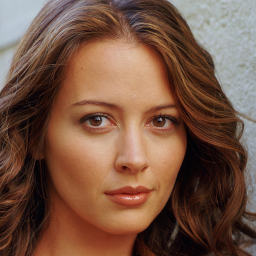}
\caption{Dataset image
}
     \end{subfigure}
      \hfill
    \begin{subfigure}[h]{0.22\linewidth}
         \centering
    \includegraphics[width=\linewidth]{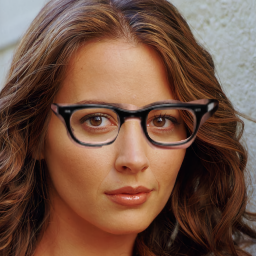}
\caption{User guide
}
     \end{subfigure}
      \hfill
 \begin{subfigure}[h]{0.22\linewidth}
     \centering
     \includegraphics[width=\linewidth]{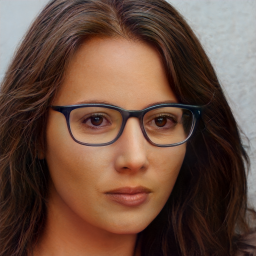}
\caption{GAN output}
     \end{subfigure}
     \hfill
 \begin{subfigure}[h]{0.22\linewidth}
     \centering
     \includegraphics[width=\linewidth]{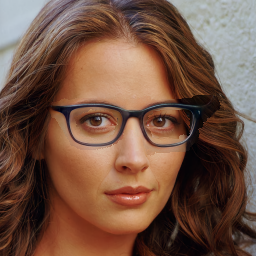}
\caption{
GAN blending
}
\end{subfigure}
\caption{Post-processing samples from GANs by masking out undesired changes, yet the artifacts are strong at the boundaries even with blending.
}
\label{fig:app:gan_blending}
\end{figure}

\begin{table*}[]
 \newcommand{\mround}[1]{\round{#1}{2}}
    \newcommand{\nastar}{\multicolumn{1}{c}{\hspace{15pt}--~*}}
    \newcommand{\na}{\multicolumn{1}{c}{--}}
    \centering \small
    {
        \centering \small
        
        \resizebox{
          \ifdim\width>\textwidth
            \textwidth
          \else
            0.95\width
          \fi
        }{!}{%
\begin{tabular}{@{}lrrrrr@{}}
\toprule
Methods        & \multicolumn{1}{l}{Gender} & \multicolumn{1}{l}{Glasses} & \multicolumn{1}{l}{Hair - Blond} & \multicolumn{1}{l}{Hair - Black} & \multicolumn{1}{l}{Hair - Grey} \\ \midrule
In-domain GAN 1 & 0.5232                     & 0.6355                      & 0.5527                           & 0.5722                           & 0.5398                          \\
In-domain GAN 2 & 0.0202                     & 0.0273                      & 0.1806                           & 0.3158                           & 0.0253                          \\
StyleGAN2-ADA & 0.0127                     & 0.0153                      & 0.1720                           & 0.3105                           & 0.0145                          \\
e4e           & 0.6175                     & 0.6623                      & 0.6731                           & 0.6510                           & 0.7233                          \\
SDEdit (ours) & \textbf{0.8147}            & \textbf{0.9232}             & \textbf{0.8487}                  & \textbf{0.7490}                  & \textbf{0.8928}                 \\ \bottomrule
\end{tabular}}
    }
    \caption{Attribute classification results with simulated stroke  inputs on CelebA. \model (VP) outperforms all baseline methods in all attribute selected in the experiment. Details can be found in Appendix \ref{app:att_cls}.
    }
    \vspace{-10pt}
    \label{tab:att_cls}%
\end{table*}

\section{Human evaluation}
\label{app:sec:mturk}
\subsection{Stroke-based image generation}
Specifically, we synthesize a total of 400 bedroom images from stroke paintings for each method. 
To quantify sample quality, we ask the workers to perform a total of 1500 pairwise comparisons against \model
to determine which image sample looks more realistic. 
Each evaluation HIT contains 15 pairwise comparisons against \model, and we perform 100 such evaluation tasks. The reward per task is kept as 0.2\$. Since each task takes around 1 min, the wage is around 12\$ per hour.
For each question, the workers will be shown two images: one generated image from \model and the other from the baseline model using the same input. The instruction is: ``Which image do you think is \textbf{more realistic}" (see \cref{fig:mturk_realistic1} and \cref{fig:mturk_realistic2}).
\begin{figure}
    \centering
    \includegraphics[width=0.6\textwidth]{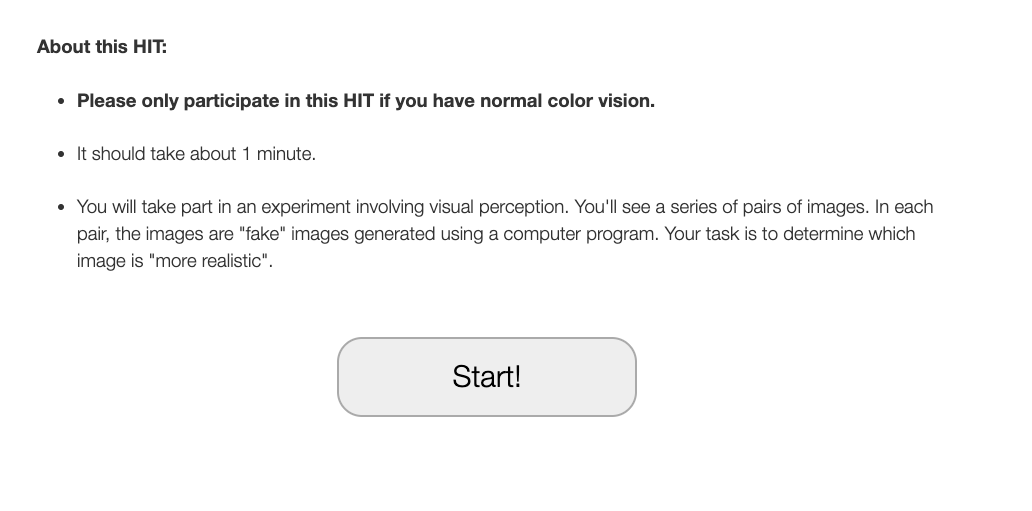}
    \caption{The instruction shown to MTurk workers for pairwise comparison.}
    \label{fig:mturk_realistic1}
\end{figure}

\begin{figure}
    \centering
    \includegraphics[width=0.6\textwidth]{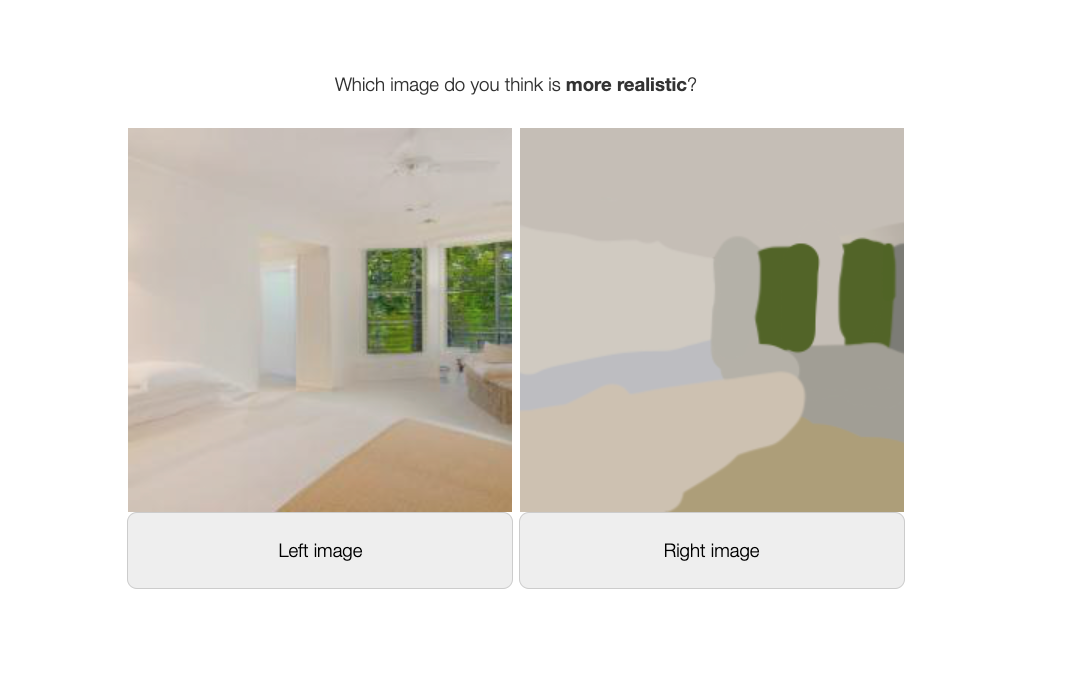}
    \caption{The UI shown to MTurk workers for pairwise comparison.}
    \label{fig:mturk_realistic2}
\end{figure}

\begin{figure}
    \centering
    \includegraphics[width=0.6\textwidth]{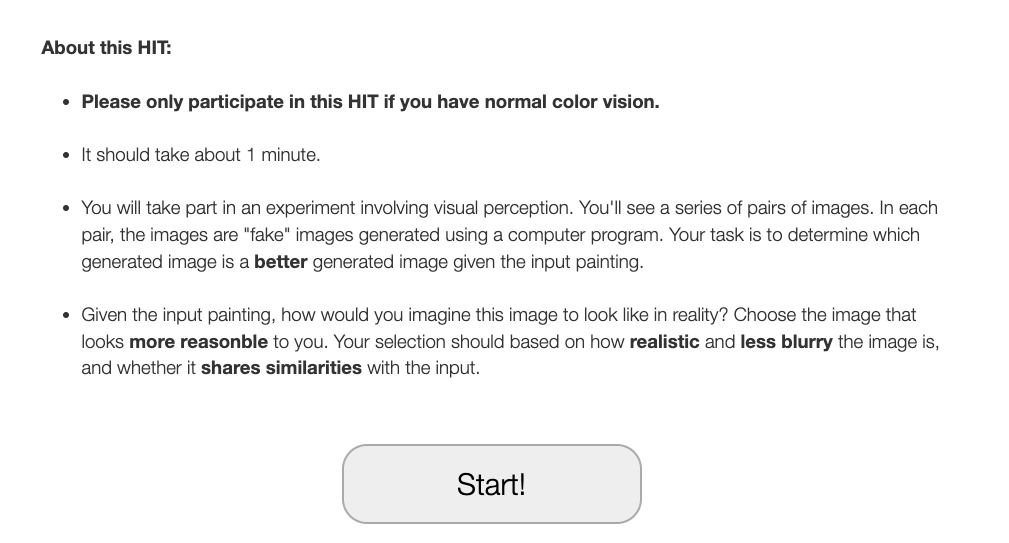}
    \caption{The instruction shown to MTurk workers for pairwise comparison.}
    \label{fig:mturk_reasonable1}
\end{figure}

\begin{figure}
    \centering
    \includegraphics[width=0.6\textwidth]{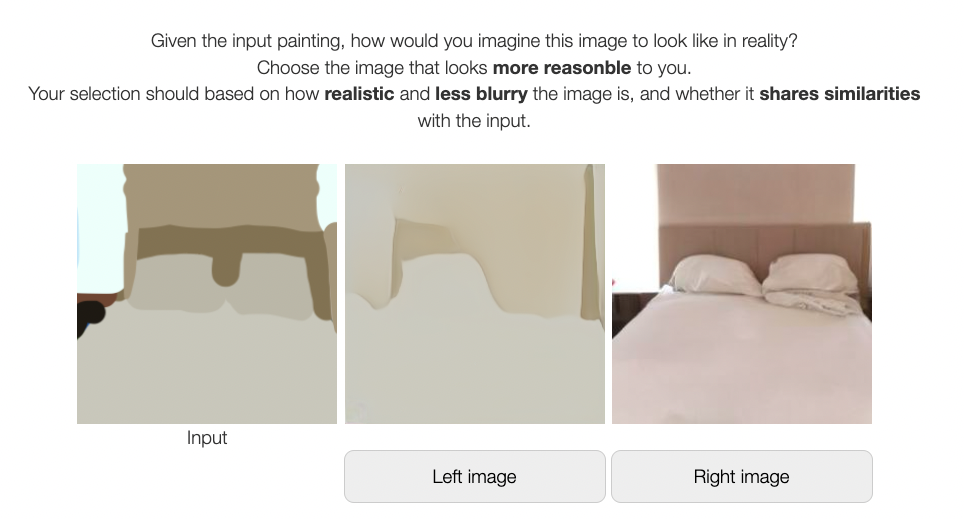}
    \caption{The UI shown to MTurk workers for pairwise comparison.}
    \label{fig:mturk_reasonable2}
\end{figure}

To quantify user satisfactory score (faithfulness+realism),
we ask a different set of workers to perform another 3000 pairwise comparisons against \model. 
For each question, the workers will be shown three images: the input stroke painting (guide), one generated image from \model based on the stroke input, and the other from the baseline model using the same input.
Each evaluation HIT contains 15 pairwise comparisons against \model, and we perform 200 such evaluation tasks. The reward per task is kept as 0.2\$. Since each task takes around 1 min, the wage is around 12\$ per hour.
The instruction is: ``Given the input painting, how would you imagine this image to look like in reality? Choose the image that looks more reasonable to you. Your selection should based on how \textbf{realistic} and \textbf{less blurry} the image is, and whether it \textbf{shares similarities} with the input" (see \cref{fig:mturk_reasonable1} and \cref{fig:mturk_reasonable2}).

\subsection{Image compositing on CelebA-HQ}
To quantitatively evaluate our results, we generate 936 images based on the user inputs. To quantify realism, we ask MTurk workers to perform 1500 pairwise comparisons against \model pre-trained on FFHQ~\citep{karras2019style} to determine which image sample looks more realistic. 
Each evaluation HIT contains 15 pairwise comparisons against \model, and we perform 100 such evaluation tasks. The reward per task is kept as 0.2\$. Since each task takes around 1 min, the wage is around 12\$ per hour.
For each question, the workers will be shown two images: one generated image from \model and the other from the baseline model using the same input. The instruction is: ``Which image do you think was \textbf{more realistic}?".

To quantify user satisfactory score (faithfulness + realism), we ask different workers to perform another 1500 pairwise comparisons against \model pre-trained on FFHQ to decide which generated image matches the content of the inputs more faithfully.
Each evaluation HIT contains 15 pairwise comparisons against \model, and we perform 100 such evaluation tasks. The reward per task is kept as 0.2\$. Since each task takes around 1 min, the wage is around 12\$ per hour.
For each question, the workers will be shown two images: one generated image from \model and the other from the baseline model using the same input. The instruction is: ``Which is a better polished image for the input? An ideal polished image should look \textbf{realistic}, and matches the input in visual appearance (e.g., they look like the same person, with matched hairstyles and similar glasses)".

\FloatBarrier
\newpage
\clearpage

\begin{figure*}
\centering
\includegraphics[width=0.68\linewidth]{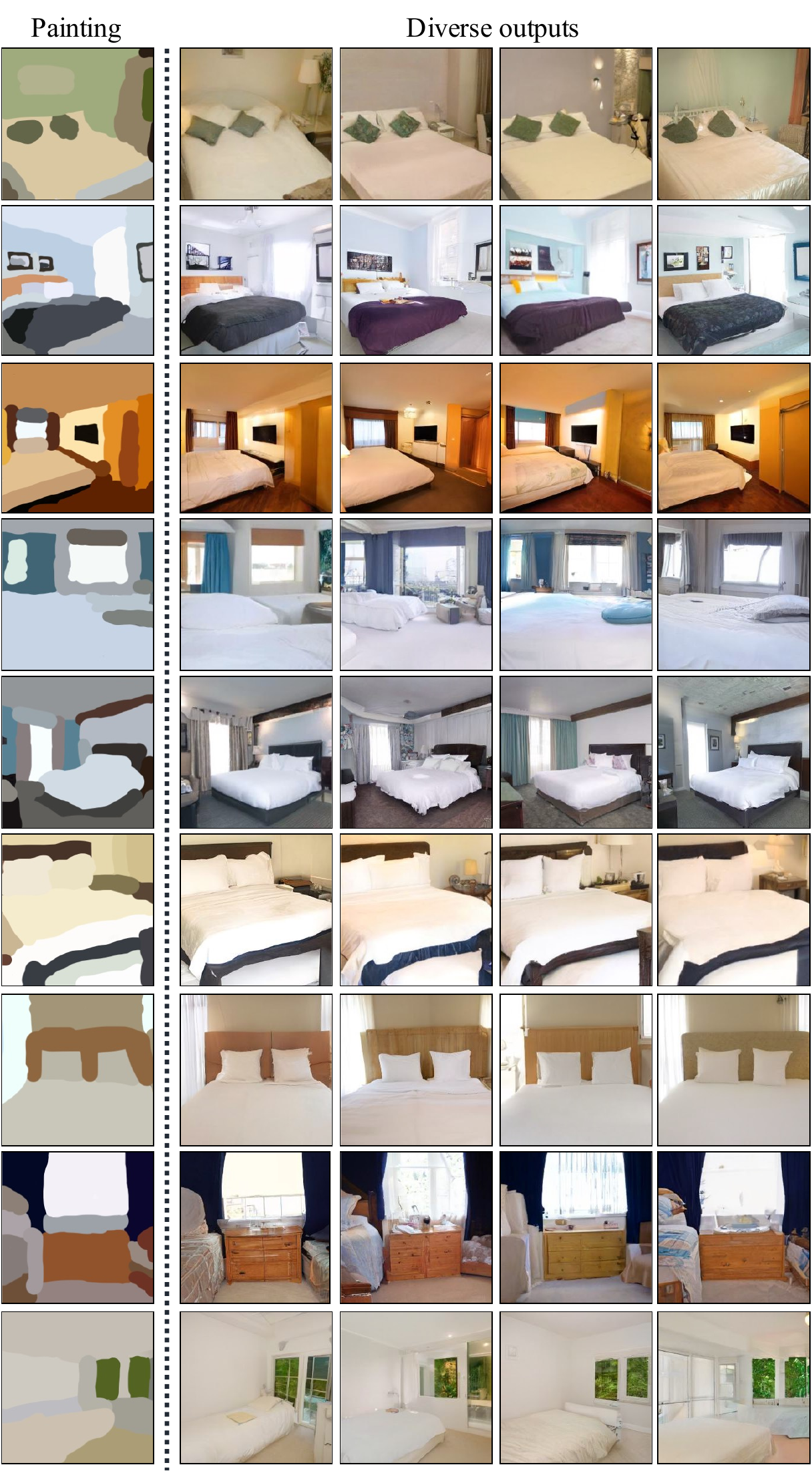}
\caption{Stroke-based image generation on bedroom images with SDEdit (VP) pretrained on LSUN bedroom.}
\label{fig:lsun_stroke_generation_app}
\end{figure*}

\begin{figure*}
\centering
\includegraphics[width=0.9\linewidth]{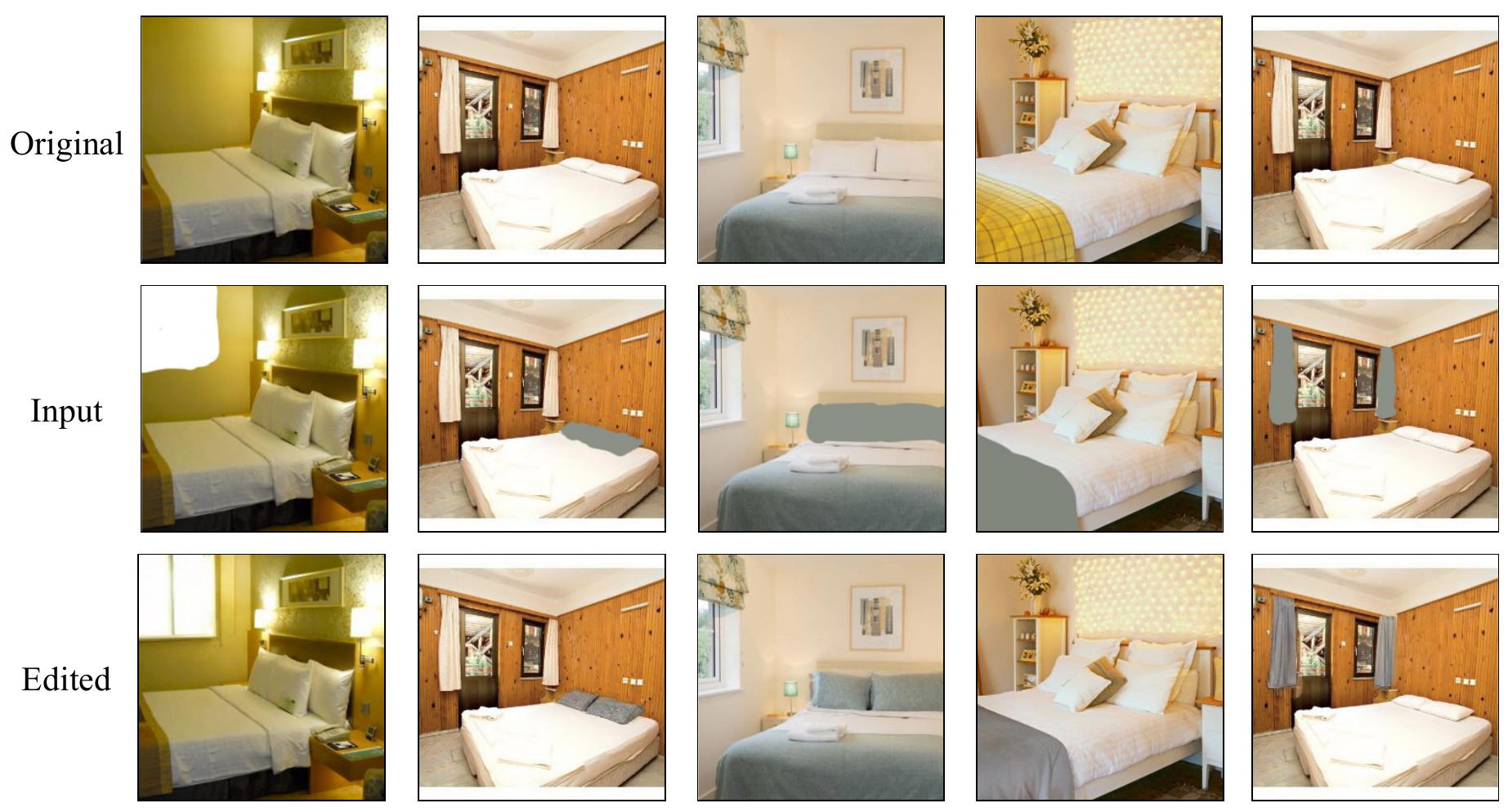}
\caption{Stroke-based image editing on bedroom images with SDEdit (VP) pretrained on LSUN bedroom. SDEdit generates image edits that are both realistic and faithful (to the user edit),  while  avoids  making  undesired  modifications on pixels not specified by users}
\label{fig:lsun_stroke_edit_app}
\end{figure*}

\begin{figure*}
\centering

\includegraphics[width=\linewidth]{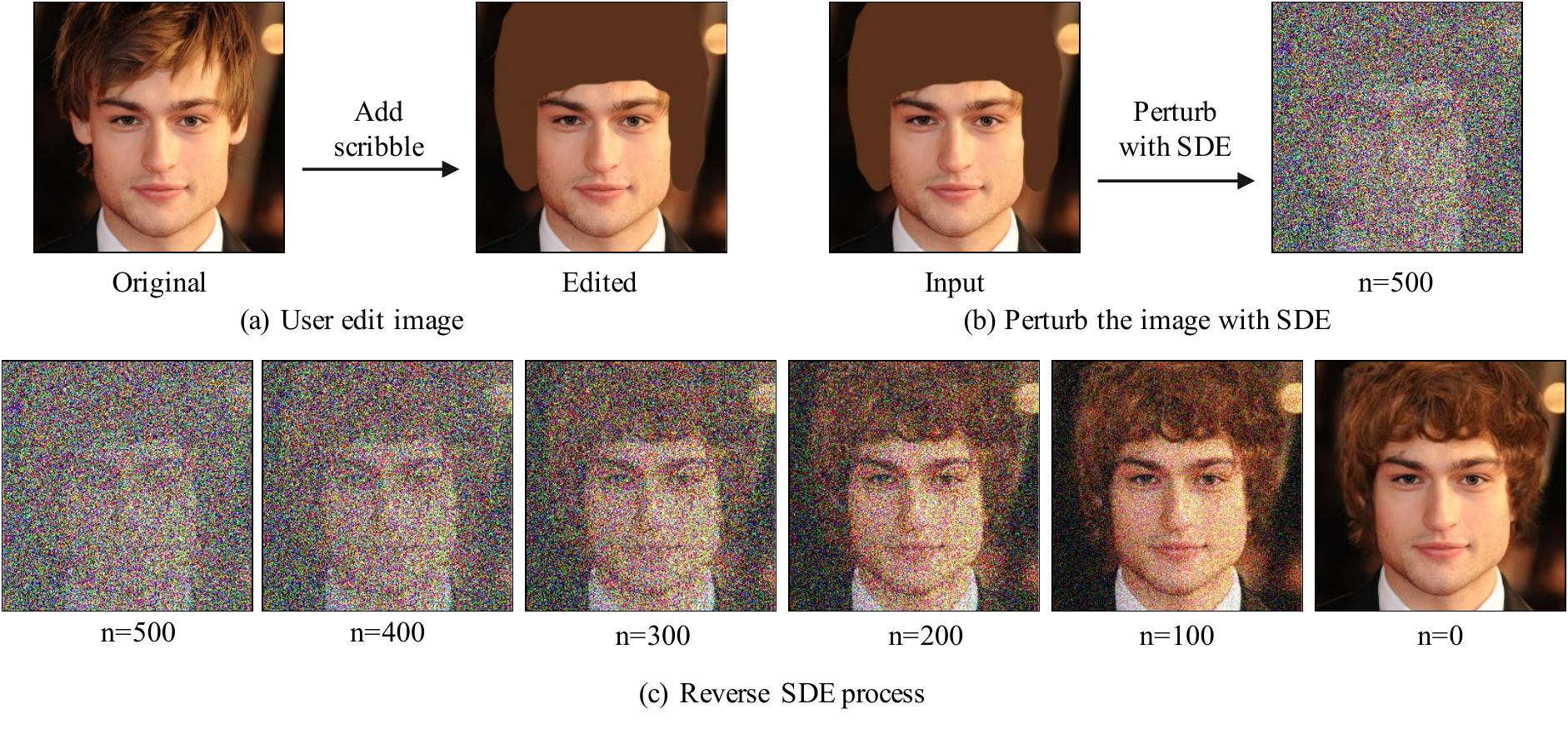}
\caption{Stroke-based image editing. (a) Given an image, users will first modify the image using stroke, and provide a mask which describes the pixels covered by stroke.
(b) The edited image will then be fed into SDEdit. SDEdit will first perturb the image with an SDE,
and then simulate the reverse SDE (see \cref{alg:editing_vp}). (c) We provide visualization of the intermediate steps of reversing SDE used in SDEdit.
}
\label{fig:stroke_visualization_app}
\end{figure*}

\begin{figure*}
\centering
\includegraphics[width=0.86\linewidth]{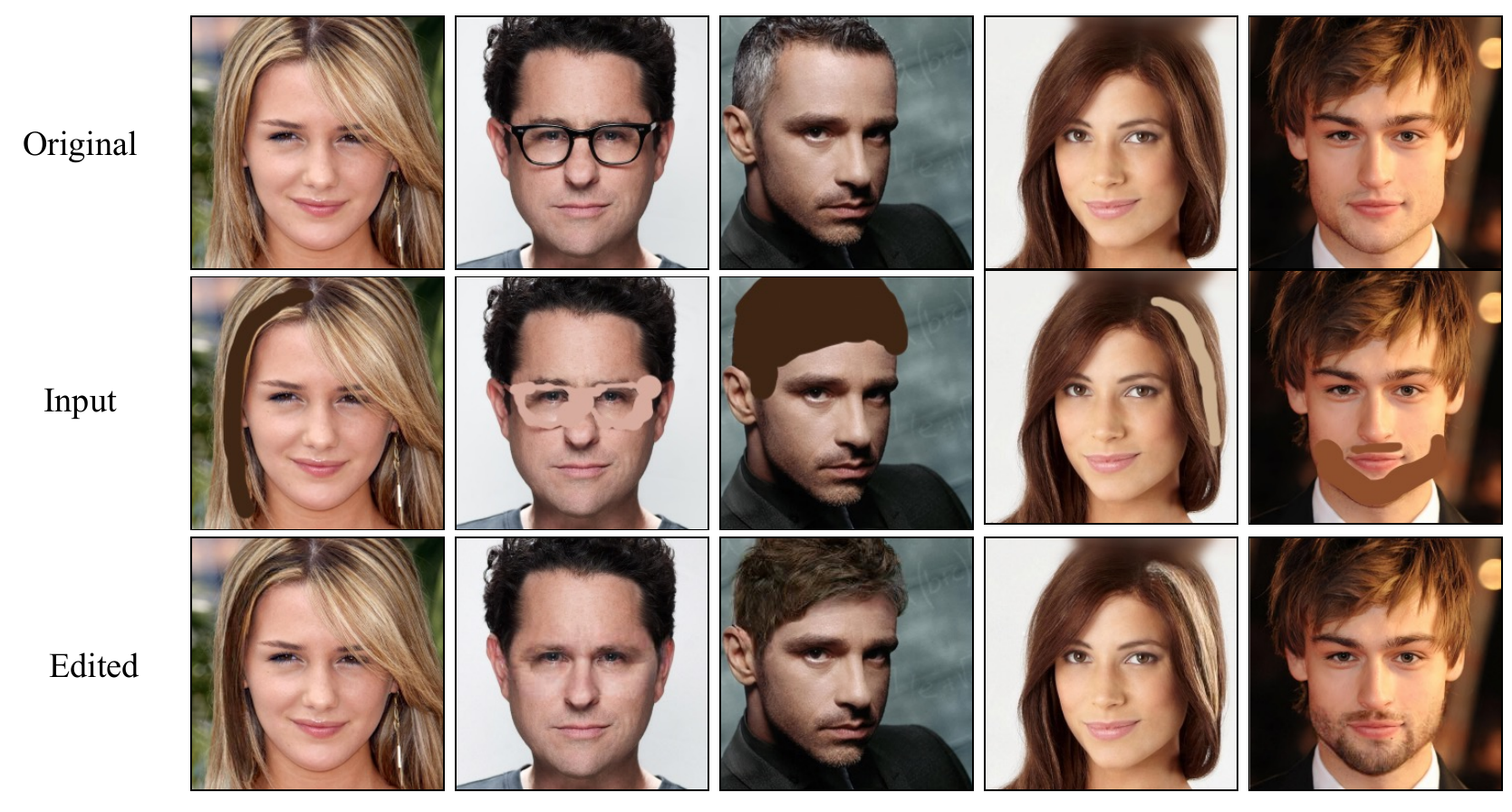}
\caption{Stroke-based image editing on CelebA-HQ images with SDEdit. SDEdit generates image edits that are both realistic and faithful (to the user edit),  while avoids making  undesired  modifications on pixels not specified by users.
}
\label{fig:celeba_stroke_edit_app}
\end{figure*}

\begin{figure*}
\centering
\includegraphics[width=0.8\linewidth]{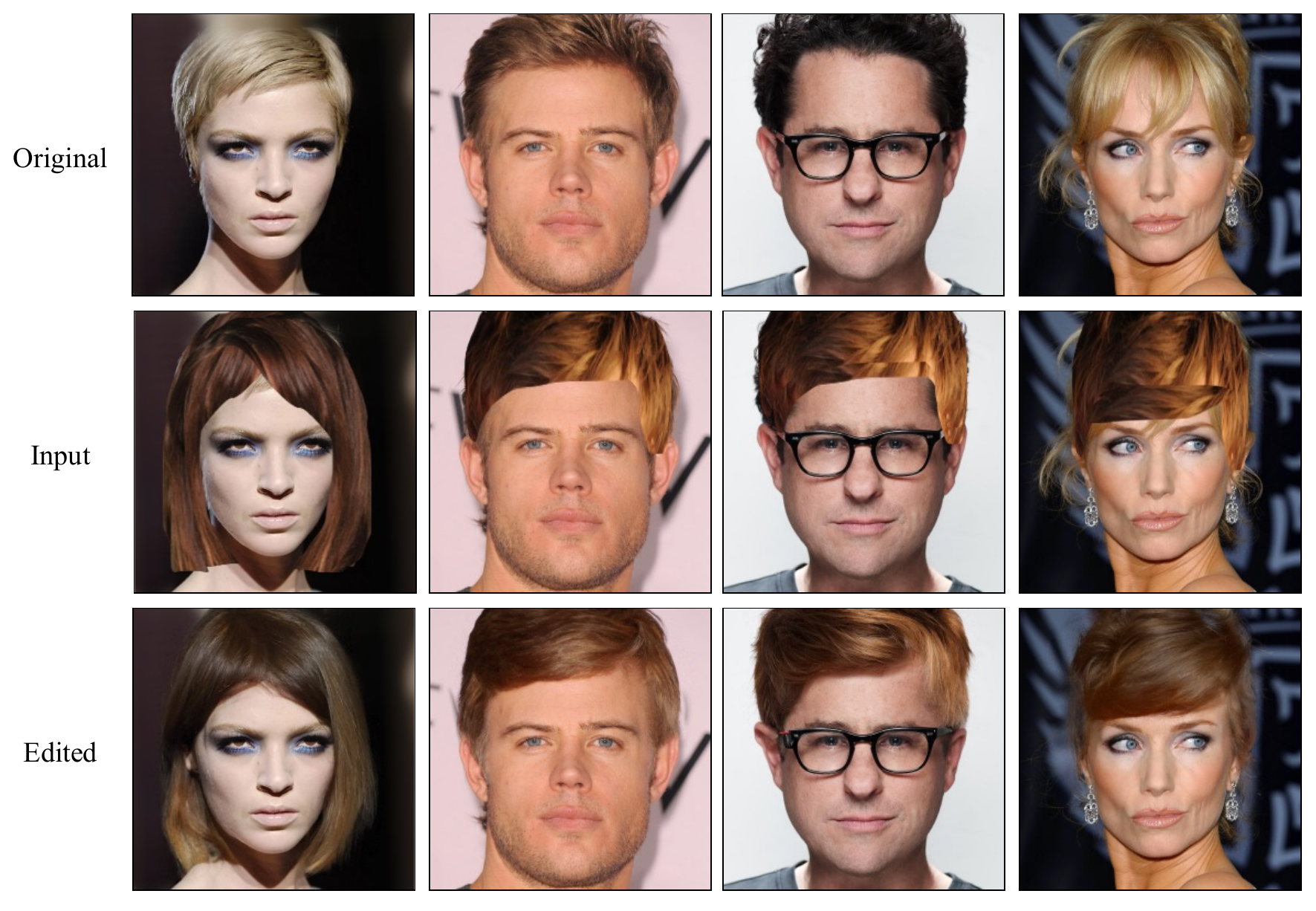}
\caption{Image compositing on CelebA-HQ images with SDEdit. We edit the images to have brown hair. The model is pretrained on FFHQ.}
\label{fig:brown_hair_app}
\end{figure*}

\begin{figure*}
\centering
\includegraphics[width=0.8\linewidth]{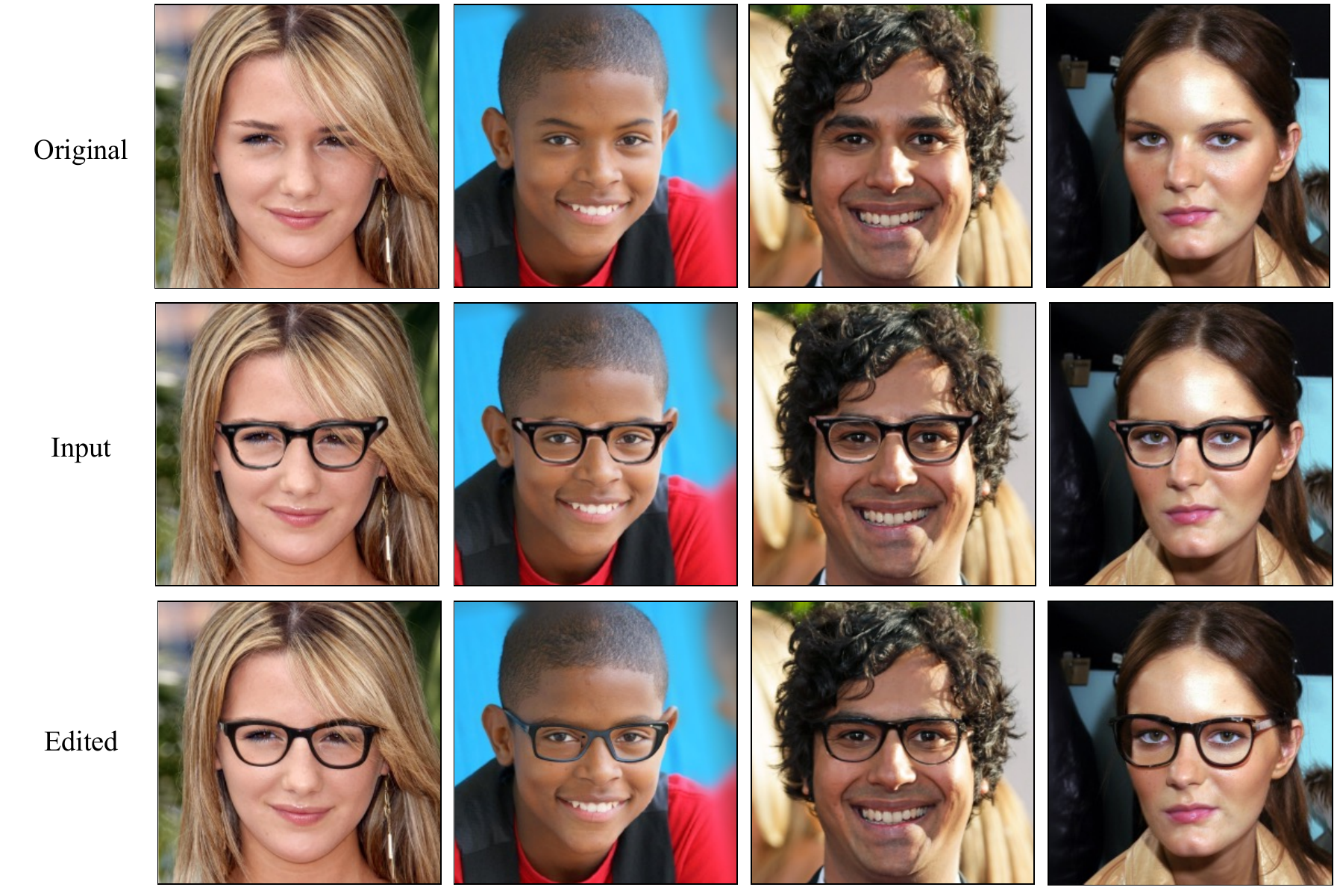}
\caption{Image compositing on CelebA-HQ images with SDEdit. We edit the images to wear glasses. The model is pretrained on FFHQ.}
\label{fig:glasses_app}
\end{figure*}

\begin{figure*}
\centering
\includegraphics[width=0.8\linewidth]{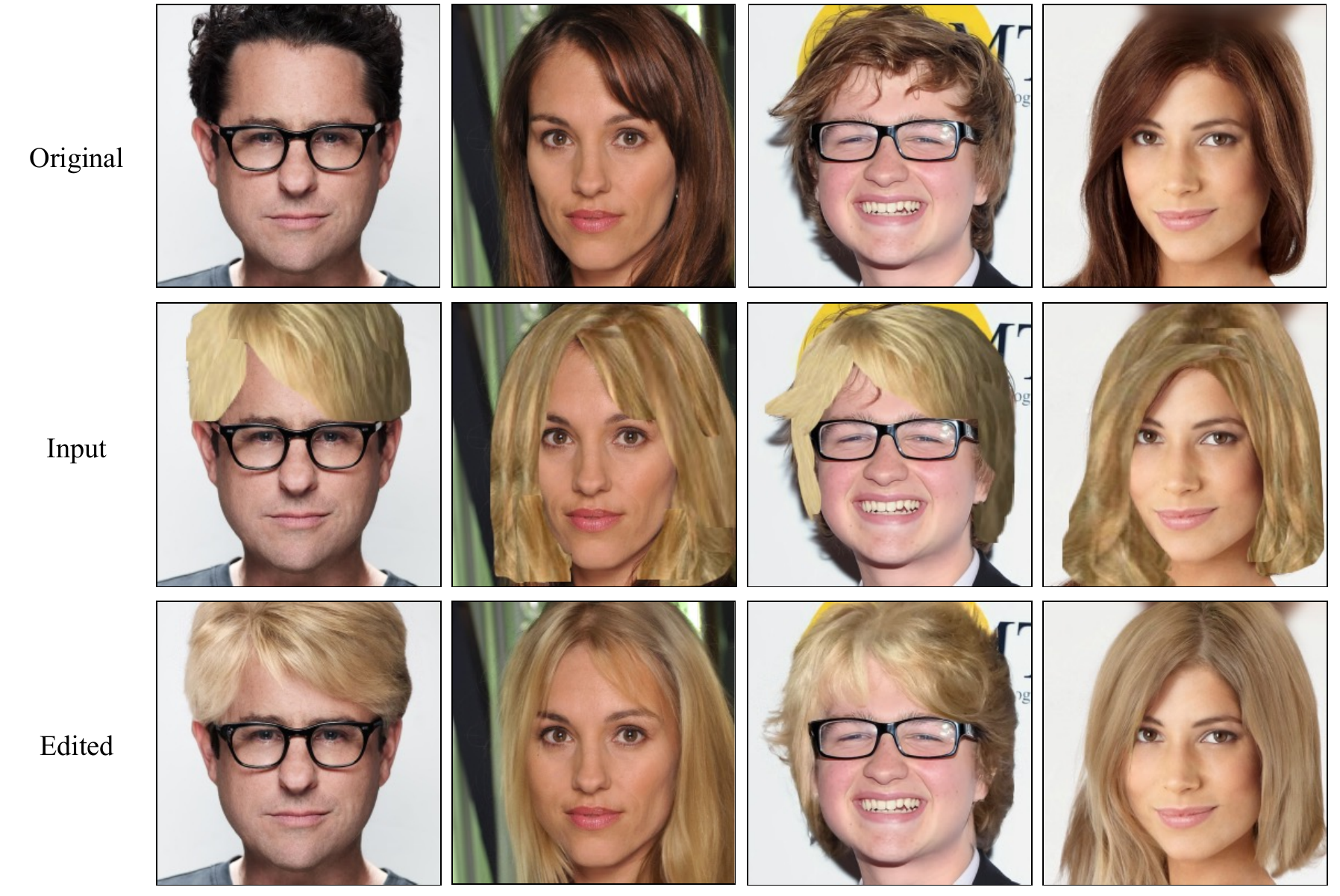}
\caption{Image compositing on CelebA-HQ images with SDEdit. We edit the images to have blond hair. The model is pretrained on FFHQ.}
\label{fig:blond_hair_app}
\end{figure*}

\begin{figure*}
\centering
\begin{subfigure}[b]{0.3\linewidth}
    \centering
    \includegraphics[width=\linewidth]{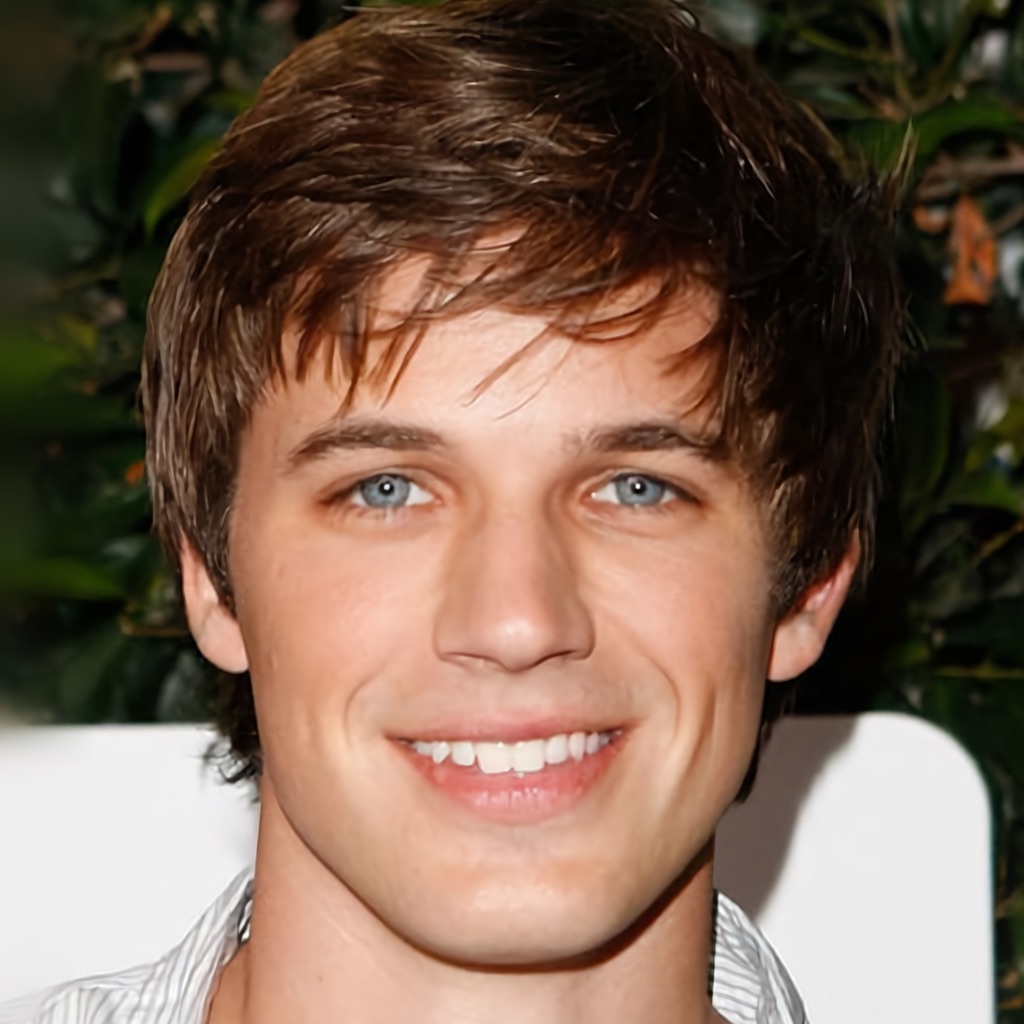}
    \caption{Original dataset image.}
    \label{fig:1024_input}
\end{subfigure}
\begin{subfigure}[b]{0.3\linewidth}
    \centering
    \includegraphics[width=\linewidth]{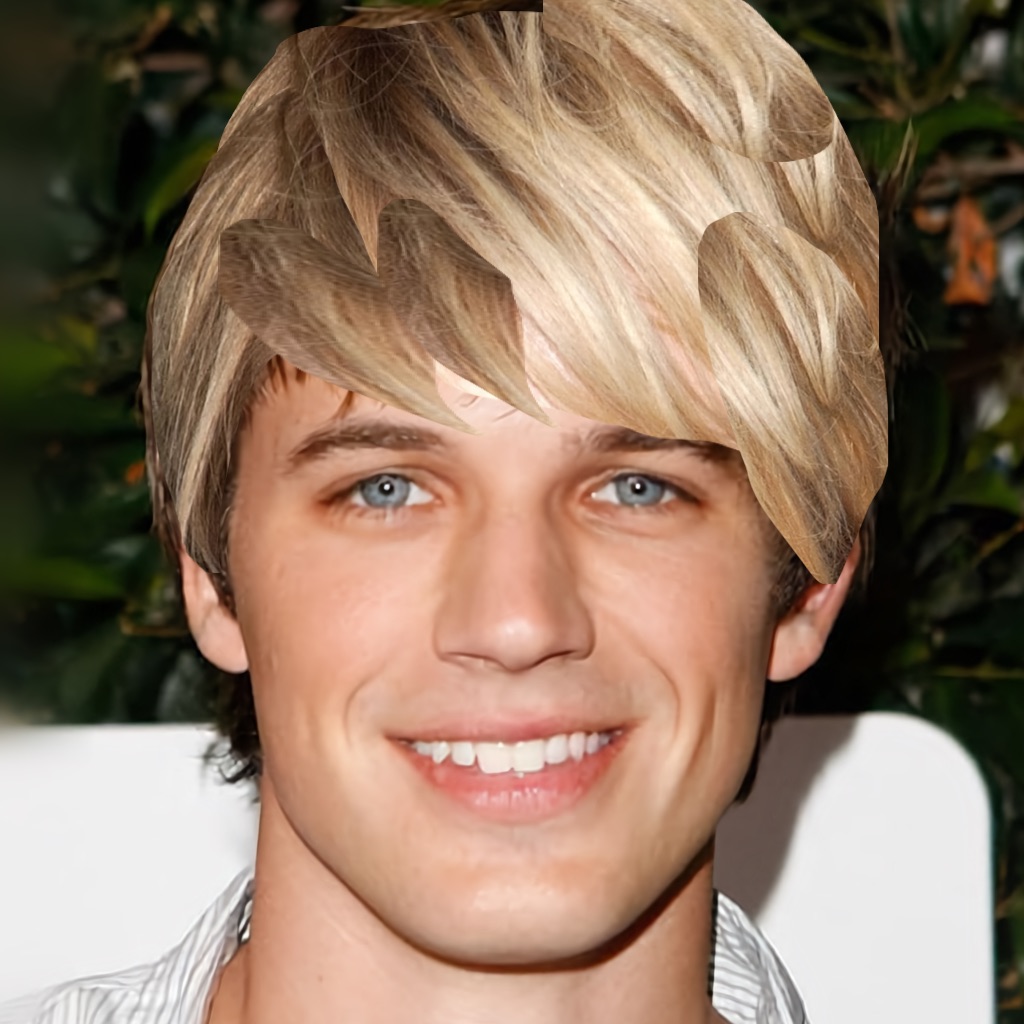}
    \caption{User edited input.}
    \label{fig:1024_colorization}
\end{subfigure}
\begin{subfigure}[b]{0.3\linewidth}
    \centering
    \includegraphics[width=\linewidth]{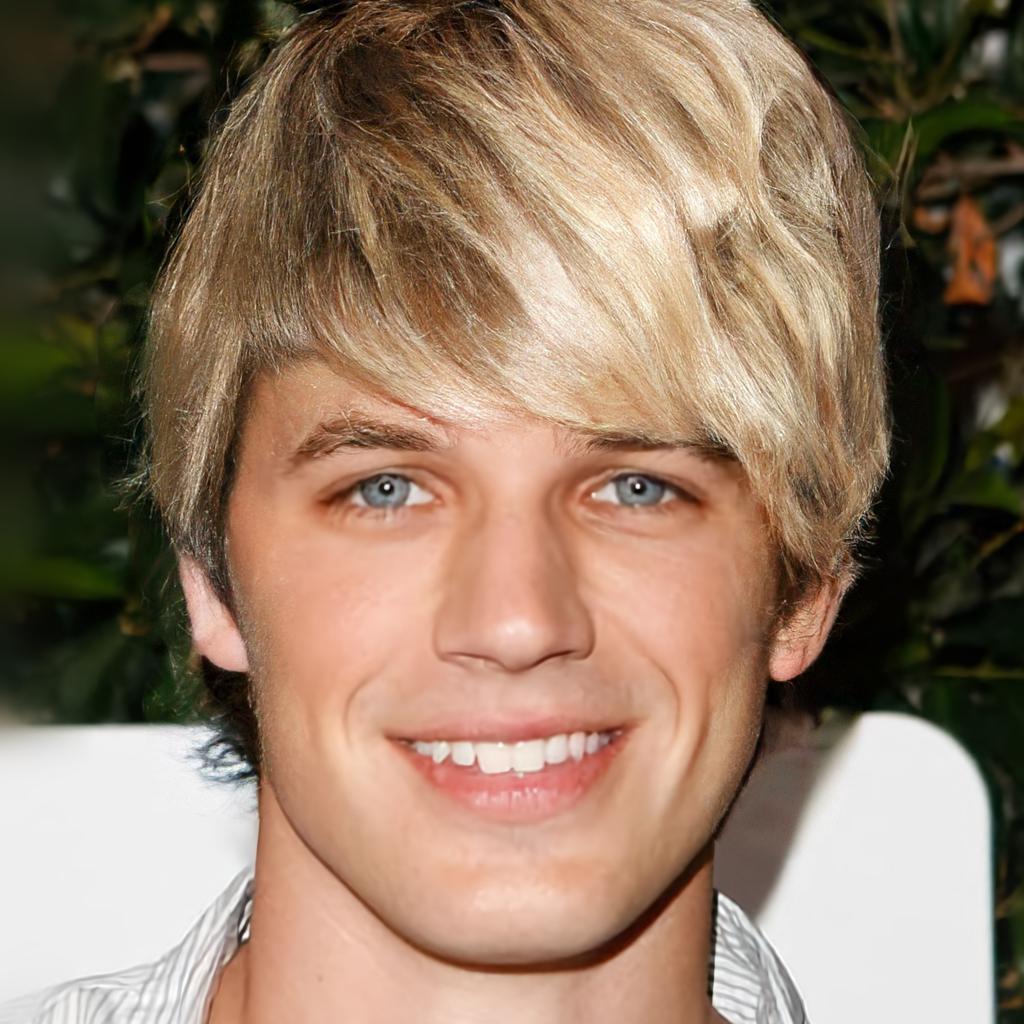}
    \caption{SDEdit results.}
    \label{fig:1024_edited}
\end{subfigure}
\caption{Image compositing results with SDEdit (VE) on CelebA-HQ (resolution 1024$\times$1024). The SDE model is pretrained on FFHQ.}
\label{fig:celeba_1024_app}
\end{figure*}

\begin{figure*}
\centering
\begin{subfigure}[b]{0.3\linewidth}
    \centering
    \includegraphics[width=\linewidth]{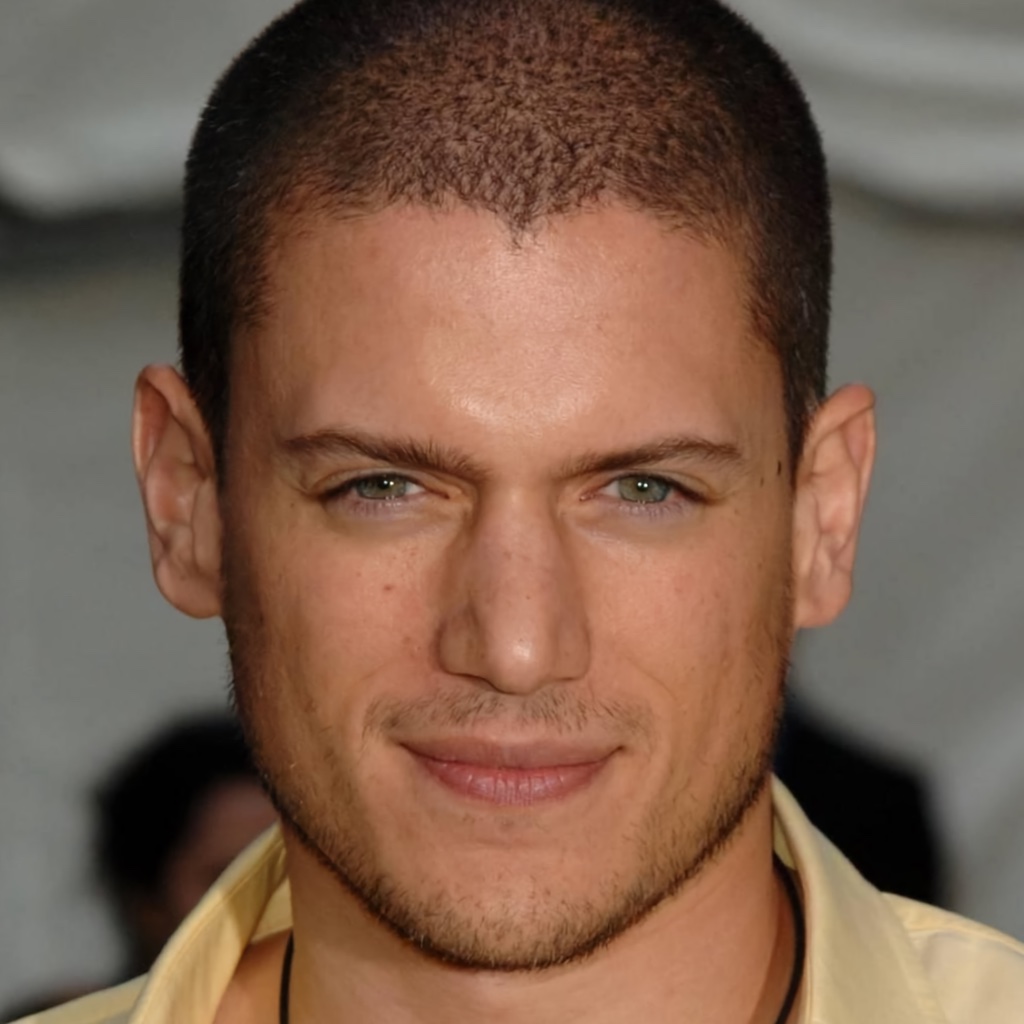}
    \caption{Original dataset image.}
    \label{fig:1024_input_2}
\end{subfigure}
\begin{subfigure}[b]{0.3\linewidth}
    \centering
    \includegraphics[width=\linewidth]{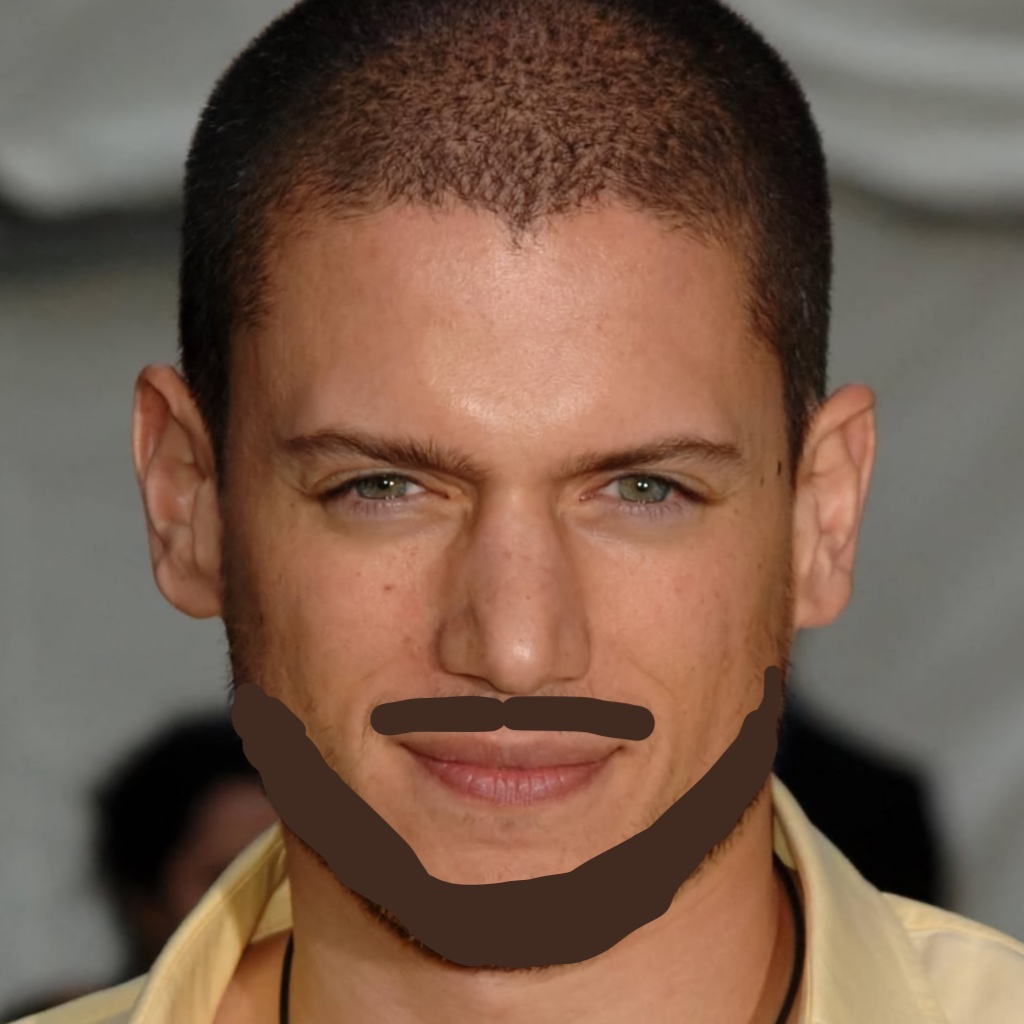}
    \caption{User edited input.}
    \label{fig:1024_colorization_2}
\end{subfigure}
\begin{subfigure}[b]{0.3\linewidth}
    \centering
    \includegraphics[width=\linewidth]{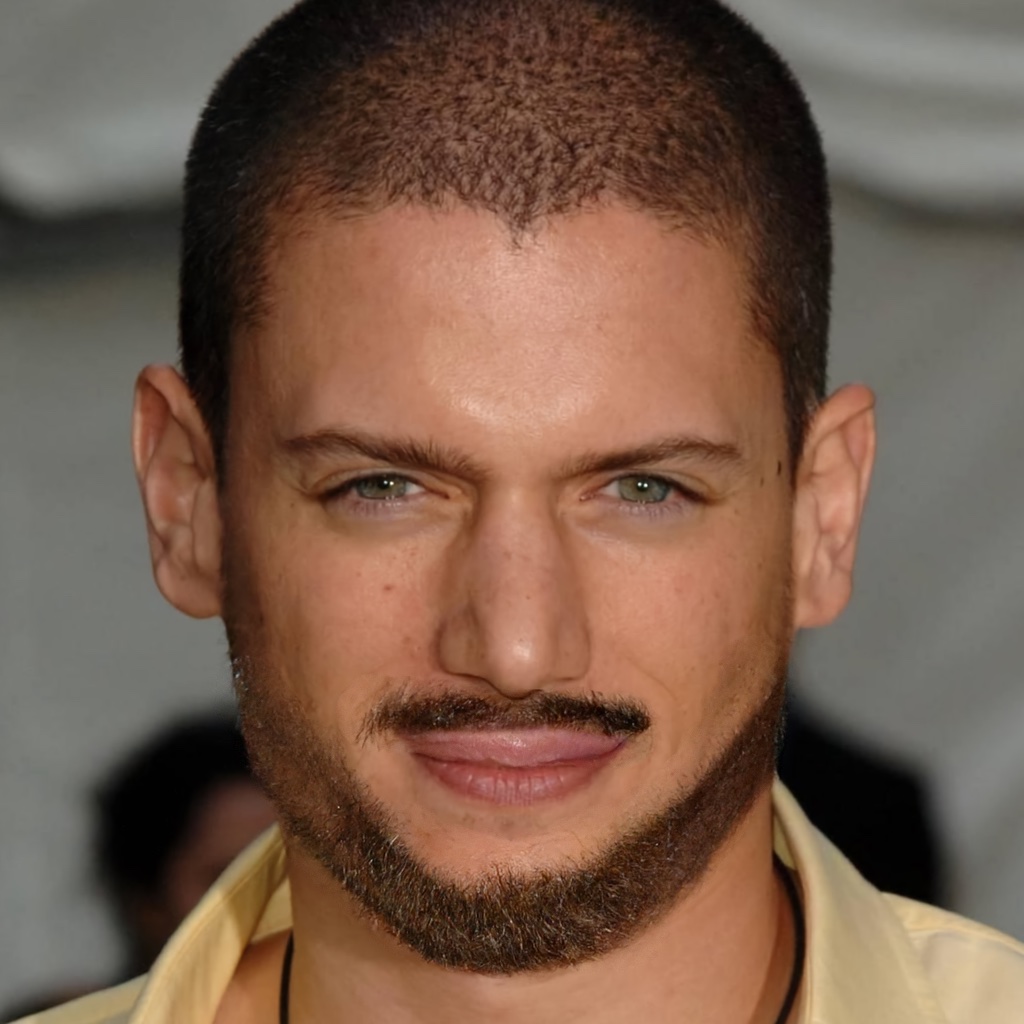}
    \caption{SDEdit results.}
    \label{fig:1024_edited_2}
\end{subfigure}
\caption{Stroke-based image editing results with SDEdit (VE) on CelebA-HQ (resolution 1024$\times$1024). The SDE model is pretrained on FFHQ.}
\label{fig:celeba_1024_scribble_app}
\end{figure*}

\begin{figure*}
\centering
\includegraphics[width=0.9\textwidth]{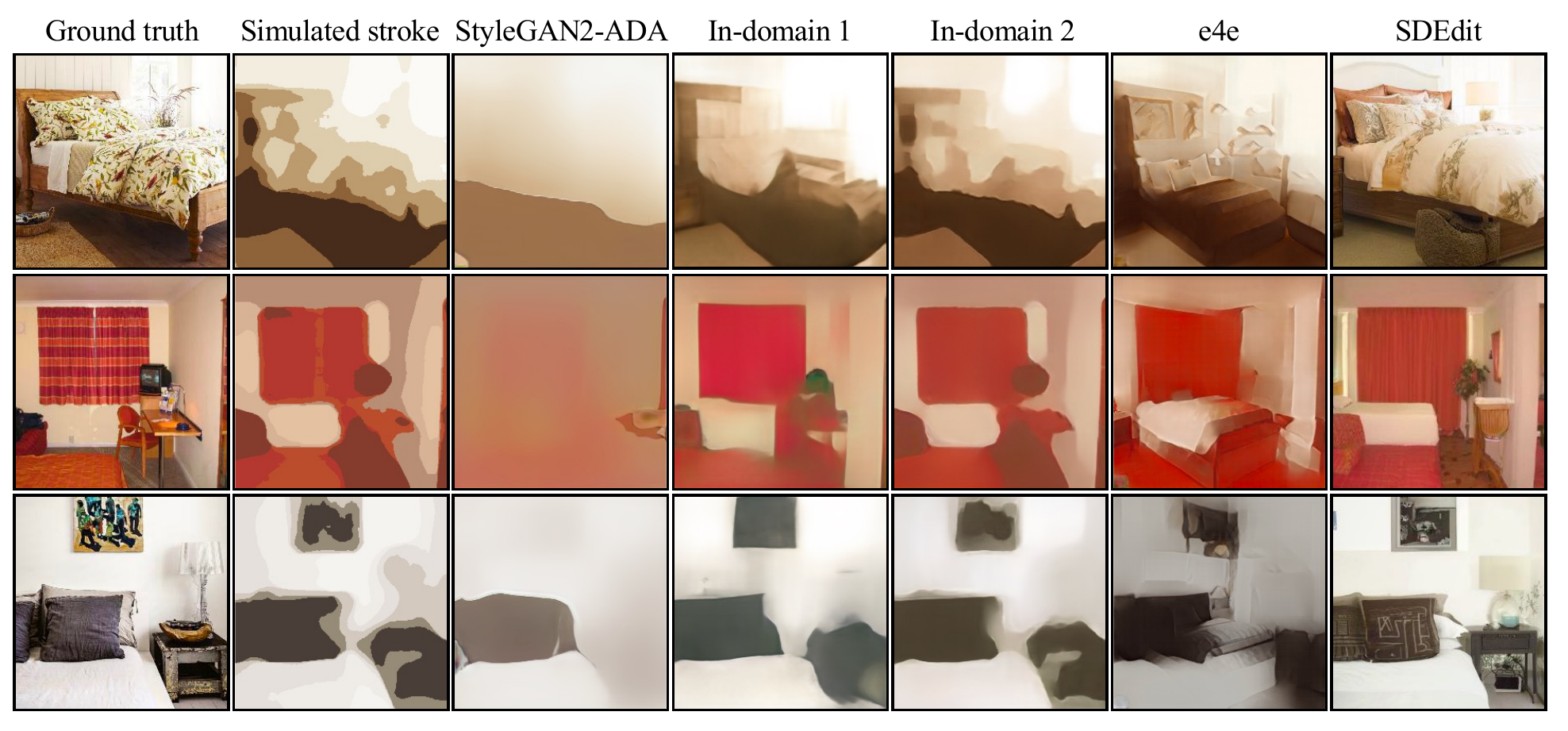}
\caption{Stroke-based image generation with simulated stroke paintings inputs on bedroom images with \model (VP) pretrained on LSUN bedroom dataset.
}
\label{fig:auto_stroke_bedroom}
\end{figure*}

\begin{figure*}
\centering
\includegraphics[width=0.75\textwidth]{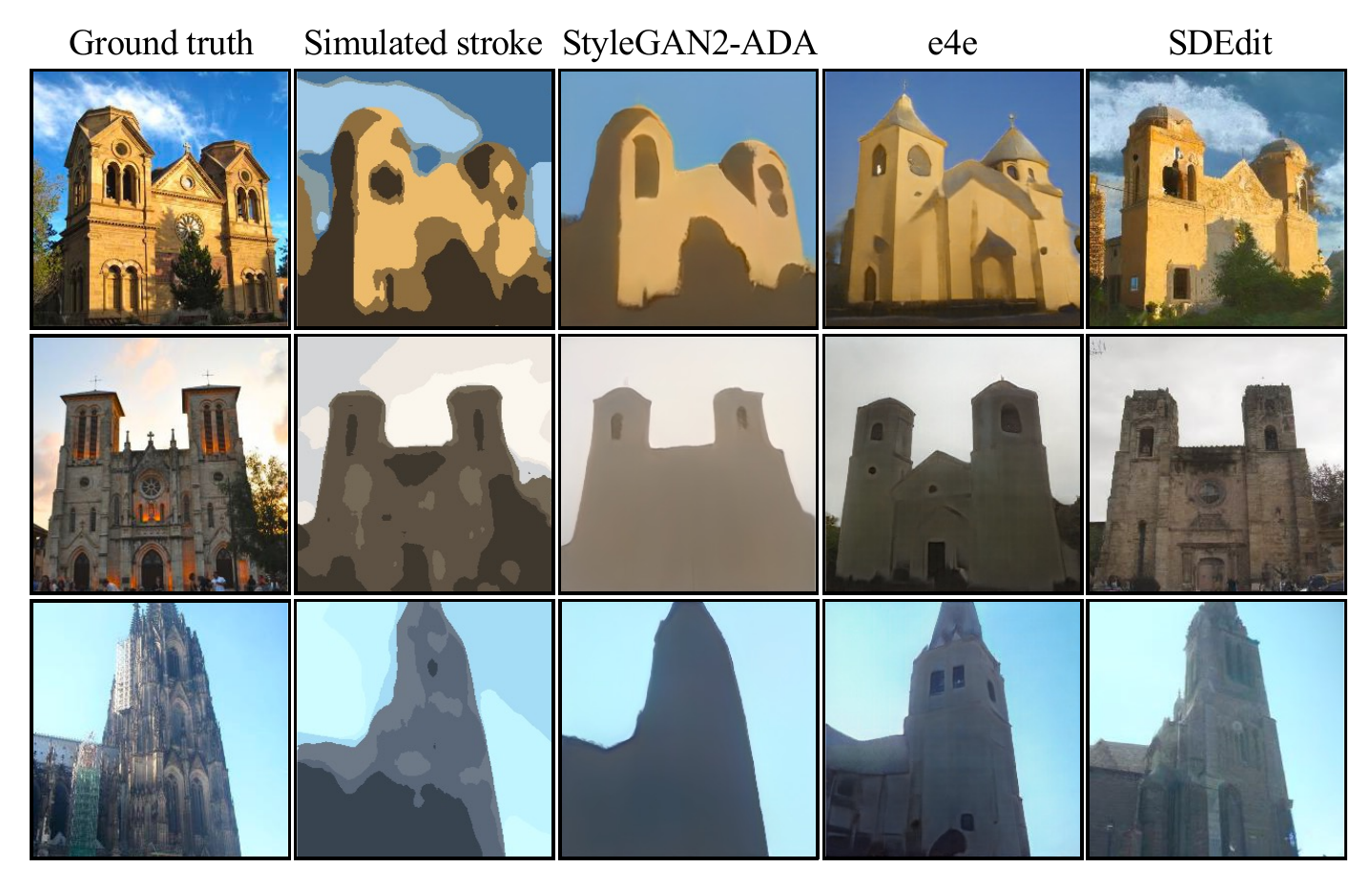}
\caption{Stroke-based image generation with simulated stroke paintings inputs on church images with \model (VP) pretrained on LSUN church outdoor dataset.
}
\label{fig:auto_stroke_church}
\end{figure*}

\begin{figure*}
\centering
\includegraphics[width=0.9\textwidth]{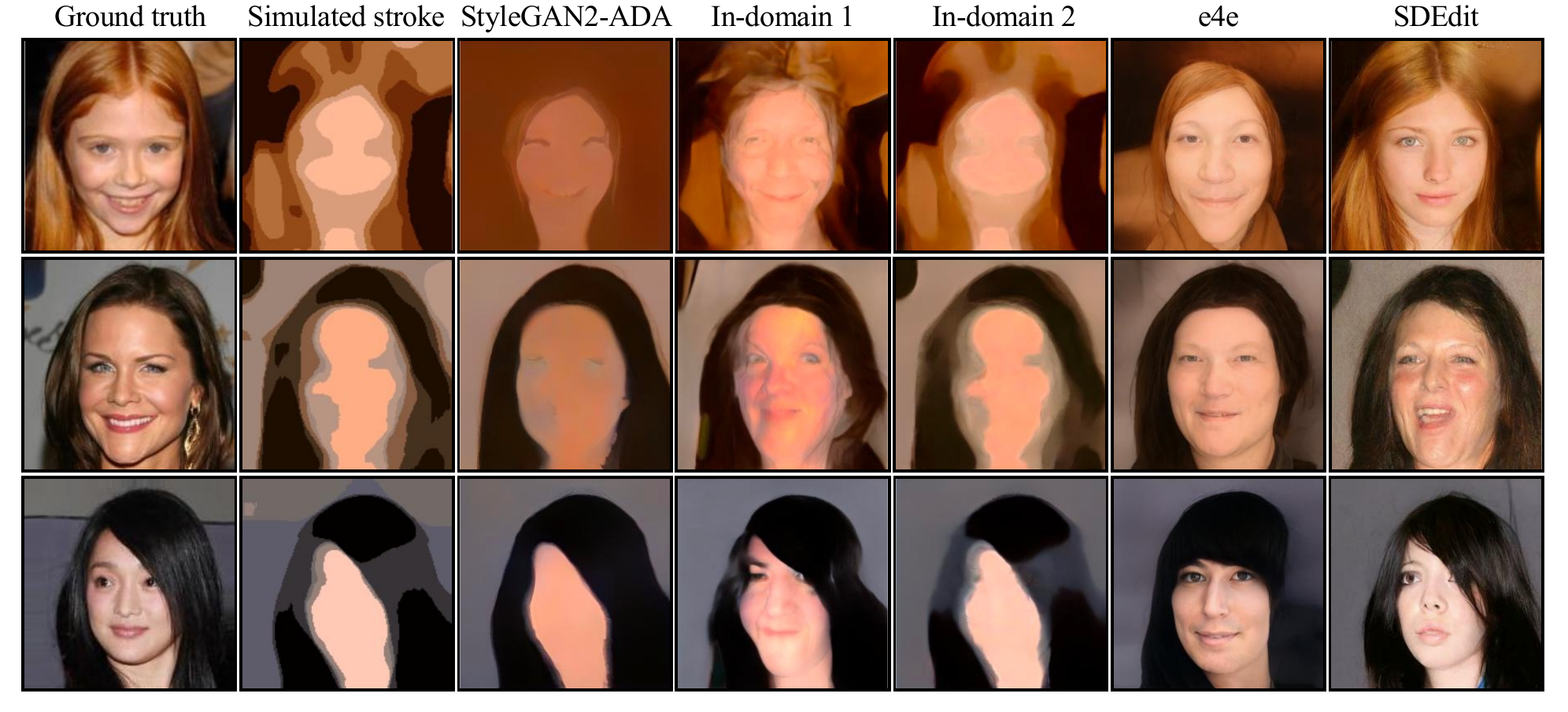}
\caption{Stroke-based image generation with simulated stroke paintings inputs on human face images with \model (VP) pretrained on CelebA dataset.
}
\label{fig:auto_stroke_celeba}
\end{figure*}

\begin{figure*}
\centering
\includegraphics[width=0.9\textwidth]{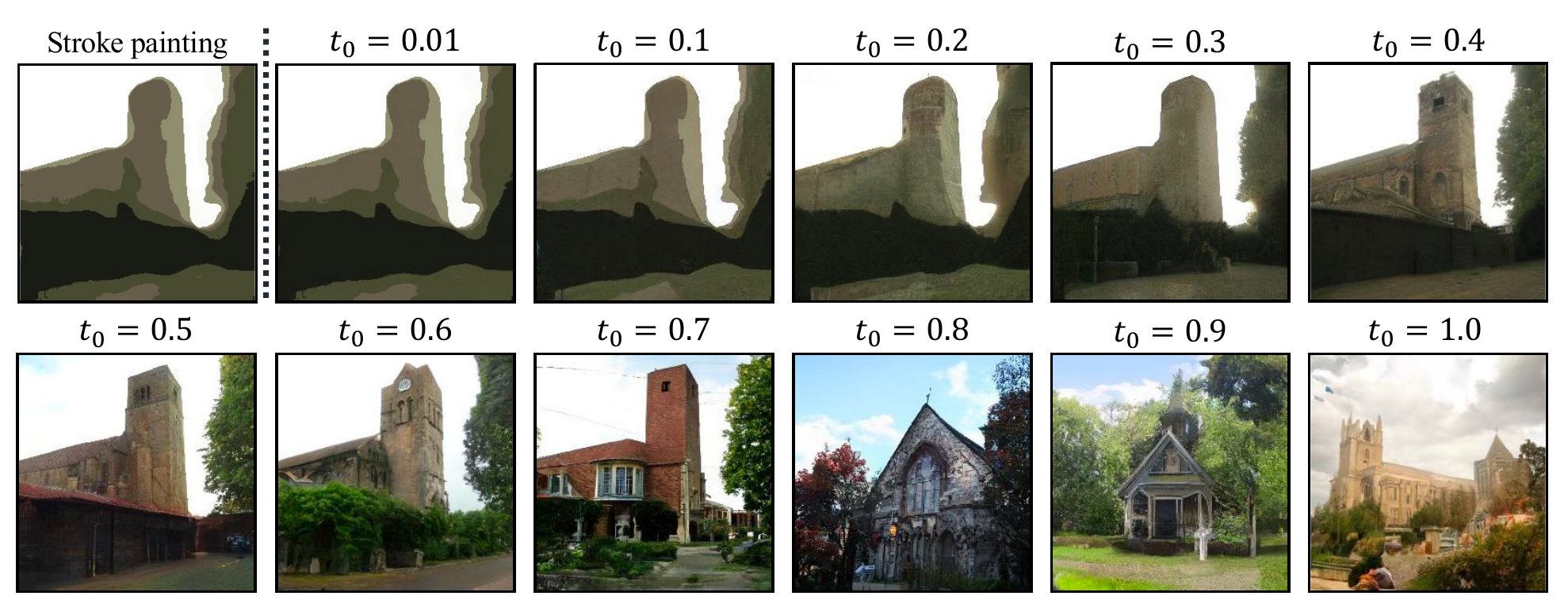}
\caption{Trade-off between faithfulness and realism shown with stroke-based image generation with simulated stroke painting inputs on church images with \model (VP) pretrained on LSUN church outdoor dataset.
}
\label{fig:auto_stroke_t}
\end{figure*}

\begin{figure*}
\centering
\includegraphics[width=\textwidth]{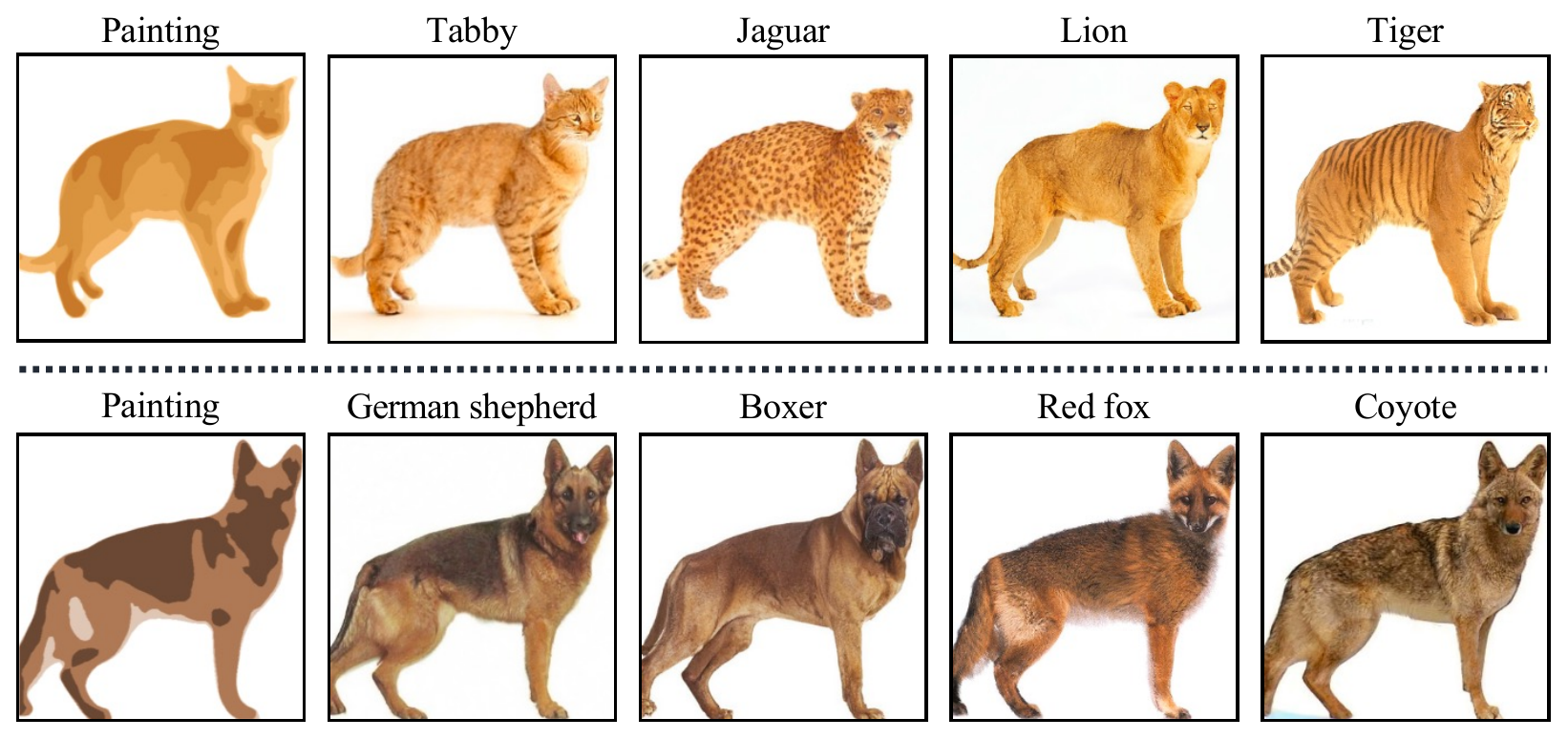}
\caption{Class-conditional image generation from stroke paintings with different class labels by \model (VP) pretrained on ImageNet.
}
\label{fig:class_conditional}
\end{figure*}

\begin{figure*}
\centering
\includegraphics[width=0.75\textwidth]{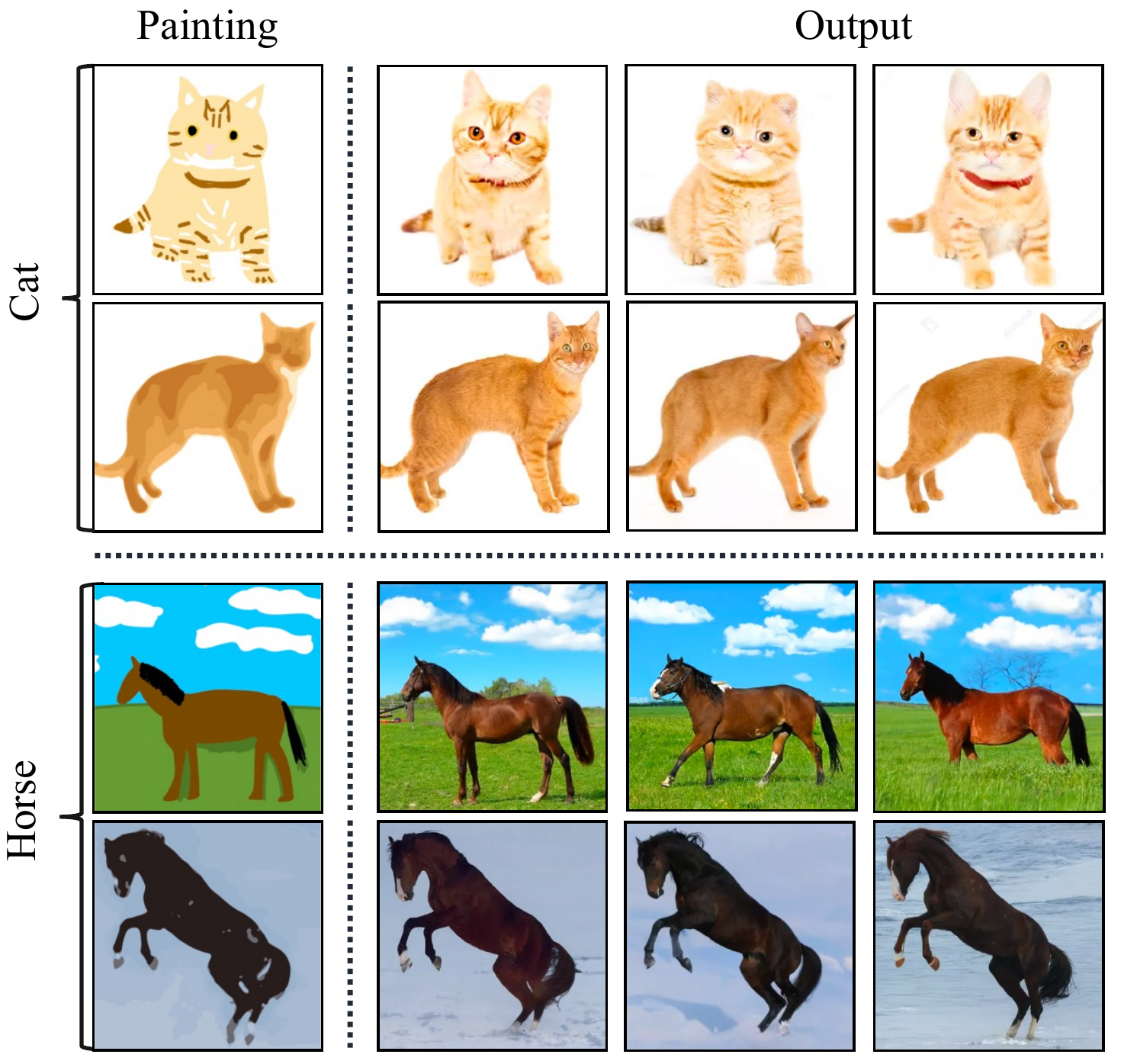}
\caption{Stroke-based image generation with stroke inputs on cat and horse images with \model (VP) pretrained on LSUN cat and horse dataset. Notice that for coarser guide (\eg the third row), we choose to slightly sacrifice faithfulness in order to obtain more realistic images by selecting a larger $t_0=0.6$, while all the other images are generated with $t_0=0.5$.
}
\label{fig:cat_horse}
\end{figure*}

\end{document}